\documentclass{article}
\PassOptionsToPackage{numbers,compress}{natbib}

\usepackage[main, final]{neurips_2026}

\usepackage[utf8]{inputenc} 
\usepackage[T1]{fontenc}    
\usepackage{hyperref}       
\usepackage{url}            
\usepackage{booktabs}       
\usepackage{amsfonts}       
\usepackage{nicefrac}       
\usepackage{microtype}      
\usepackage{xcolor}         
\usepackage{graphicx}
\usepackage{multirow}
\usepackage{threeparttable}
\usepackage{amsmath}
\usepackage{amsthm}
\usepackage{amssymb}

\newtheorem{proposition}{Proposition}

\theoremstyle{definition}

\theoremstyle{remark}

\usepackage{algorithm}
\usepackage{algpseudocode}

\usepackage{bbm}

\usepackage{tabularx}
\usepackage{array}

\usepackage{float}
\newcolumntype{Y}{>{\raggedright\arraybackslash}X}
\title{PRISM: Prior Relational Information for Self-supervised Modeling to Enhance Solubility OOD Generalization}

\author{%
  Xinyi Chen\textsuperscript{1} \\
  {\small\texttt{chenxinyi@sjtu.edu.cn}}
  \And
  Jiahuan Pang\textsuperscript{1} \\
  {\small\texttt{pangjiahuan@sjtu.edu.cn}}
  \And
  Titus Chima\textsuperscript{1} \\
  {\small\texttt{titusschima@sjtu.edu.cn}}
  \AND
  Paul Weng\textsuperscript{3,4,*} \\
  {\small\texttt{paul.weng@dukekunshan.edu.cn}}
  \And
  Wendong Wang\textsuperscript{1,2,*} \\
  {\small\texttt{wendong.wang@sjtu.edu.cn}}
}

\begin{document}

\maketitle

\begingroup
\renewcommand{\thefootnote}{}
\footnotetext{%
  \textsuperscript{1}Global College, Shanghai Jiao Tong University, Shanghai, China.
  \textsuperscript{2}National Key Laboratory of Advanced Micro and Nano Manufacture Technology, Shanghai Jiao Tong University, Shanghai, China.
  \textsuperscript{3}Digital Innovation Research Center, Duke Kunshan University, Kunshan, Jiangsu, China.
  \textsuperscript{4}Zu Chongzhi Center, Duke Kunshan University, Kunshan, Jiangsu, China.
  
  \textsuperscript{*}Corresponding authors: \texttt{paul.weng@dukekunshan.edu.cn} and \texttt{wendong.wang@sjtu.edu.cn}.
}
\endgroup
\begin{abstract}
Solubility depends on intermolecular interactions between solutes and solvents, and is central to drug discovery, chemical and materials process engineering.
Its prediction requires out-of-distribution (OOD) generalization to unseen solutes, solvents, and solvent mixtures, where supervised methods suffer from data scarcity, and existing {self-supervised methods} do not explicitly learn chemically meaningful relations.
Learning chemically meaningful relational information without labels can alleviate data scarcity and improve OOD generalization, but remains challenging.
We introduce Prior Relational Information for Self-supervised Modeling (PRISM) and apply it to solubility OOD generalization.
PRISM is a label-free pretraining framework that converts expert-defined chemical priors into relational teachers.
For solubility prediction, it trains four encoders to preserve the population-level relational information from priors capturing hydrophobic, electrostatic, polarizability, and substructural effects, using Kullback-Leibler (KL) divergence to match {distance-based neighborhood distributions} during pretraining.
{Across scaffold, solute, and solvent OOD settings}, PRISM matches or surpasses supervised and self-supervised baselines.
{Under mixed-solvent OOD}, freezing the four PRISM encoders and training a {lightweight} fusion-and-composition head outperforms {strong baselines} and {substantially reduces run-to-run variation}.
Ablations and control studies confirm that gains stem from chemically meaningful relational structures baked into the encoders by PRISM.
We expect this strategy of turning relational priors into self-supervision signals to extend beyond solubility prediction to domains where chemically or biologically meaningful relational priors are available.
\end{abstract}

\section{Introduction}
\label{sec:introduction}

Solubility measures how much of a solute dissolves in a specified pure or mixed solvent under defined temperature and pressure, and it is a critical property across drug discovery, formulation, and pharmaceutical process development~\cite{DaVia2024GSKSolubility}.
In practical applications, models often encounter unseen scaffolds, unseen solutes and solvents, and unseen binary solvent pairs in mixed-solvent systems, requiring solubility prediction beyond the training distribution.
These shifts directly affect the solute--solvent and solvent--solvent interactions governing solubility~\citep{yeyunchen2025relational,deng_systematic_2023}, making them natural out-of-distribution (OOD) scenarios for evaluating chemically meaningful molecular representations.

Recent solubility prediction methods model solvent context through recommendation, interaction-graph, hypergraph, or multicomponent architectures~\citep{wollschlager2024solvmate,ramani2024molmerger,du2025mmhnn,jung2025multicomponent}.
Broader molecular relational learning methods explicitly model drug--drug or solute--solvent interactions~\citep{lee2023cgib,lee2023cmrl,yeyunchen2025relational}, but usually require {supervised pairs} or task-specific data.
{Molecular self-supervised objectives based on augmentation-defined pairs, masking, or reconstruction~\citep{molclr,hu2023your,SMICLR,Chemformer,vincent2011connection}
do not specify scientifically grounded population-level neighborhoods: {which molecules should be close or far under a chemically meaningful relation.} Classical manifold and molecular metric-learning methods can preserve predefined relations, but leave open how to transfer scientifically grounded population-level neighborhood geometry into molecular encoders without labels~\citep{ceriotti2011sketchmap,coupry2022metric}.}
These studies leave a research gap: label-free pretraining remains weakly aligned with the relational structure needed for solubility OOD generalization.

For solubility, a chemically grounded route can close this gap: by the empirical ``like dissolves like'' rule, solute--solvent compatibility is {reflected} by descriptor- or fingerprint-based physicochemical similarity~\citep{todeschini2009molecular,rogers2010extended,vsa}.
We propose \textbf{PRISM} (Prior Relational Information for Self-supervised Modeling), which turns {such similarity} into a label-free relational teacher.
PRISM {uses these prior-defined relations to shape the geometry of learned molecular embeddings}.

{PRISM has two design choices: the \emph{prior source}, which specifies the expert-defined relation, and the \emph{alignment design}, which transfers that relation into embedding space.}
We use van der Waals surface area (VSA) descriptors~\citep{vsa} and Morgan fingerprints~\citep{rogers2010extended} as complementary physicochemical and substructural priors, and adopt {distance-based} distribution matching with prior-specific encoders fused only downstream (late fusion) as the alignment design among several candidates.

{Our main results are OOD generalization of solubility prediction.
PRISM outperforms supervised and SSL-pretrained baselines in cross-dataset generalization from BigSolDB to Leeds and SolProp for predicting the solubility of diverse solutes in pure solvents.} With frozen prior-specific encoders, trainable fusion, and a composition-weighted head, it matches or improves end-to-end mixed-solvent baselines under solvent-pair-disjoint splits with better stability.
Label energy (a form of graph Dirichlet energy) diagnostics and ablation controls further show that these gains come from prior--task compatibility, relational pretraining, and late fusion of prior-specific embeddings, rather than from descriptor inputs, reconstruction, architectural capacity, or coordinate-level imitation.

Our contributions are the following:
\begin{enumerate}
    \item We propose \textbf{PRISM}, a label-free molecular pretraining framework that {bakes chemically meaningful prior knowledge into molecular representations through prior-induced relational modeling}, providing a practical route to label-free relational representation learning.
    
    \item We build a PRISM-based solubility prediction model using VSA descriptors and Morgan fingerprints as complementary physicochemical and substructural priors. The model includes two stages: the pretraining of four prior-specific encoders and the weighted-sum late fusion of prior-specific embeddings during downstream tasks.
    
    \item We show that PRISM improves solubility-centered OOD generalization from scaffold splits to pure- and mixed-solvent settings, {matching or surpassing strong baselines and improving stability under mixed-solvent OOD. Ablations and label-energy diagnostics support relational pretraining, prior complementarity, and prior--task compatibility.}
\end{enumerate}

\section{Methodology}
\label{sec:prism}
PRISM treats each chemical prior as a \emph{relational teacher}, defining molecular closeness under a specific physicochemical criterion.
{We adopt D-MPNN as a strong 2D graph backbone~\cite{yang2019dmpnn}; the framework itself remains encoder-agnostic.} We first discuss \emph{which} prior relations to choose in Section \ref{sec:prior_spaces}, and then describe \emph{how} they are aligned in the embedding space and fused in Section \ref{sec:prism_family} and \ref{sec:prism_full}.

\subsection{Solubility-grounded prior spaces}
\label{sec:prior_spaces}

The chemistry of solubility guides which prior relations PRISM uses.
Solubility depends on solute--solvent interactions and molecular structure.
Following the empirical ``like dissolves like'' principle, solubility is favored when solute and solvent share similar interaction components, including hydrophobic, electrostatic, and dispersion-related effects, or similar substructures capturing interactions not accounted for by those effects.
PRISM therefore uses four complementary priors: SlogP\_VSA for hydrophobic/lipophilic effects, SMR\_VSA for polarizability/dispersion-relevant effects, PEOE\_VSA for electrostatic effects, and Morgan fingerprints for substructure and functional-group patterns.
Detailed descriptions of these prior spaces and other prior spaces are provided in Appendix~\ref{app:priors}.

\subsection{Prior-induced relational alignment}
\label{sec:prism_family}

Let $\mathcal{D}_{\mathrm{pre}}=\{x_i\}_{i=1}^{N}$ be an unlabeled set of molecular graphs and let $\mathcal{S}=\{s_k\}_{k=1}^{K}$ be a set of prior spaces.
For each prior space $s_k\in\mathcal{S}$, a fixed prior function maps molecule $x_i$ to $\pi_i^{s_k}$, and a prior-space comparator $d^{s_k}$ measures pairwise relations.
A prior-specific encoder $G^{s_k}_{\theta_k}$ with parameters $\theta_k$ maps the same molecule to an embedding $r_i^{s_k}$.
PRISM aligns the relation induced by $\{r_i^{s_k}\}_{i=1}^{N}$ with the relation induced by $\{\pi_i^{s_k}\}_{i=1}^{N}$ in a minibatch $\mathcal{B}$. The alignment procedure is illustrated in Figure~\ref{fig:prism_overview}a.

\begin{figure*}[h]
  \centering
  \includegraphics[width=0.95\textwidth]{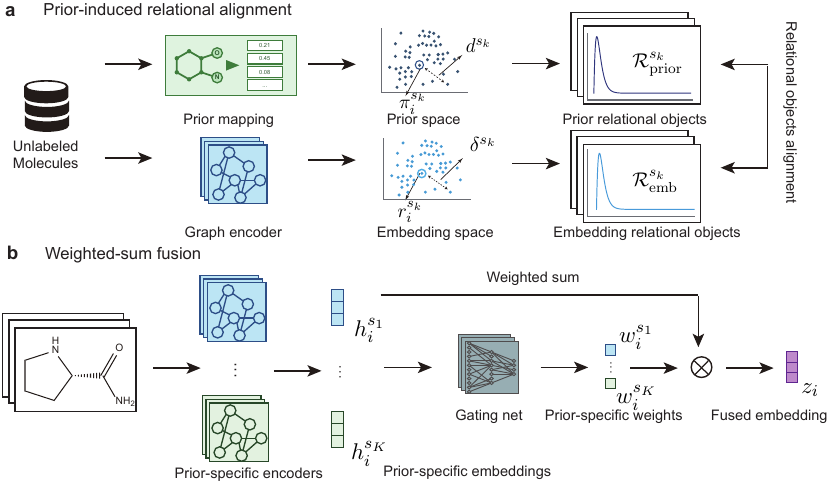}
    \caption{
    \textbf{PRISM overview.}
    \textbf{(a)} Prior-guided alignment pretrains an encoder per prior space.
    \textbf{(b)} Weighted fusion produces a fused molecular embedding.
    }
  \label{fig:prism_overview}
\end{figure*}

Concretely, two relational constructors $\mathcal{A}_{prior}^{s_k}$ and $\mathcal{A}_{emb}^{s_k}$ map pairwise comparisons in $\mathcal{B}$ to two relational objects $\mathcal{R}_{\mathrm{prior}}^{s_k}$ and $\mathcal{R}_{\mathrm{emb}}^{s_k}$:
\begin{equation}
    \mathcal{R}_{\mathrm{prior}}^{s_k}
    =
    \mathcal{A}_{prior}^{s_k}
    \!\left(\{d^{s_k}(\pi_i^{s_k},\pi_j^{s_k})\}_{i,j\in\mathcal{B}}\right),
    \qquad
    \mathcal{R}_{\mathrm{emb}}^{s_k}
    =
    \mathcal{A}_{emb}^{s_k}
    \!\left(\{\delta^{s_k}(r_i^{s_k},r_j^{s_k})\}_{i,j\in\mathcal{B}}\right).
\label{eq:prism_relational_objects}
\end{equation}
The objective of PRISM is to align the two relational objects, $\mathcal{R}_{\mathrm{prior}}^{s_k}$ and $\mathcal{R}_{\mathrm{emb}}^{s_k}$, using a suitable discrepancy metric as the loss function: 
\begin{equation}
    \mathcal{L}_{\mathrm{PRISM}}^{s_k}
    =
    \mathcal{M}^{s_k}
    \!\left(
    \mathcal{R}_{\mathrm{prior}}^{s_k},
    \mathcal{R}_{\mathrm{emb}}^{s_k}
    \right).
\label{eq:prism_objective}
\end{equation}

\begin{table}[h]
\centering
\small
\caption{
\textbf{Relational objects and alignment objectives in PRISM with related instances.}
}
\label{tab:alignment_family}
\setlength{\tabcolsep}{4pt}
\renewcommand{\arraystretch}{1.12}
\resizebox{0.99\textwidth}{!}{
\begin{tabular}{llll}
\toprule
Relational object 
& Objective 
& Discrepancy metric 
& Related instances \\
\midrule
Pairwise distance matrix
& Pointwise metric matching
& Distance regression
& Talibart et al.~\citep{talibart2026chemistry} \\

Neighbor ordering
& Rank preservation
& Rank / ListMLE loss
& PH-TDL~\cite{luo2023improving} \\

Top-$n$ prior-space kNN graph
& Neighborhood-graph alignment
& InfoNCE over graph positives
& Moleco~\cite{park-etal-2024-moleco} \\

Pair-wise distance-based distribution
& Distribution matching
& KL divergence
& SNE-style \citep{hinton2002sne} / GraphMSL~\cite{xu2024graphmultisimilaritylearningmolecular} \\
\bottomrule
\end{tabular}
}
\vspace{-2mm}
\end{table}

Table~\ref{tab:alignment_family} classifies alignment objectives by the relational objects they preserve. As a concrete example, pairwise distances can be converted into anchor-wise stochastic neighborhood distributions.
For each prior space \(s_k\), anchor molecule \(x_i\), and candidate set
\(\mathcal{B}_i=\mathcal{B}\setminus\{x_i\}\),
\(R_i^{s_k}\) denotes the prior-space probability distribution of  \(x_i\), while \(P_i^{s_k}\) denotes the corresponding embedding-space probability distribution. The relational objects are sets of anchor-wise distributions
\[
\mathcal{R}_{\mathrm{prior}}^{s_k}
=
\{R_i^{s_k}\}_{x_i\in\mathcal{B}},
\qquad
\mathcal{R}_{\mathrm{emb}}^{s_k}
=
\{P_i^{s_k}\}_{x_i\in\mathcal{B}}.
\]
The probabilities in these two distributions are defined by
\begin{equation}
\resizebox{0.99\textwidth}{!}{$
\displaystyle
R_i^{s_k}(x_j)
=
\frac{\exp\!\left(-\beta_k d^{s_k}(\pi_i^{s_k},\pi_j^{s_k})\right)}
{\sum_{x_l\in\mathcal{B}_i}\exp\!\left(-\beta_k d^{s_k}(\pi_i^{s_k},\pi_l^{s_k})\right)},
\qquad
P_i^{s_k}(x_j)
=
\frac{\exp\!\left(-\beta_\tau\delta(r_i^{s_k},r_j^{s_k})\right)}
{\sum_{x_l\in\mathcal{B}_i}\exp\!\left(-\beta_\tau\delta(r_i^{s_k},r_l^{s_k})\right)} .
$}
\label{eq:soft_neighborhoods}
\end{equation}
Here \(d^{s_k}\) is the prior-space distance,
\(\delta(r_i^{s_k},r_j^{s_k})=1-\cos(r_i^{s_k},r_j^{s_k})\), and \(\beta_k,\beta_\tau\) control locality.
The loss function aggregates anchor-wise KL divergence over the minibatch $\mathcal{B}$:
\begin{equation}
\mathcal{L}_{\mathrm{KL}}^{s_k}
=
\frac{1}{|\mathcal{B}|}
\sum_{x_i\in\mathcal{B}}
\mathrm{KL}\!\left(R_i^{s_k}\,\middle\|\,P_i^{s_k}\right).
\label{eq:align_loss}
\end{equation}
Because Eq.~\ref{eq:soft_neighborhoods} normalizes each anchor separately, these neighborhood distributions are conditional and generally asymmetric, retaining anchor-specific {neighborhood geometry}.
Section~\ref{sec:alignment_selection} evaluates the four alignment objectives and explains why distribution matching is used in PRISM.
Appendix~\ref{app:loss} provides additional theoretical analysis on these alignment objectives {and compares the conditional formulation with a symmetric SNE-style alternative}.

\subsection{Weighted-sum fusion}
\label{sec:prism_full}

The second stage of PRISM is weighted-sum fusion (Figure~\ref{fig:prism_overview}b). 
PRISM uses $K$ prior-specific encoders, each producing
$h_i^{(s_k)}=q^{s_k}(r_i^{s_k})\in\mathbb{R}^{D}$, with branch embeddings stacked as
$H_i=[h_i^{(s_1)};\ldots;h_i^{(s_K)}]\in\mathbb{R}^{K\times D}$.
PRISM fuses these branch embeddings by weighted summation:
\begin{equation}
    z_i=\sum_{s_k\in\mathcal{S}}w_i^{s_k}h_i^{s_k}.
\label{eq:fusion}
\end{equation}
We consider two fusion variants.
The first uses task-specific weights, $w_i^{s_k}=w^{s_k}$, providing a task-level summary of prior importance.
The second uses molecule-specific weights $w_i=\mathrm{softmax}(g(H_i))$ from a small gating network $g$, allowing PRISM to emphasize different prior-induced relations across molecules as solubility-governing interactions vary.
Implementation details and fusion analyses are provided in Appendix~\ref{app:impl:fusion} and Appendix~\ref{app:interp:faithfulness}.

\label{sec:task_heads}

\begin{figure*}[h]
  \centering
  \includegraphics[width=0.95\textwidth]{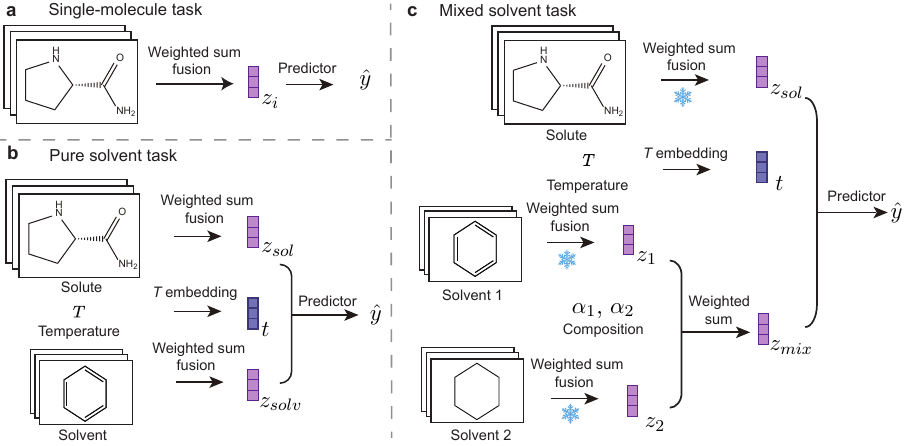}
  \caption{
  \textbf{Task-specific prediction heads.}
    \textbf{(a)} Single-molecule prediction.
    \textbf{(b)} Pure-solvent prediction combines solute, solvent, and temperature embeddings.
    \textbf{(c)} Mixed-solvent prediction freezes encoders, trains fusion, and compositionally pools solvent embeddings.
  }
  \label{fig:task_heads}
\end{figure*}

\subsection{Task-specific prediction heads}
Figure~\ref{fig:task_heads} summarizes the downstream task settings.
For single-molecule tasks, the model predicts directly from the fused molecular embedding,
$\hat y_i=f_{\mathrm{single}}(z_i)$.
For pure-solvent tasks, the head combines solute embedding $z_{\mathrm{sol}}$, solvent embedding
$z_{\mathrm{solv}}$, and temperature embedding $t=h_T(T)$, where $h_T$ is an MLP, as
$\hat y=f_{\mathrm{pure}}([z_{\mathrm{sol}};z_{\mathrm{solv}};t])$.

For mixed-solvent prediction, we freeze the original PRISM-pretrained encoders, initialize the fusion module from the pure-solvent task, and train it together with a lightweight mixed-solvent head $f_{\mathrm{mix}}$.
Given two solvents $x_1,x_2$ and their mole fractions $\alpha_1,\alpha_2$, where $\alpha_1+\alpha_2=1$, we form
\begin{equation}
    z_{\mathrm{mix}}=\alpha_1z_1+\alpha_2z_2,
    \qquad
    \hat y=f_{\mathrm{mix}}([z_{\mathrm{sol}};z_{\mathrm{mix}};t]).
\label{eq:mix_head_simple}
\end{equation}
Because a real solvent mixture may have nonlinear solvent--solvent interactions, the linear pooling restricts head capacity.
This deliberate low-capacity design probes whether PRISM-pretrained solvent embeddings contain relational information about solvent--solvent interactions.

\subsection{Label energy for prior--task compatibility}
\label{sec:label_energy}

PRISM relies on prior spaces that induce task-relevant molecular neighborhoods.
We therefore introduce \textbf{label energy} as a practical diagnostic of prior--task compatibility: if a prior is relevant to a downstream task, {molecules with similar labels should be nearby in its prior space}.
Using $n$ nearest neighbors for each molecule (we use $n=10$ in all experiments), we define
\begin{equation}
\mathcal{E}^{s_k}_{n}(y)
=
\frac{1}{|Q|}
\sum_{i\in Q}
\frac{1}{n}
\sum_{j\in\mathcal{N}^{s_k}_{n}(x_i)}
(y_i-y_j)^2 .
\label{eq:label_energy_main}
\end{equation}
Here $Q$ denotes the labeled molecules available for diagnosis, and
$\mathcal{N}^{s_k}_{n}(x_i)$ is the set of $n$ nearest neighbors of $x_i$ in prior space $s_k$.
Lower energy indicates {molecules with} smoother labels over the prior-induced neighborhood and therefore stronger prior--task compatibility.

Because the raw energy depends on label scale and class balance, we normalize it against a Gaussian random prior:
\begin{equation}
\rho^{s_k}
=
\frac{\mathcal{E}^{s_k}_n(y)}
{\mathcal{E}^{(\mathrm{random})}_n(y)}.
\label{eq:label_energy_ratio}
\end{equation}
Thus, $\rho^{s_k}<1$ indicates that prior $s_k$ induces smoother label neighborhoods than random geometry.
{Appendix~F.3 further shows that label energy bounds the error of a local predictor in the prior space, providing a theoretical justification for this diagnostic.}
When prior choice is less obvious, candidate priors should be scientifically relevant, inexpensive, and complementary, with label energy on training/validation labels used to filter task-incompatible choices (Appendix~\ref{app:extra_ablation:dynamic_selection}).

\section{Experiments}
\label{sec:experiments}

\subsection{Experimental setup}
\label{sec:setup}

In this work, we evaluate PRISM under four chemically significant OOD scenarios:
\begin{enumerate}
    \item \textbf{Structural OOD}: MoleculeNet scaffold splits test unseen molecular scaffolds.
    \item \textbf{Solute OOD}: pure-solvent transfer tests unseen solutes on Leeds and SolProp.
    \item \textbf{Solvent OOD}: {pure-solvent transfer evaluates the subset of SolProp whose solvents are absent from the processed BigSolDB reference set.}
    \item \textbf{Solvent-pair OOD}: mixed-solvent transfer tests unseen solvent pairs from reference \citep{bao2024binarysolubility}.
\end{enumerate}

We report RMSE for regression tasks.
For pure and mixed solubility tasks, we also report $\%|\Delta \log (S)|\leq 1 = N^{-1}\sum_i \mathbf{1}(|\widehat{\log (S_i)}-\log (S_i)|\leq 1)\times 100$, the fraction of predictions within one log-solubility unit, a practically meaningful error range~\citep{attia_data-driven_2025} that complements RMSE. Split construction and leakage checks are detailed in Appendix~\ref{app:datasets:leakage}.
{
Because the evaluated tasks require single-molecule, solute--solvent, and multi-molecule inputs, we use capability- and protocol-matched baselines for each setting; Appendix~\ref{app:baseline_protocols} summarizes their selection and training protocols.
}

\subsection{Selecting the default alignment objective for PRISM}
\label{sec:alignment_selection}

\begin{table}[h]
\centering
\small
\setlength{\tabcolsep}{5pt}
\renewcommand{\arraystretch}{1.15}
\caption{
\textbf{Alignment objective selection on ESOL.}
{
C1--C3 denote fixed early, intermediate, and late {checkpoints} from one uninterrupted pretraining run for each objective.
}}
\label{tab:loss_ablation}
\begin{tabular}{lcccccc}
\toprule
\multirow{2}{*}{Objective} 
& \multicolumn{3}{c}{Alignment metric$\downarrow$} 
& \multicolumn{3}{c}{ESOL RMSE $\downarrow$} \\
\cmidrule(lr){2-4} \cmidrule(lr){5-7}
& C1 & C2 & C3 & C1 & C2 & C3 \\
\midrule
$\ell_2$ distance
& 0.968 & 0.933 & \textbf{0.851}
& 0.956 & \textbf{0.865} & 0.987 \\

Rank
& 0.180 & 0.109 & \textbf{0.025}
& 1.023 & 1.027 & \textbf{1.022} \\

Neighborhood-graph alignment 
& 2.53 & 2.02 & \textbf{1.34}
& 2.011 & 1.241 & \textbf{1.083} \\

Distribution matching
& 0.813 & 0.736 & \textbf{0.723}
& 0.844 & 0.776 & \textbf{0.772} \\
\bottomrule
\end{tabular}
\vspace{-2mm}
\end{table}

{
Table~\ref{tab:alignment_family} defines four types of relational alignment objectives, and Table~\ref{tab:loss_ablation} compares one representative from each type on ESOL under the same SlogP\_VSA prior, D-MPNN backbone, pretraining corpus, and downstream protocol.
C1--C3 trace how each objective evolves during pretraining; final checkpoints are selected using held-out unlabeled alignment loss.
}

The objectives exhibit different behaviors under this matched comparison.
$\ell_2$ {is overly rigid}: alignment improves across C1--C3, but downstream RMSE first decreases and then increases, suggesting that exact distance matching can suppress graph-derived information not represented by the prior.
Rank {is too coarse: it preserves relative ordering while discarding graded similarity strength~\citep{van2012stochastic}, and its downstream RMSE changes little across checkpoints.}
Neighborhood-graph alignment and distribution matching both preserve neighborhood structure, while distribution matching additionally retains graded similarity rather than reducing relations to hard positive/negative assignments, and achieves the best late-checkpoint RMSE.
{
We therefore select distribution matching as PRISM's default alignment objective and keep this objective fixed in all subsequent experiments.
}
\subsection{MoleculeNet structural OOD}
\label{sec:molnet_results}

\begin{table*}[h]
\centering
\small
\setlength{\tabcolsep}{5pt}
\caption{
\textbf{MoleculeNet structural OOD on solubility-related tasks under scaffold splits.}
RMSE ($\downarrow$). Results averaged over three seeds with independent splits.
}
\label{tab:molnet_core}
\begin{tabular}{lccc}
\toprule
Model 
& ESOL $\downarrow$ 
& FreeSolv $\downarrow$ 
& Lipo $\downarrow$ \\
\midrule
XGBoost
& $0.925\pm0.00$ 
& $3.398\pm0.00$ 
& $0.741\pm0.00$ \\
D-MPNN\citep{yang2019dmpnn}
& $1.050\pm0.01$ 
& $\mathbf{1.673\pm0.08}$ 
& $0.683\pm0.02$ \\
MolCLR\citep{molclr}
& $1.113\pm0.02$ 
& $2.301\pm0.25$ 
& $0.789\pm0.01$ \\
MoAMa\citep{MoAMa}
& $1.394\pm0.12$ 
& $2.700\pm0.11$ 
& $0.772\pm0.03$ \\
\midrule
\textbf{PRISM}
& $\mathbf{0.772\pm0.002}$ 
& $1.770\pm0.09$ 
& $\mathbf{0.538\pm0.02}$ \\
\bottomrule
\end{tabular}
\vspace{-2mm}
\end{table*}

In addition to ESOL, we evaluate PRISM on FreeSolv and Lipo to assess structural OOD for additional solubility-related properties. PRISM achieves the best results on ESOL and Lipo. FreeSolv is a boundary case: its hydration free energy depends on 3D geometry and electrostatic structure that 2D encoders cannot fully capture. Replacing the 2D backbone with a 3D-aware encoder substantially {improves performance on FreeSolv} (Appendix~\ref{app:extra_ablation:schnet}), {supporting the value of 3D-aware representations for this task}. The complete results on the remaining MoleculeNet benchmarks (BBBP, BACE, HIV) are in the Appendix~\ref{app:extra_results:molnet_full}, suggesting that PRISM extends to broader properties.

\subsection{Pure-solvent external transfer}

\label{sec:pure_solvent_results}
\begin{table}[t]
  \centering
  \small
  \setlength{\tabcolsep}{8pt}
  \caption{
  \textbf{Pure-solvent external transfer from BigSolDB to Leeds and SolProp.}
  {Published baseline values are from prior reported protocols; XGBoost and PRISM are  mean $\pm$ std over three seeds.}
  }
  \label{tab:pure_full}
  \begin{tabular}{lcccc}
    \toprule
    Model 
    & \multicolumn{2}{c}{RMSE $\downarrow$} 
    & \multicolumn{2}{c}{$\%|\Delta \log S|\leq 1$ $\uparrow$} \\
    \cmidrule(lr){2-3}\cmidrule(lr){4-5}
    & Leeds & SolProp & Leeds & SolProp \\
    \midrule
    Vermeire \citep{attia_data-driven_2025} 
    & 2.16 & 1.43 & 41.2 & 66.9 \\
    MolFormer \citep{attia_data-driven_2025} 
    & 1.07 & 0.98 & 64.1 & 70.7 \\
    ChemBERTa \citep{attia_data-driven_2025} 
    & 1.26 & 1.01 & 57.0 & 72.4 \\
    Chemprop \citep{attia_data-driven_2025} 
    & 0.99 & 0.83 & 70.9 & 76.1 \\
    Fastprop\citep{attia_data-driven_2025} 
    & 0.95 & 0.83 & 73.6 & 78.1 \\
    {XGBoost}
    & {$1.00\pm0.003$}
    & {$0.78\pm0.008$}
    & {$69.4\pm0.2$}
    & {$78.3\pm1.2$} \\
    \midrule
    \textbf{PRISM}
    & $\mathbf{0.91\pm0.002}$ 
    & $\mathbf{0.77\pm0.030}$ 
    & $\mathbf{75.7\pm0.9}$ 
    & $\mathbf{82.3\pm0.8}$ \\
    \bottomrule
  \end{tabular}
  \vspace{-2mm}
\end{table}
Practical solubility prediction goes beyond single-molecule aqueous solubility and requires modeling solute--solvent relations across diverse solvents.
Therefore, we test generalizations of solvent OOD and solute OOD using cross-dataset transfer: PRISM is pretrained on ZINC, fine-tuned on BigSolDB, and evaluated on Leeds and SolProp.  The result is shown in Table~\ref{tab:pure_full}.

{PRISM achieves the best results on both external datasets, improving RMSE and $\%|\Delta \log S|\leq 1$ over supervised and SSL-pretrained baselines.
Its advantage holds over XGBoost with direct access to the same prior features, suggesting that the gain does not arise from the priors alone.
It further outperforms Fastprop, a strong solubility model with thermodynamic constraints, without using the same loss.
These results are particularly notable given the pretraining scale: MolFormer uses 1.1B molecules and ChemBERTa 77M, whereas PRISM is pretrained on only 250K molecules.
Together with the temperature-trend analysis in Appendix~\ref{app:temperature_trends}, these results suggest that prior-induced relational pretraining extracts compact, transfer-relevant structure for solubility prediction.}

\subsection{{Mixed-solvent OOD generalization and representation reuse}}

\begin{table}[H]
\centering
\caption{
\textbf{Mixed-solvent solubility prediction under random and solvent-pair-disjoint splits.}
{
All results are mean $\pm$ std over five independently resampled splits under the corresponding protocol.
}}
\label{tab:mixedsolv_main}
\setlength{\tabcolsep}{3pt}
\renewcommand{\arraystretch}{1.02}
\scriptsize

\resizebox{0.98\linewidth}{!}{
\begin{tabular}{lcccccc}
\toprule
& \multicolumn{3}{c}{Random split}
& \multicolumn{3}{c}{Solvent-pair-disjoint OOD} \\
\cmidrule(lr){2-4}\cmidrule(lr){5-7}
Model
& RMSE $\downarrow$
& $R^2$ $\uparrow$
& $\%|\Delta \log S|\leq1$ $\uparrow$
& RMSE $\downarrow$
& $R^2$ $\uparrow$
& $\%|\Delta \log S|\leq1$ $\uparrow$ \\
\midrule

\multicolumn{7}{l}{\textit{Direct-prior control}} \\
{XGBoost}
& {$0.055\pm0.011$}
& {$\mathbf{0.998\pm0.001}$}
& {$\mathbf{100\pm0.0}$}
& {$0.459\pm0.022$}
& {$0.796\pm0.086$}
& {$96.0\pm0.8$} \\

\midrule
\multicolumn{7}{l}{\textit{End-to-end D-MPNN aggregation controls}} \\
D-MPNN + Concat
& $0.103\pm0.003$
& {$0.995\pm0.000$}
& $99.9\pm0.0$
& {$0.587\pm0.093$}
& {$0.703\pm0.072$}
& {$91.6\pm3.8$} \\

D-MPNN + Weighted Sum
& $0.101\pm0.005$
& {$0.995\pm0.001$}
& $\mathbf{100\pm0.0}$
& {$0.498\pm0.161$}
& {$0.766\pm0.134$}
& {$93.8\pm4.5$} \\

D-MPNN + HyperGraph
& $0.089\pm0.003$
& {$0.996\pm0.003$}
& $\mathbf{100\pm0.0}$
& {$0.457\pm0.211$}
& {$0.785\pm0.142$}
& {$95.2\pm6.0$} \\

D-MPNN + Cross Attention
& $0.119\pm0.005$
& {$0.993\pm0.001$}
& $\mathbf{100\pm0.0}$
& {$0.483\pm0.186$}
& {$0.761\pm0.116$}
& {$94.2\pm6.8$} \\

\midrule
\multicolumn{7}{l}{\textit{Specialized interaction models}} \\
{Triple-CIGIN\citep{pathak2020chemically}}
& {$\mathbf{0.052\pm0.004}$}
& {$\mathbf{0.998\pm0.001}$}
& {$\mathbf{100\pm0.0}$}
& {$0.674\pm0.123$}
& {$0.639\pm0.104$}
& {$88.1\pm5.0$} \\

{Triple-MMGNN\citep{du2024mmgnn}}
& {$0.065\pm0.001$}
& {$\mathbf{0.998\pm0.001}$}
& {$\mathbf{100\pm0.0}$}
& {$0.639\pm0.190$}
& {$0.661\pm0.178$}
& {$89.6\pm5.4$} \\

\midrule
\multicolumn{7}{l}{\textit{Frozen SSL + composition-weighted pooling}} \\
{ChemBERTa\citep{ChemBERTa}}
& {$0.109\pm0.015$}
& {$0.995\pm0.002$}
& {$99.9\pm0.1$}
& {$0.503\pm0.106$}
& {$0.787\pm0.053$}
& {$93.0\pm4.9$} \\

{MolCLR-GCN\citep{molclr}}
& {$0.135\pm0.020$}
& {$0.990\pm0.004$}
& {$99.8\pm0.2$}
& {$0.592\pm0.146$}
& {$0.709\pm0.140$}
& {$88.5\pm8.7$} \\

{GROVER\citep{rong2020self}}
& {$0.119\pm0.009$}
& {$0.993\pm0.001$}
& {$99.9\pm0.1$}
& {$0.737\pm0.327$}
& {$0.523\pm0.305$}
& {$90.0\pm9.5$} \\

{MolCLR-GIN\citep{molclr}}
& {$0.115\pm0.008$}
& {$0.993\pm0.001$}
& {$99.9\pm0.0$}
& {$0.797\pm0.448$}
& {$0.478\pm0.370$}
& {$85.6\pm13.3$} \\

\midrule
\textbf{PRISM}
& {$0.077\pm0.006$}
& {$\mathbf{0.998\pm0.001}$}
& $\mathbf{100.0\pm0.0}$
& {$\mathbf{0.438\pm0.030}$}
& {$\mathbf{0.846\pm0.031}$}
& $\mathbf{96.6\pm0.9}$ \\
\bottomrule
\end{tabular}
}
\vspace{-2mm}
\end{table}

Beyond pure-solvent transfer, practical solubility prediction often involves mixed solvents, where composition-dependent solvent--solvent interactions also matter.
We evaluate random and solvent-pair-disjoint splits using frozen PRISM encoders, fusion initialized from the pure-solvent task, and the composition-weighted pooling head in Eq.~\eqref{eq:mix_head_simple} (Figure~\ref{fig:task_heads}c), testing whether mixture-relevant information can be recovered through lightweight downstream adaptation.

{
We compare against direct-prior XGBoost, same-backbone D-MPNN controls, specialized interaction models, and frozen SSL baselines.
Under the random split, most methods are near saturation, with differences largely within {measurement uncertainty}.
Under solvent-pair-disjoint OOD, PRISM achieves the best mean RMSE, $R^2$, and $\%|\Delta\log S|\leq1$, with substantially lower split-to-split variation than most baselines.
The comparison with XGBoost shows that direct access to the priors alone is insufficient, while the D-MPNN and interaction-model controls show that model capacity or explicit pairwise interaction modeling does not recover the same transferability.
Together with the frozen-SSL baselines, these results suggest that separating complementary prior-defined relations during pretraining yields representations that are more reusable under unseen solvent combinations.
Additional ablations support solvent aggregation and fusion methods(Appendix~\ref{app:mixed_aggregation_fusion}).
}

{
We further evaluate PRISM on the independent Cenci et al.~dataset~\citep{cenci2024solubility}, covering pure, binary, and ternary solvent systems.
PRISM improves overall held-out prediction over UNIFAC--PLS, while combining PRISM with UNIFAC gives the strongest overall and ternary results (Appendix~\ref{app:cenci_mixture}).
}

\section{Ablation Study}
\label{sec:ablation}
{
This section examines the two representation-learning choices introduced in Section~\ref{sec:introduction}: \emph{how} prior relations should be encoded and \emph{which} priors are useful for a downstream task.
We first use controlled variants to isolate the roles of relational pretraining and prior-specific late fusion, distinguishing them from direct prior input, reconstruction, increased model capacity, and alternative multi-prior fusion strategies.
We then use label energy to diagnose prior--task alignment and characterize when a scientific prior is likely to provide useful supervision.
Together, these analyses examine both the \emph{alignment design} and the \emph{prior source} underlying PRISM.}

\subsection{Solubility gains require PRISM pretraining}
\label{sec:solubility_ablation}

We first test whether PRISM's solubility gains come from prior-induced relational pretraining rather than model capacity or direct descriptor use.
Table~\ref{tab:solv_pretrain_ablation} compares PRISM with five controls: D-MPNN w.\ prior directly concatenates prior descriptors; D-MPNN (reconstruction) uses the same encoder architecture but replaces PRISM's relational alignment objective with pointwise descriptor reconstruction during pretraining; D-MPNN (all spaces) uses one encoder trained with all prior-alignment losses; 4$\times$D-MPNN (scratch) keeps the multi-branch architecture without pretraining; and PRISM-Distilled compresses PRISM's four pretrained encoders into a single encoder by aligning it with the multi-prior representation of PRISM under the same KL objective (Appendix~\ref{app:distillation}).

\begin{table*}[h]
\centering
\caption{
\textbf{Effect of PRISM pretraining on solubility transfer.}
Controls use comparable downstream heads. Mixed OOD denotes the solvent-pair-disjoint split.
}
\label{tab:solv_pretrain_ablation}
\setlength{\tabcolsep}{3.5pt}
\renewcommand{\arraystretch}{1.05}
\scriptsize
\resizebox{0.98\textwidth}{!}{
\begin{tabular}{lcccccc}
\toprule
& \multicolumn{2}{c}{Leeds}
& \multicolumn{2}{c}{SolProp}
& \multicolumn{2}{c}{Mixed OOD} \\
\cmidrule(lr){2-3}\cmidrule(lr){4-5}\cmidrule(lr){6-7}
Model
& RMSE $\downarrow$ & $\%|\Delta \log S|\leq 1$ $\uparrow$
& RMSE $\downarrow$ & $\%|\Delta \log S|\leq 1$ $\uparrow$
& RMSE $\downarrow$ & $\%|\Delta \log S|\leq 1$ $\uparrow$ \\
\midrule
D-MPNN w. prior
& $0.990\!\pm\!0.013$ & $72.3\!\pm\!1.2$
& $0.820\!\pm\!0.035$ & $80.2\!\pm\!0.9$
& $0.466\!\pm\!0.078$ & $95.3\!\pm\!1.7$ \\
D-MPNN (reconstruction)
& $1.076\!\pm\!0.033$ & $62.1\!\pm\!2.1$
& $0.956\!\pm\!0.082$ & $69.0\!\pm\!4.7$
& $0.618\!\pm\!0.133$ & $87.3\!\pm\!2.1$ \\
D-MPNN (all spaces)
& $0.930\!\pm\!0.004$ & $73.9\!\pm\!1.6$
& $0.860\!\pm\!0.017$ & $77.2\!\pm\!0.8$
& $0.580\!\pm\!0.107$ & $92.5\!\pm\!2.1$ \\
4$\times$D-MPNN (scratch)
& $1.001\!\pm\!0.020$ & $71.6\!\pm\!0.7$
& $0.901\!\pm\!0.038$ & $74.7\!\pm\!3.4$
& $0.472\!\pm\!0.123$ & $92.1\!\pm\!3.6$ \\
PRISM-Distilled
& $0.950\!\pm\!0.014$ & $73.9\!\pm\!0.6$
& $0.970\!\pm\!0.024$ & $77.2\!\pm\!1.1$
& $0.457\!\pm\!0.110$ & $95.6\!\pm\!3.8$ \\
\midrule
\textbf{PRISM}
& $\mathbf{0.910\!\pm\!0.002}$ & $\mathbf{75.7\!\pm\!0.9}$
& $\mathbf{0.770\!\pm\!0.030}$ & $\mathbf{82.3\!\pm\!0.8}$
& $\mathbf{0.440\!\pm\!0.010}$ & $\mathbf{96.6\!\pm\!0.9}$ \\
\bottomrule
\end{tabular}
}
\vspace{-2mm}
\end{table*}

These controls support two conclusions about why PRISM works.

First, the selected physicochemical priors are informative, but are most effective as relational supervision.
D-MPNN w.\ prior is competitive, showing that the priors carry task-relevant signal; yet descriptor reconstruction is among the weakest variants, and 4$\times$D-MPNN (scratch) underperforms despite matching PRISM's branch capacity.
Thus, PRISM's gain is not {explained by} descriptor access, pointwise prior prediction, or architecture alone, but {by} transferring prior-induced relational information into the embedding space.
{This distinguishes the information contained in the prior coordinates from the relational structure they induce over molecules.}

Second, multi-prior structure matters.
D-MPNN (all spaces) and PRISM-Distilled are competitive in some settings, showing that prior-induced information can be partially recovered by fused or compressed designs, but neither matches full PRISM.
D-MPNN (all spaces) corresponds to GraphMSL-style early fusion~\cite{xu2024graphmultisimilaritylearningmolecular}: supervising one shared embedding distribution with all priors is equivalent to first fusing the prior-induced target distributions and then matching this fused target by KL (Appendix~\ref{app:loss}).
Its weaker performance shows that learning prior-specific geometries before fusion preserves more {task-relevant} information than fusing prior targets before learning.
PRISM-Distilled probes the opposite direction: relational distillation compresses the multi-branch teacher into a single encoder, preserving useful signal on relation-centered solubility tasks but losing ground {relative to the full model} in Appendix~\ref{app:distillation}.
Together, these results support PRISM's late-fusion design: per-prior geometries are {partly compromised} by early fusion and only partly recovered by post-hoc compression, so PRISM keeps them separate during pretraining and combines them only downstream.
{The comparison therefore isolates not only whether multiple priors are useful, but also when their distinct relational geometries should be combined.}

\subsection{Label energy diagnoses prior--task alignment}
\label{sec:label_energy_results}
\begin{table}[h]
\centering
\caption{
\textbf{Prior--task alignment and Gaussian-prior control on solubility-related tasks.}
Lower $\rho_s$ indicates smoother prior-induced label neighborhoods.
All downstream columns report RMSE ($\downarrow$).
}
\label{tab:energy_gaussian_solubility}
\setlength{\tabcolsep}{4pt}
\small
\resizebox{0.98\linewidth}{!}{
\begin{tabular}{lcccccc}
\toprule
Dataset 
& $\rho_{\mathrm{VSA}}$ 
& $\rho_{\mathrm{FP}}$ 
& Gaussian prior trained PRISM
& 4$\times$D-MPNN scratch
& D-MPNN
& PRISM \\
\midrule
ESOL     
& 0.107 
& 0.150 
& $1.24\pm0.025$ 
& $0.848\pm0.025$
& $1.050\pm0.008$ 
& $\mathbf{0.772\pm0.002}$ \\
FreeSolv 
& 0.072 
& 0.090 
& $2.79\pm0.058$ 
& $2.222\pm0.054$
& $\mathbf{1.673\pm0.082}$ 
& $1.770\pm0.093$ \\
Lipo     
& 0.305 
& 0.315 
& $1.07\pm0.014$ 
& $0.642\pm0.014$
& $0.683\pm0.016$ 
& $\mathbf{0.538\pm0.018}$ \\
\bottomrule
\end{tabular}
}
\vspace{-2mm}
\end{table}

We next test whether the selected priors are meaningful for solubility-related tasks.
Table~\ref{tab:energy_gaussian_solubility} shows that the default VSA and fingerprint priors induce smoother-than-random label neighborhoods on ESOL, FreeSolv, and Lipo (\(\rho_s<1\)), and PRISM improves over scratch across all three.
This supports the initial prior choices: priors related to intermolecular interaction provide chemically meaningful relational supervision to encoders for solubility prediction.

The Gaussian-prior control separates prior quality from pretraining.
Replacing chemical priors with an uninformative random prior degrades performance, showing that the gains are not merely due to an extra pretraining stage.
{
Nevertheless, performance does not collapse, indicating that the model can still retain or recover graph-derived information beyond the imposed prior.
This further supports the flexibility of distribution matching, which shapes prior-consistent relations without forcing the embedding to discard complementary structural information.
Thus, relational pretraining is not sufficient by itself: its benefit depends on the quality of the relational structure being transferred.
}

FreeSolv remains a boundary case for PRISM with a 2D backbone: although its labels are smooth under the selected priors, PRISM is slightly weaker than supervised D-MPNN, suggesting that the 2D representation misses task-relevant factors such as molecular conformation.
With a 3D-aware SchNet backbone, PRISM improves substantially on FreeSolv (Appendix~\ref{app:extra_ablation:schnet}), showing that {a task-aligned prior is most useful when the encoder can also represent the relevant physical factors.}

Label energy can also guide prior selection when the relevant task factors are less obvious (Appendix~\ref{app:extra_ablation:dynamic_selection}).
{
First, the dynamic-selection experiments show that label energy can identify task-compatible priors: when several priors are similarly aligned, the selected set retains their performance, while poorly aligned priors can be filtered out.
Second, these results show that more priors are not necessarily better.
On CYP2D6, for example, using all candidate priors yields a ROC-AUC of $42.08\pm9.35$, whereas selecting the label-energy-compatible Pharm2D prior increases it to $73.31\pm1.02$.
Thus, complementary priors are useful when their relational geometries are task-aligned, but adding incompatible views can substantially degrade representation learning.
}

\section{Prior-Grounded Interpretability}
\label{sec:prior_grounded_interpretability}

Finally, we examine whether PRISM's prior spaces provide meaningful interpretation on solubility predictions.
Figure~\ref{fig:pure_interpret} gives two examples: a temperature-dependent ranking study and molecule-wise prior-space contribution study.

\subsection{Temperature-dependent ranking beyond structural similarity}
\label{sec:pure_interpret:ranking}

Figure~3a compares experimental and predicted $\log S$--$T$ trends for two solutes in a fixed solvent, testing whether the solute ordering is preserved across temperature. Both PRISM and the baseline model Fastprop successfully predict the temperature trend.
{This demonstrates that PRISM can capture the observed temperature-dependent trends without explicit
temperature-gradient supervision.} Although {pyrimidine-4-carboxamide and isatoic anhydride are almost equidistant from the solvent in the fingerprint, PEOE, and SMR prior spaces}, their {distances in the SlogP prior space} differ significantly, {making SlogP the only clearly distinguishing space among the four priors in this comparison}. {While the baseline reverses the experimental solute ordering, suggesting difficulty in resolving these
subtle chemical differences, PRISM effectively captures those differences through prior-induced relational pretraining, which allows PRISM to preserve the correct solute ranking.}

\begin{figure}[h]
  \centering
  \includegraphics[width=\columnwidth]{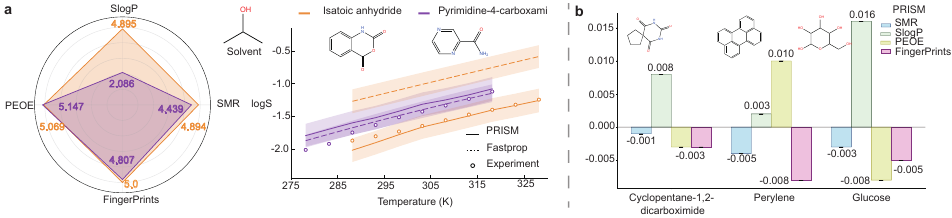}
  \caption{
    \textbf{Interpretability diagnostics in pure-solvent solubility.}
    (a) Prior-space profiles and temperature-dependent predictions for two representative solutes in a fixed solvent.
    PRISM preserves the correct solute ranking, whereas Fastprop reverses it.
    (b) Molecule-wise space attribution on ESOL in water, shown as fusion-weight deviation from a uniform baseline.
    }
  \label{fig:pure_interpret}
\end{figure}

\subsection{Molecule-wise prior-space attribution in water}
\label{sec:pure_interpret:attribution}

Figure~\ref{fig:pure_interpret}b reports molecule-wise space attribution on ESOL in water, measured as adaptive fusion-weight deviation from a uniform baseline.
Because each branch is aligned with a distinct prior space, these deviations indicate which chemical views PRISM emphasizes for each solute.

For cyclopentane-1,2-dicarboximide and glucose, the positive SlogP-related contribution suggests that hydrophobicity modulates how polar functionality translates into aqueous solubility.
Glucose has multiple hydrogen-bond donors and acceptors and is highly water-compatible, whereas cyclopentane-1,2-dicarboximide reflects a stronger trade-off between polar hydration and hydrophobic structure.

For perylene, a nonpolar \(\pi\)-rich aromatic solute, attribution shifts toward PEOE-related information.
We interpret this as a partial-charge-based electrostatic signal.
This may reflect interactions between water-derived protons, or hydrated protons, and the extended aromatic \(\pi\) system, including cation--\(\pi\)-like effects~\citep{dougherty_cation_pi_2025}.
Because the partial charges of conjugated aromatic \(\pi\) systems are not explicitly calculated in PEOE, this interaction can be attributed to the collective effect of $sp^2$-connected atoms.

Overall, these diagnostics suggest that PRISM's adaptive weights show prior-space contributions, rather than causal decompositions of individual predictions.
Perturbation-based faithfulness tests in Appendix~\ref{app:interp:faithfulness} further support this interpretation.

\section{Conclusion}
\label{sec:conclusion}
We introduced \textbf{PRISM}, a label-free framework that uses expert-defined chemical priors as relational teachers, preserving prior-induced neighborhood geometry through soft distribution matching across prior-specific branches fused only downstream. Empirically, PRISM improves solubility MoleculeNet results, pure-solvent external transfer, and mixed-solvent OOD stability with frozen encoders and a lightweight head. Ablations show that these gains require both relational pretraining and prior-specific late fusion: descriptor input, reconstruction, scratch multi-branch, and shared early-fusion variants all fall short, while PRISM-Distilled compresses only part of the branch-specific signal. A label-energy diagnostic further turns prior selection into a measurable property, separating useful priors from uninformative ones before pretraining. Low-cost chemical priors thus provide structured relational supervision for molecular OOD generalization, and we expect the principle of PRISM to extend to other domains where chemically or biologically meaningful priors are available.

\section*{Code Availability}
The code for PRISM is available at
\url{https://github.com/cxy0206/PRISM}.

\section*{Acknowledgements and Disclosures}

\textbf{Funding.}
This work was supported by the Shanghai Jiao Tong University - Global College Research Start-up Fund and Young Top-notch Talent Support Program.

\textbf{Competing interests.}
The authors declare no competing interests.

\bibliographystyle{unsrtnat}
\bibliography{chemA}

\clearpage
\appendix

\section{Notation and Organization of the Appendix}
\label{app:notation}

Table~\ref{tab:notation} summarizes the notation used throughout the paper and appendix.

\begin{table}[h]
\centering
\caption{Summary of notation. Symbols match the main text.}
\label{tab:notation}
\begin{tabular}{ll}
\toprule
Symbol & Meaning \\
\midrule
$x$ & Molecular graph input \\
$y$ & Downstream label \\
$\hat y$ & Model prediction \\
$\mathcal{D}_{\mathrm{pre}}=\{x_i\}_{i=1}^{N}$ & Unlabeled molecular graph dataset for pretraining \\
$\mathcal{B}$ & Mini-batch of molecules \\
$\mathcal{S}=\{s_1,\ldots,s_K\}$ & Set of expert-defined prior spaces \\
$\pi_i^{s_k}$ & Prior vector of molecule $i$ in prior space $s_k$ \\
$d^{s_k}(\cdot,\cdot)$ & Prior-space distance in prior space $s_k$ \\
$\delta^{s_k}(\cdot,\cdot)$ & Embedding-space distance \\
$\mathcal{A}_{\mathrm{prior}}^{s_k}$ & Relational constructor in prior space $s_k$ \\
$\mathcal{A}_{\mathrm{emb}}^{s_k}$ & Relational constructor in embedding space for branch $s_k$ \\
$\mathcal{M}^{s_k}$ & Discrepancy between prior- and embedding-induced relational objects \\
$\mathcal{R}_{\mathrm{prior}}^{s_k}$ & Relational object induced by prior space $s_k$ \\
$\mathcal{R}_{\mathrm{emb}}^{s_k}$ & Relational object induced by learned embeddings for branch $s_k$ \\
$G_{\theta_k}^{s_k}$ & Encoder branch for prior space $s_k$ \\
$r_i^{s_k}$ & Branch embedding produced by $G_{\theta_k}^{s_k}$ \\
$R_i^{s_k}$ & Prior-induced soft neighborhood distribution for anchor $i$ \\
$P_i^{s_k}$ & Learned soft neighborhood distribution for anchor $i$ \\
$\beta_k$ & Locality scale for prior space $s_k$ \\
$\beta_\tau$ & Embedding-space inverse temperature \\
\midrule
$q_{s_k}$ & Per-branch projection head for prior space $s_k$ \\
$h_i^{s_k}$ & Aligned branch embedding $q_{s_k}(r_i^{s_k})\in\mathbb{R}^{D}$ \\
$H_i$ & Stack of all branch embeddings $\in\mathbb{R}^{K\times D}$ \\
$g$ & Gating network for fusion \\
$w_i^{s_k}$ & Fusion weight for branch $s_k$ on input $i$ \\
$w_i$ & Fusion weight sequence on input $i$ \\
$z_i$ & Fused molecular embedding of molecule $i$ \\
\midrule
$\mathcal{N}^{s_k}_{n}(x_i)$ & $n$ nearest neighbors of $x_i$ in prior space $s_k$ \\
$\mathcal{E}^{s_k}_{n}(y)$ & Label energy on prior-space $n$NN graph \\
$\rho^{s_k}$ & Normalized label-energy ratio for prior space $s_k$ \\
$Q$ & The set of labeled molecules used for diagnosis \\
\midrule
$f_{\mathrm{single}}$ & Single-molecule prediction head \\
$f_{\mathrm{pure}}$ & Pure-solvent solubility prediction head \\
$f_{\mathrm{mix}}$ & Mixed-solvent solubility prediction head \\
$x_{\mathrm{sol}}$ & Solute molecule \\
$x_{\mathrm{solv}}$ & Pure solvent molecule \\
$x_1,x_2$ & Two solvents in a binary solvent mixture \\
$z_{\mathrm{sol}}$ & Fused solute embedding \\
$z_{\mathrm{solv}}$ & Fused pure-solvent embedding \\
$z_1,z_2$ & Fused embeddings of the two mixture solvents \\
$\alpha_1,\alpha_2$ & Mole fractions of the two mixture solvents \\
$z_{\mathrm{mix}}$ & Composition-weighted solvent-mixture embedding \\
$T$ & Temperature \\
$h_T$ & Temperature embedding network \\
$t=h_T(T)$ & Temperature embedding \\
\bottomrule
\end{tabular}
\end{table}

\clearpage
\section{Related Work}
\label{sec:related}

We discuss three lines of work most relevant to PRISM.

\textbf{Molecular self-supervised learning.}
Molecular SSL commonly defines pretraining signals through contrastive learning, generative pretraining, multi-view learning, or latent representation learning~\cite{cfree_neurips_2025}.
Contrastive and multi-view methods construct reference neighborhoods from augmentations, alternative molecular views, or paired representations~\cite{molclr,hu2023your,SMICLR,moon_3d_2023}, while generative and denoising methods use masking, reconstruction, or corruption kernels~\cite{vincent2011connection,Chemformer,SubgDiff}.
Recent molecular SSL further moves beyond single negative/positive-pair construction by using multi-similarity signals, fingerprint-based similarity, or persistent homology to provide richer neighborhood, ranking, or topological supervision~\cite{xu2024graphmultisimilaritylearningmolecular,park-etal-2024-moleco,luo2023improving}.
{ReLMole~\citep{10.1021/acs.jcim.2c00798} further uses atom-level and functional-group-level graph similarities to guide contrastive molecular pretraining.}
Large-scale molecular foundation models scale these pretext mechanisms and often incorporate 3D conformational information~\cite{unimol2_neurips_2024,mole_natcomm_2024}.
PRISM belongs to this relational view of SSL but changes the source and organization of the relational target: reference neighborhoods are induced by expert-defined chemical priors, and heterogeneous prior geometries are preserved in separate branches before downstream fusion.
Thus, PRISM turns chemically meaningful prior spaces into label-free relational teachers, rather than relying only on augmentation, reconstruction, or fused similarity targets.

\textbf{Molecular relational learning and solubility modeling.}
Molecular relational learning explicitly models relations among molecules, including drug--drug interactions, solute--solvent interactions, molecular complexes, and higher-order molecular sets~\cite{lee2023cgib,lee2023cmrl,zhong2024metddi,ye2024hypergraph}.
Solubility-specific models further encode solute--solvent or mixture interactions through solvent recommendation, merged molecular graphs, hypergraphs, or multicomponent architectures~\cite{DaVia2024GSKSolubility,wollschlager2024solvmate,ramani2024molmerger,du2025mmhnn,jung2025multicomponent}.
Recent binary- and multicomponent-solvent benchmarks also highlight the need to evaluate prediction under unseen solvent environments~\cite{bao2024binarysolubility,rajaonson2025chemixhub}.
These methods are strong supervised relation predictors; PRISM is complementary, pretraining reusable molecular encoders from unlabeled molecules through prior-induced relational supervision.

\textbf{Descriptor-based chemical spaces.}
Molecular descriptors and fingerprints provide interpretable chemical coordinate systems for physicochemical surfaces, substructures, pharmacophores, and 3D conformation~\cite{hansen2007hansen,vsa,bajusz2015tanimoto}.
They are commonly used as inputs, auxiliary features, reconstruction targets, or visualization tools~\cite{Defining-and-Exploring,Exploring-and-mapping}.
PRISM uses them differently: descriptors and fingerprints are not appended to downstream predictors or reconstructed pointwise; they define population-level stochastic neighborhoods for pretraining.
Because different priors induce different neighborhoods over the same molecules~\cite{Diverse,Similar}, PRISM keeps prior-specific branches and fuses them only downstream, enabling chemically structured relational supervision with interpretable prior-level contributions.

\clearpage
\section{Datasets, Splits, and Evaluation Protocols}
\label{app:datasets}

This section details the datasets, split protocols, leakage controls, and evaluation metrics used throughout the paper.
Unless otherwise noted, molecules are identified by canonical SMILES strings.

\subsection{Dataset Overview}
\label{app:datasets:overview}

We evaluate PRISM on three families of downstream tasks.
First, we use standard MoleculeNet benchmarks to assess molecular representation quality under scaffold-disjoint molecular property prediction.
Second, we evaluate pure-solvent solubility prediction, where each example contains a solute, a solvent, temperature, and a solubility label.
Third, we evaluate mixed-solvent solubility prediction, where each example contains a solute, two solvents, mixture composition, temperature, and a solubility label.

Table~\ref{tab:dataset_summary} summarizes the datasets used in this work.
The unlabeled ZINC250k subset is used only for PRISM pretraining.
All labeled downstream datasets are used only for supervised fine-tuning and evaluation.

\begin{table}[h]
\centering
\small
\setlength{\tabcolsep}{5pt}
\caption{Summary of datasets used in this work.}
\label{tab:dataset_summary}
\resizebox{0.99\textwidth}{!}{%
\begin{tabular}{lcccl}
\toprule
Dataset & Task & \#Samples & Split & Metric \\
\midrule
ZINC250k & pretraining & $\sim$250k & --- & --- \\
\midrule
BBBP & classification & 2,039 & scaffold 8:1:1 & ROC-AUC $\uparrow$ \\
BACE & classification & 1,513 & scaffold 8:1:1 & ROC-AUC $\uparrow$ \\
HIV & classification & 41,127 & scaffold 8:1:1 & ROC-AUC $\uparrow$ \\
\midrule
ESOL & regression & 1,128 & scaffold 8:1:1 & RMSE $\downarrow$ \\
FreeSolv & regression & 642 & scaffold 8:1:1 & RMSE $\downarrow$ \\
Lipo & regression & 4,200 & scaffold 8:1:1 & RMSE $\downarrow$ \\
\midrule
CYP2D6 & classification & 13,130 & scaffold 8:1:1 & ROC-AUC $\uparrow$ \\
KOR & classification & 7,963 & scaffold 8:1:1 & ROC-AUC $\uparrow$ \\
\midrule
BigSolDB\citep{attia_data-driven_2025} & pure-solvent solubility & 41,724 & solute-group 95/5 & RMSE / $\%|\Delta\log S|\leq1$ \\
SolProp\citep{Vermeire} & pure-solvent solubility & 5,230 & external test & RMSE / $\%|\Delta\log S|\leq1$ \\
Leeds\citep{Leeds} & pure-solvent solubility & 1,294 & external test & RMSE / $\%|\Delta\log S|\leq1$ \\
\midrule
Mixed-solvent~\cite{bao2024binarysolubility} & binary-mixture solubility & 26,730 & random 8:1:1 / pair-disjoint & RMSE / $\%|\Delta\log S|\leq1$ \\
\bottomrule
\end{tabular}%
}
\end{table}

\subsection{MoleculeNet Property Benchmarks}
\label{app:datasets:molnet}

We use MoleculeNet benchmarks to evaluate structural generalization under scaffold-disjoint splits.
The main text focuses on three regression tasks that are most directly related to solubility and solvent-relevant physicochemical behavior: ESOL, FreeSolv, and Lipophilicity.
Additional classification benchmarks, including BBBP, BACE, and HIV, are reported in the appendix to provide a broader view of representation quality beyond solubility-centered regression.

Each sample is represented by a canonical SMILES string, and the target corresponds to an experimentally measured or curated molecular property.

\paragraph{Regression tasks emphasized in the main text.}
\textbf{ESOL} predicts aqueous solubility from molecular structure.
\textbf{FreeSolv} predicts hydration free energy of small molecules in water.
\textbf{Lipophilicity} predicts experimental lipophilicity, reflecting hydrophobicity and partitioning behavior.
These tasks are used as the primary MoleculeNet structural-OOD evaluation because they are closest to the solubility-centered scope of this work.

\paragraph{Additional classification tasks.}
\textbf{BBBP} is a binary classification task indicating whether a compound can penetrate the blood--brain barrier.
\textbf{BACE} is a binary classification task for inhibition activity against the $\beta$-secretase 1 target.
\textbf{HIV} is a binary classification task indicating whether a compound shows activity in HIV replication inhibition assays.
We include these tasks in the extended appendix results to assess whether PRISM also provides useful structural-OOD representations outside solubility-related regression.

\paragraph{Split protocol.}
For all MoleculeNet benchmarks, we use scaffold splits with an 8:1:1 train/validation/test ratio.
This protocol evaluates generalization across scaffold-disjoint chemical series rather than random interpolation among closely related molecules.
We report mean $\pm$ standard deviation across three random seeds.
For classification tasks, we report ROC-AUC; for regression tasks, we report RMSE.

\paragraph{Note on MoleculeNet evaluation.}
We acknowledge known limitations of MoleculeNet evaluation, including scaffold-split inconsistency, benchmark leakage, and seed sensitivity~\cite{ash2025jcim}.
Accordingly, MoleculeNet is used as supplementary structural-OOD evidence.
The main empirical support for PRISM comes from solubility-centered external transfer and mixed-solvent solvent-pair-disjoint evaluation.

\subsection{Additional Bioactivity Benchmarks for Prior Generality}
\label{app:datasets:bioactivity}

In addition to the six MoleculeNet benchmarks used in the main comparison, we include two additional bioactivity datasets, \textbf{CYP2D6} \citep{Veith2009CYPSelectivity} and \textbf{KOR}, in the prior-generality analyses.
These datasets are not used as primary benchmark tasks in the main performance table.
They are used to test whether PRISM remains useful when the prior space is changed, corrupted, or selected using the label-energy diagnostic.

\paragraph{CYP2D6.}
CYP2D6 is a small-molecule bioactivity classification task related to cytochrome P450 2D6, with 13,130 samples after canonicalization and deduplication.
It provides a pharmacologically relevant setting where functional-group and pharmacophore information can be more directly relevant than generic physicochemical descriptors.

\paragraph{KOR.}
KOR is a small-molecule bioactivity classification task associated with the kappa opioid receptor, with 7,963 samples.
We include it as an additional receptor-level activity task to evaluate whether pharmacophore-informed priors provide useful relational supervision beyond standard MoleculeNet tasks.

\paragraph{Usage in this work.}
CYP2D6 and KOR are used only in the supplementary prior-generality experiments, including Gaussian random prior degradation and dynamic prior selection with Pharm2D.
For these datasets, we follow the same supervised fine-tuning protocol as the MoleculeNet classification tasks and report ROC-AUC, averaged over three seeds.

\subsection{Pure-Solvent Solubility Datasets}
\label{app:datasets:pure_solvent}

Pure-solvent solubility prediction evaluates whether a model can predict solute solubility across different solvents and temperatures.
Each record is $\langle x_{\mathrm{sol}}, x_{\mathrm{solv}}, T, \log S\rangle$, where $T$ is temperature in units of K and $\log S$ is the experimental solubility label.

\paragraph{BigSolDB.}
We use BigSolDB as the training and validation source.
BigSolDB contains experimental solubility measurements across diverse organic solvents and temperatures.
To evaluate solute extrapolation, we group all records by canonical solute SMILES and assign each solute group entirely to either training or validation.
We use 95\% of the filtered BigSolDB data for training and 5\% for validation/model selection.

\paragraph{SolProp and Leeds.}
We use SolProp and Leeds as external test sets.
They are not used for training, validation, or hyperparameter selection.
Before training on BigSolDB, we remove all BigSolDB records whose canonical solute SMILES appear in either SolProp or Leeds.
This overlap-removal step ensures that external evaluation measures extrapolate to solutes not seen during training.

\subsection{Mixed-Solvent Solubility Dataset}
\label{app:datasets:mixed_solvent}

For mixed-solvent solubility prediction, we use the binary-solvent drug solubility dataset introduced by Bao et al.~\cite{bao2024binarysolubility}, which is also included as the drug-solubility task in CheMixHub~\cite{rajaonson2025chemixhub}.
This dataset was curated from the literature to support the prediction of small-molecule solubility in binary solvent mixtures under varying temperatures. 

Each record is
\begin{equation}
    \langle x_{\mathrm{sol}}, x_1, x_2, \alpha_1, \alpha_2, T, \log S \rangle,
\end{equation}
where $x_{\mathrm{sol}}$ is the solute, $x_1$ and $x_2$ are the two solvents, $\alpha_1$ and $\alpha_2$ are their mole fractions ($\alpha_1+\alpha_2=1$), $T$ is temperature, and $\log S$ is solubility.
The original dataset standardizes solubility values to $\log S$, which denotes the base-10 logarithm of $S$ with $S$ measured in g/100g, and converts reported solvent ratios into consistent mixture-composition representations. 
Following the CheMixHub curation convention, we use mole fractions as composition variables in all mixed-solvent experiments.

After cleaning, deduplication, and retaining samples with valid molecular identifiers and mole-fraction fields, our working mixed-solvent dataset contains 26,730 experimental samples.
It covers 125 unique solutes, 44 unique solvents, 110 unique binary solvent pairs, and 377 unique solute--binary-solvent systems.
The temperature range is 252.55--368.15 K, the mole-fraction range is 0.0--1.0, and the $\log S$ range is $-8.85$ to $-0.02$.

Because the dataset includes endpoint compositions ($\alpha=0$ or $1$), we additionally check exact endpoint duplicates defined by solute, effective pure solvent, temperature, and label. No exact endpoint record is shared across train/validation/test splits under the solvent-pair-disjoint protocol.

\paragraph{Why this dataset.}
Mixed-solvent solubility is a natural testbed for PRISM because the target depends not only on the solute and each solvent individually, but also on solvent--solvent and solute--environment interactions.
The dataset therefore evaluates whether prior-induced molecular representations remain useful in a multi-molecular setting where composition and temperature modulate the effective solvent environment.

\paragraph{Dataset statistics.}
Table~\ref{tab:mixed_stats} summarizes the processed mixed-solvent dataset.

\begin{table}[h]
\centering
\caption{Statistics of the processed mixed-solvent solubility dataset.}
\label{tab:mixed_stats}
\begin{tabular}{ll}
\toprule
\textbf{Metric} & \textbf{Value} \\
\midrule
Total samples & 26,730 \\
Unique solutes & 125 \\
Unique solvents & 44 \\
Unique binary solvent pairs & 110 \\
Unique solute--binary-solvent systems & 377 \\
Temperature range (K) & 252.55--368.15 \\
Mole fraction range & 0.0--1.0 \\
Solubility range, $\log S$ & $-8.85$--$-0.02$ \\
Target unit & $\log(S)$, $S$ in g/100g \\
\bottomrule
\end{tabular}
\end{table}

\subsection{Splits and Leakage Control}
\label{app:datasets:leakage}

\paragraph{MoleculeNet scaffold splits.}
For MoleculeNet benchmarks, we use scaffold-disjoint 8:1:1 train/validation/test splits.
This split evaluates generalization across molecular scaffolds and is used for all MoleculeNet classification and regression experiments.

\paragraph{Pure-solvent solute-level split.}
For BigSolDB, all records associated with the same canonical solute SMILES are assigned to the same split.
This prevents the same solute from appearing in both training and validation under different solvent or temperature conditions.
For external evaluation, we additionally remove from BigSolDB all solutes that appear in SolProp or Leeds.

\paragraph{Mixed-solvent random split.}
For the mixed-solvent dataset, we first consider a sample-level random split with an 8:1:1 train/validation/test ratio.
This split follows the same random-split principle as the default random benchmark setting in CheMixHub, but uses the 8:1:1 ratio adopted in our experiments.
All samples are randomly shuffled with a fixed seed and then assigned to training, validation, and test sets.
This split is not solvent-pair-disjoint, so the same binary solvent pair may appear across training and test sets.
We report this split as an in-distribution benchmark rather than as a strict OOD evaluation.

\paragraph{Mixed-solvent solvent-pair canonicalization.}
A binary solvent mixture should be invariant to the ordering of the two solvents.
To prevent representing the same physical mixture as either $(A,B)$ or $(B,A)$, we canonicalize each solvent pair by sorting the two canonical solvent SMILES:
\begin{equation}
    \mathrm{pair}(s_1,s_2)
    =
    (\min(s_1,s_2), \max(s_1,s_2)).
\end{equation}
All pair-level split keys are computed using this canonicalized representation.

\paragraph{Mixed-solvent solvent-pair-disjoint split.}
We further evaluate a stricter solvent-pair-disjoint split, designed to test generalization to unseen binary solvent environments.
We collect the set of unique canonical solvent-pair keys $\mathcal{P}$, randomly shuffle them with a fixed seed, and split them into train/validation/test subsets.
A sample is assigned to a split only through its canonical solvent-pair key:
\begin{equation}
    (x_{\mathrm{sol}},s_1,s_2,\alpha_1,\alpha_2,T,y)
    \in \mathcal{D}_{\mathrm{train}}
    \quad
    \Longleftrightarrow
    \quad
    \mathrm{pair}(s_1,s_2) \in \mathcal{P}_{\mathrm{train}},
\end{equation}
and analogously for validation and test.
This protocol guarantees that no binary solvent pair appears in more than one split.
Because different solvent pairs have different numbers of measurements, the final sample-level proportions may deviate from the nominal pair-level proportions.
Across repeated runs, the solvent-pair split is resampled with different random seeds.

\paragraph{Interpretation of the two mixed-solvent splits.}
The random split measures whether the model can learn the overall solubility landscape under the same data distribution used for training.
The solvent-pair-disjoint split measures environment-OOD generalization, where the model must predict solubility for binary solvent pairs not observed during training.
We treat the random split as the standard in-distribution benchmark and the solvent-pair-disjoint split as the main OOD stress test.

\paragraph{Relation to CheMixHub splitting protocols.}
CheMixHub defines four benchmark splitting strategies for chemical-mixture property prediction: random cross-validation, mixture-size splits, leave-molecules-out splits, and temperature splits.
Its default random benchmark uses 5-fold random cross-validation with a 70/10/20 ratio.
Our mixed-solvent random split follows the same sample-level random-split principle but uses the 8:1:1 ratio adopted throughout our experiments.
Our solvent-pair-disjoint split differs from leave-molecules-out: leave-molecules-out removes individual chemical components from training, whereas our split removes entire binary solvent environments.
We do not use mixture-size splits because the drug-solubility dataset is restricted to solute--binary-solvent systems and does not provide meaningful variation in mixture size.

\subsection{OOD Split Visualization and Diagnostics}
\label{app:ood_visualization}
We provide additional diagnostics to clarify the distribution shifts induced by the evaluation splits.

\paragraph{Structural OOD under scaffold splits.}
Figure~\ref{fig:ood_freesolv_scaffold} visualizes the FreeSolv scaffold split in Morgan-fingerprint PCA space and compares the corresponding hydration-free-energy distributions.
The scaffold split induces visible structural shift: validation and test molecules occupy several fingerprint-space regions that are sparse in the training set.
At the same time, FreeSolv also exhibits non-negligible label-distribution differences across splits, which is expected given its small sample size.
This supports our treatment of FreeSolv as a boundary case: its evaluation reflects scaffold-level structural generalization, but performance is also affected by limited data, label-range imbalance, and hydration-specific physics that may require 3D information.

\begin{figure}[h]
    \centering
    \includegraphics[width=0.95\linewidth]{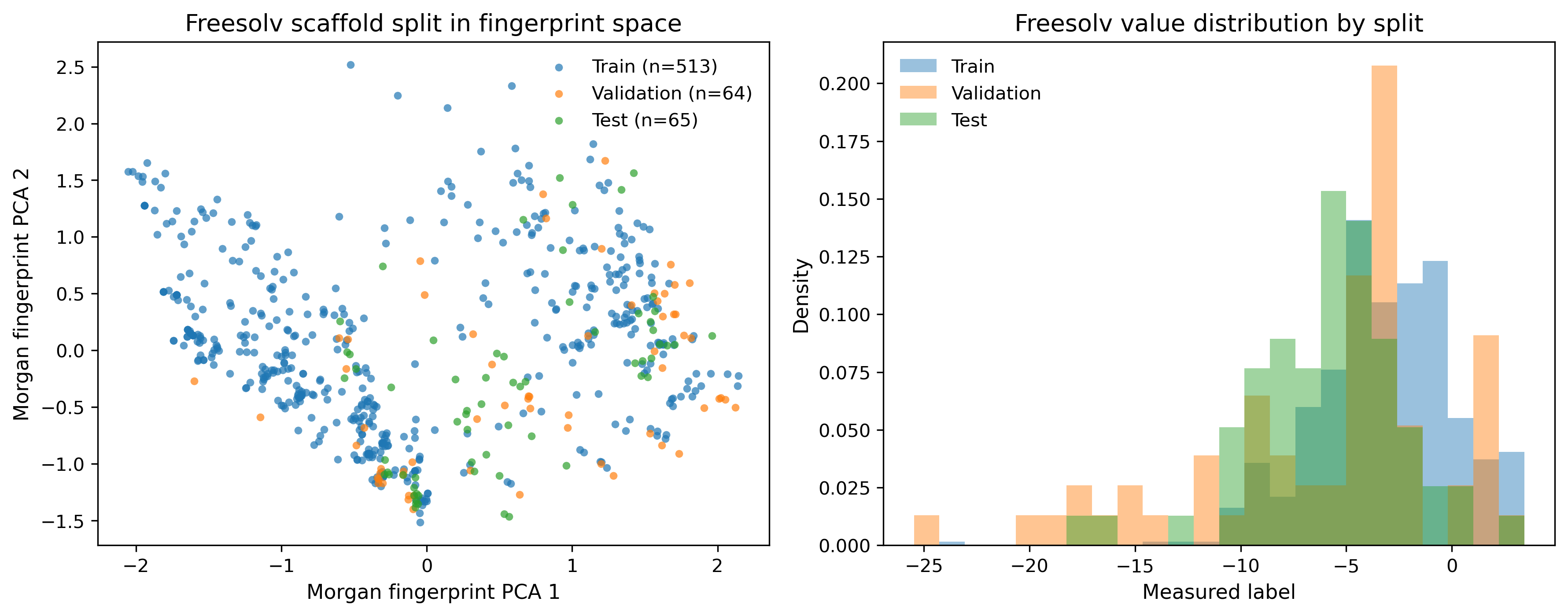}
    \caption{
    \textbf{FreeSolv scaffold split visualization.}
    }
    \label{fig:ood_freesolv_scaffold}
\end{figure}

\paragraph{Solute and solvent OOD in pure-solvent transfer.}
For pure-solvent solubility, the external-transfer setting from BigSolDB to SolProp and Leeds follows the dataset analysis in Attia et al.~\citep{attia_data-driven_2025}.
Their study shows that these external test sets contain solute and solvent shifts relative to the BigSolDB training source, making them suitable for evaluating extrapolation beyond seen solute--solvent combinations.
In our protocol, all BigSolDB records whose solute appears in SolProp or Leeds are removed before training, so every external-test solute is unseen during supervised fine-tuning.
SolProp contains both seen- and unseen-solvent cases.

\paragraph{Unseen solvent-pair OOD in mixed-solvent prediction.}
Figure~\ref{fig:ood_pair_diagnostics} provides diagnostics for the solvent-pair-disjoint mixed-solvent split.
By construction, each binary solvent pair is canonicalized by sorting the two canonical solvent SMILES and is assigned exclusively to train, validation, or test.
We explicitly verify that the canonical pair-key intersections are empty:
$\mathcal{P}_{\mathrm{train}}\cap\mathcal{P}_{\mathrm{val}}=\emptyset$,
$\mathcal{P}_{\mathrm{train}}\cap\mathcal{P}_{\mathrm{test}}=\emptyset$, and
$\mathcal{P}_{\mathrm{val}}\cap\mathcal{P}_{\mathrm{test}}=\emptyset$.
Thus, every validation or test example contains a binary solvent environment absent from training.

The marginal distributions in Figure~\ref{fig:ood_pair_diagnostics} show that log-solubility, temperature, and solvent composition remain broadly overlapping across train, validation, and test sets.
Therefore, the solvent-pair-disjoint split is not primarily a temperature-extrapolation or composition-extrapolation benchmark.
Rather, it evaluates whether a model can generalize to unseen binary solvent environments while remaining within broadly comparable label, temperature, and composition support.

\begin{figure}[t]
    \centering
    \includegraphics[width=0.95\linewidth]{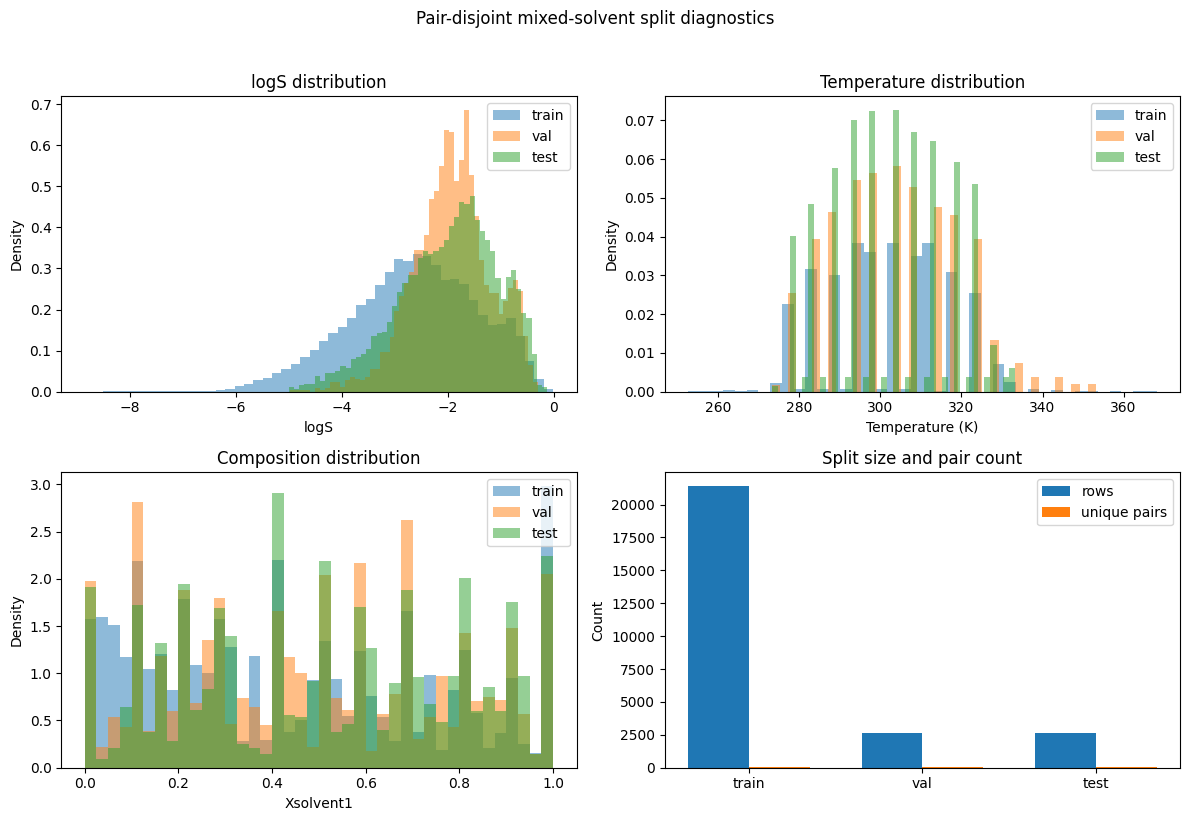}
    \caption{
    \textbf{Diagnostics for the solvent-pair-disjoint mixed-solvent split.}
    }
    \label{fig:ood_pair_diagnostics}
\end{figure}

\subsection{Evaluation Metrics and Repeated-Run Protocol}
\label{app:datasets:metrics}

For classification tasks, we report ROC-AUC.
For regression tasks, we report RMSE.
For solubility prediction, we additionally report
\begin{equation}
    \%|\Delta \log S| \leq 1
    =
    \frac{1}{N_\mathrm{test}}
    \sum_{i=1}^{N_\mathrm{test}}
    \mathbf{1}\{|\hat y_i - y_i|\leq 1\}\times 100\%.
\end{equation}

Unless otherwise specified, we report mean $\pm$ standard deviation across three random seeds.
For split-dependent evaluations, the split is regenerated under the corresponding constraint for each seed.
For the mixed-solvent random split, each seed corresponds to a new sample-level random 8:1:1 split; for the solvent-pair-disjoint split, each seed corresponds to a new split over canonical solvent-pair keys.
Thus, the reported standard deviations capture run-to-run variation including both initialization and split resampling.

\paragraph{Limitations of three-seed evaluation.}
We acknowledge that three seeds is the minimum statistical protocol and may underestimate true variance, particularly on small datasets such as FreeSolv.
This is a known limitation of standard MoleculeNet protocols~\cite{ash2025jcim}.
We mitigate this by reporting full standard deviations and by emphasizing the larger and more stable solubility benchmarks for the main empirical claims.

\subsection{Baseline Selection and Comparison Protocols}
\label{app:baseline_protocols}

Baseline sets differ across experiments because the evaluated tasks require different input structures and support different native model capabilities.
We select baselines according to five criteria: input compatibility or transparent adaptability, protocol compatibility, established competitiveness, learning-paradigm coverage, and the ability to test a specific alternative explanation for PRISM's gains.
We distinguish controlled comparisons from broader capability-matched and domain-level comparisons.
Table~\ref{tab:baseline_protocols} summarizes the resulting comparison groups and protocols.

\begin{table*}[t]
\centering
\scriptsize
\setlength{\tabcolsep}{3pt}
\renewcommand{\arraystretch}{1.05}
\caption{
\textbf{Baseline selection and comparison protocols.}
``Reported'' denotes values taken from prior studies under compatible published protocols, while ``Reproduced'' denotes baselines evaluated in our experiments.
}
\label{tab:baseline_protocols}

\begin{tabularx}{\textwidth}{
>{\raggedright\arraybackslash}p{0.145\textwidth}
>{\raggedright\arraybackslash}p{0.135\textwidth}
>{\raggedright\arraybackslash}p{0.155\textwidth}
>{\raggedright\arraybackslash}X
>{\raggedright\arraybackslash}p{0.235\textwidth}
>{\centering\arraybackslash}p{0.065\textwidth}
}
\toprule
Setting
& Comparison type
& Baseline group
& Selection rationale
& Protocol
& Source \\
\midrule

\textbf{Single-molecule prediction}
& Supervised control
& Supervised GNNs
& Tests whether relational pretraining improves over direct supervised learning from molecular graphs.
& One molecular graph as input; the full model is trained on the task-specific split.
& Reported \\

\textbf{Single-molecule prediction}
& Generic molecular SSL
& 2D and geometry-aware pretrained models
& Compares scientific relational supervision with established molecular pretraining paradigms.
& Respective pretrained encoders with task-specific downstream training.
& Reported \\

\textbf{Single-molecule prediction}
& Direct-prior control
& Traditional ML with fingerprints and VSA priors
& Tests whether gains can be explained by directly supplying the informative prior features.
& Concatenated raw prior features; no relational pretraining.
& Reproduced \\

\midrule

\textbf{External pure-solvent transfer}
& Protocol-matched supervised
& Fastprop, Chemprop, Vermeire
& Compares PRISM with established supervised solute--solvent predictors under external transfer.
& Trained on cleaned BigSolDB and tested directly on Leeds and SolProp.
& Reported \\

\textbf{External pure-solvent transfer}
& Large-scale pretrained
& MolFormer, ChemBERTa
& Tests whether generic large-scale pretraining alone explains transfer performance.
& Pretrained molecular encoders are fine-tuned on BigSolDB and evaluated unchanged on Leeds and SolProp.
& Reported \\

\textbf{External pure-solvent transfer}
& Direct-prior control
& XGBoost
& Tests whether relational pretraining adds transfer value beyond direct access to the same priors.
& Raw prior features for solute and solvent together with temperature; trained on BigSolDB and tested on Leeds and SolProp.
& Reproduced \\

\midrule

\textbf{Mixed-solvent pair-disjoint OOD}
& Controlled architecture
& Same-backbone D-MPNN aggregation controls
& Isolates relational pretraining from backbone capacity and downstream aggregation.
& Same D-MPNN backbone and five pair-disjoint splits; models are trained end-to-end without molecular pretraining.
& Reproduced \\

\textbf{Mixed-solvent pair-disjoint OOD}
& Capability-matched interaction
& Triple-CIGIN, Triple-MMGNN
& Tests whether specialized end-to-end interaction models can learn transferable solvent-pair relations.
& Full models are trained and evaluated on the same five pair-disjoint splits.
& Reproduced \\

\textbf{Mixed-solvent pair-disjoint OOD}
& Lightweight frozen SSL
& ChemBERTa, MolCLR, GROVER
& Tests whether PRISM's advantage is specific to relational representations rather than frozen molecular pretraining in general.
& Encoders are frozen; matched downstream pooling and prediction modules are trained on the same five splits.
& Reproduced \\

\textbf{Mixed-solvent pair-disjoint OOD}
& Direct-prior control
& XGBoost and prior-input MLP
& Tests whether flexible supervised models can match PRISM through direct access to the prior features.
& Raw prior features for the solute and two solvents together with composition and temperature; evaluated on the same five pair-disjoint splits.
& Reproduced \\

\midrule

\textbf{Pure/binary/ternary mixture evaluation}
& Domain-level comparison
& Cenci-style PLS and UNIFAC-based models
& Tests whether PRISM adds value beyond descriptor-based and thermodynamic approaches and whether their information is complementary.
& Dataset-specific compatible protocols; PRISM is used only in representation-based or combined-model variants.
& Reproduced \\

\bottomrule
\end{tabularx}

\vspace{-1mm}
\end{table*}

\paragraph{Source note.}
The GROVER result in the single-molecule comparison is taken from Fang et al.~\citep{fang_knowledge_2023}, rather than directly from the original GROVER paper.

\clearpage
\section{Chemical Prior Spaces}
\label{app:priors}

This section describes the chemical prior spaces used in PRISM.
The default PRISM configuration uses three VSA descriptor families and one Morgan fingerprint prior.
We additionally evaluate a Pharm2D pharmacophore prior and a 3D shape prior in supplementary generality experiments; these auxiliary priors are not part of the default four-branch model unless explicitly stated.

\subsection{Overview of Prior Spaces}
\label{app:priors:overview}

PRISM uses expert-defined chemical spaces as relational teachers.
Each prior space defines a chemically meaningful notion of molecular similarity, and the encoder is trained to preserve the corresponding neighborhood structure in the learned embedding space.
The goal is not to copy descriptor values directly, but to preserve the relative organization induced by a chosen chemical view.

The prior spaces used in this work fall into five categories:
\textbf{(i) VSA descriptor priors}, including \texttt{SMR\_VSA}, \texttt{SlogP\_VSA}, and \texttt{PEOE\_VSA};
\textbf{(ii) Morgan fingerprint priors}, encoding local substructure similarity;
\textbf{(iii) Pharm2D priors}, encoding pharmacophore-level arrangements of chemical features;
\textbf{(iv) 3D shape priors}, encoding global molecular shape and spatial mass distribution;
\textbf{(v) Gaussian random priors}, used only as a negative control.

The first two categories form the default PRISM prior set.
Pharm2D and 3D priors are used to evaluate whether PRISM is prior-agnostic and whether task-dependent prior selection can identify useful non-default priors.
The Gaussian random prior is used to test degradation under uninformative priors.

\subsection{VSA Descriptor Families}
\label{app:priors:vsa}

We use three van der Waals surface area (VSA) descriptor families: \texttt{SMR\_VSA}, \texttt{SlogP\_VSA}, and \texttt{PEOE\_VSA}.
For a molecule with atoms $a=1,\dots,N$, each atom is assigned an approximate surface-area contribution $A_a$ and an atom-level property value $q_a$.
Atoms are then assigned to predefined bins according to $q_a$, and the descriptor coordinate for bin $k$ is
\begin{equation}
    v_k
    =
    \sum_{a:\, q_a \in [b_k,b_{k+1}]} A_a .
\end{equation}
Each coordinate measures how much molecular surface area is contributed by atoms whose property value falls into a particular range.

\paragraph{\texttt{SlogP\_VSA}: hydrophobic surface partitioning.}
\texttt{SlogP\_VSA} bins atoms according to their Crippen logP fragment contributions and sums the surface area within each bin.
It provides a surface-weighted description of hydrophobic and lipophilic contributions.

\paragraph{\texttt{SMR\_VSA}: polarizability-related surface partitioning.}
\texttt{SMR\_VSA} bins atoms according to their Crippen molar refractivity contributions.
Molar refractivity is closely related to molecular size and electronic polarizability and is associated with dispersion-related interactions.

\paragraph{\texttt{PEOE\_VSA}: electrostatic surface partitioning.}
\texttt{PEOE\_VSA} bins atoms according to partial charges estimated by the partial equalization of orbital electronegativity procedure.
It summarizes how molecular surface area is distributed across negatively charged, neutral, and positively charged regions.
We therefore interpret \texttt{PEOE\_VSA} as an electrostatic surface prior rather than as a dispersion or polarizability prior.

This distinction is important for interpretation:
\texttt{SMR\_VSA} is associated with polarizability/dispersion-relevant surface information,
\texttt{SlogP\_VSA} with hydrophobicity/lipophilicity-related surface information, and
\texttt{PEOE\_VSA} with partial-charge/electrostatic surface information.

\subsection{Morgan FP as Structural Priors}
\label{app:priors:fp}

To complement the continuous physicochemical VSA priors, we use Morgan FP as a structural prior.
Morgan FP encode local molecular substructures by iteratively expanding atom-centered neighborhoods and hashing the resulting identifiers into a fixed-length bit vector.
Unless otherwise stated, we use radius $r=2$ and 2048 bits.

Unlike VSA descriptors, which summarize surface-weighted physicochemical properties, Morgan FP emphasize graph topology and discrete substructure motifs.
They capture information about ring systems, substituent patterns, local functional groups, and other structural fragments.
Two molecules can be structurally similar under fingerprints while differing in hydrophobic, polarizability, or electrostatic surface distributions, making the fingerprint prior a useful complement to the VSA priors.

\subsection{Pharm2D Pharmacophore Priors}
\label{app:priors:pharm2d}

We additionally evaluate a Pharm2D prior in the generality experiments.
Pharm2D fingerprints encode pharmacophore-level patterns, such as arrangements of hydrogen-bond donors, hydrogen-bond acceptors, aromatic features, hydrophobic features, and ionizable groups.
Compared with Morgan FP, which emphasize local graph substructures, Pharm2D captures a more interaction-oriented view of molecular similarity.

This prior is especially relevant for bioactivity tasks where ligand--target recognition depends on pharmacophore compatibility rather than only on local substructure similarity.
We therefore use Pharm2D in the dynamic prior-selection experiments on HIV, CYP2D6, and KOR (Appendix~\ref{app:extra_ablation:dynamic_selection}).
The purpose is not to claim that Pharm2D is universally better than the default priors, but to test whether PRISM can benefit from a task-relevant alternative when the label-energy diagnostic indicates stronger prior--label alignment.

\subsection{3D Shape Priors}
\label{app:priors:3d}

We also evaluate a compact 3D shape prior in the prior-agnostic and encoder-agnostic experiments.
The 3D prior uses standard RDKit shape descriptors covering complementary aspects of molecular geometry:
\textbf{global shape:} normalized principal moments \texttt{NPR1} and \texttt{NPR2};
\textbf{anisotropy and compactness:} \texttt{Asphericity}, \texttt{Eccentricity}, and \texttt{SpherocityIndex};
\textbf{spatial extent:} \texttt{RadiusOfGyration};
\textbf{principal-axis mass distribution:} \texttt{PMI1}, \texttt{PMI2}, and \texttt{PMI3}.

These descriptors provide a low-dimensional summary of molecular shape, elongation, compactness, and mass distribution.
They are used to test whether PRISM can align learned representations to a geometry-derived prior space.
For the 3D prior, conformers are generated from SMILES using RDKit; molecules for which conformer generation fails are excluded from the corresponding 3D-prior experiment but retained in all 2D-prior experiments.
The compatibility of PRISM with 3D-aware molecular encoders such as SchNet is discussed in Appendix~\ref{app:extra_ablation:schnet}.

\subsection{Distance Metrics and Standardization}
\label{app:priors:distances}

Each prior space is paired with a distance metric matching its representation type.

\paragraph{Continuous descriptor spaces.}
For VSA descriptors and 3D shape descriptors, we apply coordinate-wise standardization within each mini-batch before computing Euclidean distances.
For a mini-batch $B$ and a descriptor vector $\pi^{(s)}(x)$, we compute
\begin{equation}
    \widetilde{\pi}^{(s)}(x)
    =
    \frac{
    \pi^{(s)}(x) - \mu^{(s)}_B
    }{
    \sigma^{(s)}_B + \epsilon
    },
\end{equation}
where $\mu^{(s)}_B$ and $\sigma^{(s)}_B$ are the coordinate-wise mean and standard deviation over the current mini-batch, and $\epsilon$ is a small numerical constant.
The prior distance is then
\begin{equation}
    d_s(x_i,x_j)
    =
    \left\|
    \widetilde{\pi}^{(s)}(x_i)
    -
    \widetilde{\pi}^{(s)}(x_j)
    \right\|_2 .
\end{equation}
This batch-wise standardization prevents descriptor coordinates with large numerical ranges from dominating the neighborhood construction, while preserving relative variation within each descriptor coordinate in the current batch.

\paragraph{Binary or sparse fingerprint spaces.}
For Morgan FP and Pharm2D fingerprints, we use the Tanimoto similarity:
\begin{equation}
    d_s(x_i,x_j)
    =
    1 -
    \mathrm{Tanimoto}
    \left(
    \pi^{(s)}(x_i),
    \pi^{(s)}(x_j)
    \right).
\end{equation}

\paragraph{Learned embedding space.}
For the embedding-space comparator $\delta$, we use cosine similarity after $\ell_2$ normalization, as in the main text.

These choices follow standard practice for representation-consistent comparisons~\cite{bajusz2015tanimoto} and separate the metric used to define prior-space neighborhoods from the metric used to compare learned embeddings.

\subsection{Default and Auxiliary Prior Configurations}
\label{app:priors:configs}

The default PRISM configuration uses four prior branches:
\begin{equation}
    \{\texttt{SMR\_VSA},\ \texttt{SlogP\_VSA},\ \texttt{PEOE\_VSA},\ \texttt{Morgan FP}\}.
\end{equation}
This configuration is used for the main MoleculeNet, pure-solvent, and mixed-solvent experiments unless otherwise specified.

The auxiliary prior configurations are used only in supplementary experiments:
\textbf{Pharm2D prior:} dynamic prior-selection experiments on HIV, CYP2D6, and KOR (Appendix~\ref{app:extra_ablation:dynamic_selection});
\textbf{3D shape prior:} compatibility experiments with 3D backbones and 3D priors (Appendix~\ref{app:extra_ablation:schnet});
\textbf{Gaussian random prior:} negative control for uninformative prior geometry (Appendix~\ref{app:extra_ablation:gaussian}).

This separation avoids conflating two claims.
The default four-prior PRISM tests whether chemically interpretable 2D priors improve molecular representation learning.
The auxiliary prior experiments test whether PRISM is flexible with respect to the choice of prior space.

\subsection{Prior Quality Diagnostics}
\label{app:priors:quality}

Not every prior space is expected to be useful for every downstream task.
We use label energy as a diagnostic for prior--label alignment: a prior has low label energy when nearby molecules in that prior space tend to have similar labels.
The formal definition of raw label energy, the normalized label-energy ratio, and the prediction-theoretic interpretation are given in Appendix~\ref{app:theory:energy} (Proposition~\ref{prop:energy_knn_risk}).
We use label energy in two ways: (i) to compare chemically meaningful priors against a Gaussian random prior, and (ii) to select among candidate priors for bioactivity tasks (Appendix~\ref{app:extra_ablation:dynamic_selection}).

\clearpage
\section{PRISM Implementation Details}
\label{app:impl}

This section provides implementation details for PRISM pretraining and PRISM downstream adaptation.
We describe molecular featurization, mini-batch stochastic-neighborhood construction, locality scaling, the pretraining algorithm, fusion modes, task-specific prediction heads, hyperparameter selection, and computational cost.
Unless otherwise stated, the default PRISM configuration uses four prior-specialized branches corresponding to \texttt{SMR\_VSA}, \texttt{SlogP\_VSA}, \texttt{PEOE\_VSA}, and Morgan FP.

\subsection{D-MPNN Backbone and Molecular Featurization}
\label{app:impl:backbone}

All molecular graphs are constructed from canonical SMILES using RDKit.
Given an input SMILES string, we parse it into an RDKit \texttt{Mol} object and discard molecules that fail sanitization.
We keep RDKit's default perception of aromaticity, valence, and bond order.
Unless otherwise specified, we do not perform additional tautomer enumeration or salt stripping.
The same canonical SMILES strings are used for deduplication, split construction, and leakage control.

We use a directed message passing neural network (D-MPNN)~\cite{yang2019dmpnn} as the default molecular encoder, with hidden dimension 512.
For multi-prior PRISM, each prior space has its own encoder branch $G_{\theta_s}$.
This design allows each branch to specialize in preserving the relational geometry induced by one prior space.

We use the standard 2D atom and bond features adopted by Chemprop~\cite{yang2019dmpnn}: atomic number, total degree, formal charge, chiral tag, total number of attached hydrogens, hybridization, aromaticity, and scaled atomic mass for atoms; bond type, conjugation, ring membership, and stereo for bonds.
The atom feature dimension is 133, and the bond feature dimension is 14.
Full feature definitions are listed in Table~\ref{tab:featurization_summary}.

\begin{table}[h]
\centering
\caption{Atom and bond features used in our D-MPNN featurization. All features are 2D RDKit-derived.}
\label{tab:featurization_summary}
\setlength{\tabcolsep}{6pt}
\small
\begin{tabular}{lllc}
\toprule
\textbf{Level} & \textbf{Feature group} & \textbf{RDKit field} & \textbf{Dim.} \\
\midrule
Atom & Atomic number & \texttt{GetAtomicNum()} shifted by $-1$ & $101$ \\
Atom & Total degree & \texttt{GetTotalDegree()} & $7$ \\
Atom & Formal charge & \texttt{GetFormalCharge()} & $6$ \\
Atom & Chiral tag & \texttt{GetChiralTag()} & $5$ \\
Atom & Total \#H & \texttt{GetTotalNumHs()} & $6$ \\
Atom & Hybridization & \texttt{GetHybridization()} & $6$ \\
Atom & Aromaticity & \texttt{GetIsAromatic()} & $1$ \\
Atom & Scaled atomic mass & \texttt{GetMass()} $\times 0.01$ & $1$ \\
\midrule
Bond & Padding indicator & bond is None & $1$ \\
Bond & Bond type & single/double/triple/aromatic & $4$ \\
Bond & Conjugation & \texttt{GetIsConjugated()} & $1$ \\
Bond & Ring membership & \texttt{IsInRing()} & $1$ \\
Bond & Stereo & \texttt{GetStereo()} one-hot + UNK & $7$ \\
\bottomrule
\end{tabular}
\end{table}

\subsection{Mini-Batch Stochastic Neighborhood Construction}
\label{app:impl:batch}

Because PRISM uses a mini-batch approximation, batch size can affect neighborhood quality.
We tested batch-size sensitivity in a representative \texttt{SlogP\_VSA} setup.
Increasing batch size from 256 to 512 reduces the held-out KL loss after pretraining from 0.0102 to 0.0086, while downstream performance remains essentially unchanged.
We use $B=512$ as the default because it provides a slightly better neighborhood approximation while remaining feasible on a single RTX 3090, with peak memory around 75\%.
We do not claim that 512 is globally optimal; it is an effective, practical choice.

\subsection{Locality Scaling $\beta_s$ and Embedding Temperature $\tau$}
\label{app:impl:scales}

Different prior spaces have different distance scales.
For example, standardized VSA descriptors are continuous vectors, whereas Morgan FP are sparse binary vectors compared by Tanimoto distance.
To make neighborhood sharpness comparable across prior spaces, we scale the locality parameter $\beta_s$ using the median off-diagonal distance within the mini-batch:
\begin{equation}
    \widetilde{d}_s
    =
    \mathrm{median}
    \left\{
        d^{(s)}_{ij}:
        i\neq j,\;
        i,j\in\mathcal{B}
    \right\},
    \qquad
    \beta_s = \frac{1}{\gamma_s\widetilde{d}_s + \epsilon},
\label{eq:impl_beta_scale}
\end{equation}
where $\epsilon=10^{-8}$ is used for numerical stability.
The default setting is $\gamma_s = 0.25$.
This makes $\gamma_s$ a dimensionless locality control: smaller $\gamma_s$ produces a sharper reference neighborhood; larger $\gamma_s$ produces a softer one.
The embedding temperature is $\tau = \frac{1}{\beta_\tau}$.
Per-task hyperparameter values are reported in Section~\ref{app:impl:hyper}.

\subsection{Pretraining Algorithm}
\label{app:impl:pretrain}

Algorithm~\ref{alg:prism_pretrain} summarizes PRISM pretraining for one prior space.
Each prior branch is pretrained independently using its own prior-induced stochastic neighborhoods.
This avoids forcing a single encoder to simultaneously match heterogeneous prior geometries; the architectural justification for separate branches is given in Appendix~\ref{app:extra_ablation:shared_encoder}.

\begin{algorithm}[h]
\caption{PRISM pretraining for one prior space $s$.}
\label{alg:prism_pretrain}
\begin{algorithmic}[1]
  \State Sample mini-batch $\mathcal{B}$ with $|\mathcal{B}|=B$ molecules.
  \State Compute prior-space distances $d^{(s)}_{ij}$ for $i,j\in\mathcal{B}$.
  \State Compute $\widetilde{d}_s \leftarrow \mathrm{median}\{d^{(s)}_{ij}: i\neq j\}$ and set $\beta_s \leftarrow 1 / (\gamma_s\widetilde{d}_s + \epsilon)$.
  \State Compute reference neighborhoods $R_i^{(s)}$ for each anchor $i\in\mathcal{B}$ using Eq.~\eqref{eq:soft_neighborhoods}.
  \State Encode molecules with $G_{\theta_s}$, apply projection head and $\ell_2$ normalization.
  \State Compute learner neighborhoods $P_i^{(s)}$ for each anchor using Eq.~\eqref{eq:soft_neighborhoods}.
  \State Compute $\mathcal{L}^{(s)}_{\mathrm{align}}$ using Eq.~\eqref{eq:align_loss}.
  \State Update $G_{\theta_s}$ and the projection head by AdamW.
\end{algorithmic}
\end{algorithm}

Table~\ref{tab:hp_pretrain} reports the default PRISM pretraining hyperparameters.
Each prior branch is pretrained independently on the same ZINC250k corpus.
Unless otherwise specified, the same optimizer and schedule are used for all four default prior spaces.

\begin{table}[h]
\centering
\small
\caption{Reference hyperparameters for PRISM pretraining.}
\label{tab:hp_pretrain}
\setlength{\tabcolsep}{6pt}
\begin{tabular}{lc}
\toprule
Parameter & Value \\
\midrule
Pretraining corpus & ZINC250k \\
Prior branches & SMR\_VSA, SlogP\_VSA, PEOE\_VSA, Morgan FP \\
Backbone & D-MPNN \\
Encoder hidden dimension & 512 \\
Projection head & 2-layer MLP, 512--64, ReLU \\
Batch size & 512 \\
Optimizer & AdamW \\
Learning rate & $10^{-4}$ \\
Weight decay & $10^{-5}$ \\
Epochs & 100 \\
Embedding temperature $\tau$ & task-selected; default grid in Tables~\ref{tab:tau_gamma_cls}--\ref{tab:tau_gamma_pair} \\
Locality scale $\gamma_s$ & task-selected; default grid in Tables~\ref{tab:tau_gamma_cls}--\ref{tab:tau_gamma_pair} \\
Local scaling constant $\epsilon$ & $10^{-8}$ \\
Checkpoint selection & Lowest validation KL loss \\
\bottomrule
\end{tabular}
\end{table}

\subsection{Projection Heads and Fusion Modes}
\label{app:impl:fusion}

During downstream adaptation, PRISM combines representations from multiple prior-specialized branches.
For molecule $x_i$, each branch produces $r_i^{(s)} = G_{\theta_s}(x_i)$.
Before fusion, each branch embedding is passed through a two-layer projection module:
\begin{equation}
    h_i^{(s)}
    =
    W_2\,\sigma(W_1 r_i^{(s)} + b_1) + b_2
    \in \mathbb{R}^{D},
\end{equation}
where $\sigma$ is ReLU and both the input and output dimensions are $D = 512$.
The branch embeddings are stacked as
\begin{equation}
    H_i=[h_i^{(s_1)};\ldots;h_i^{(s_K)}]\in\mathbb{R}^{K\times D}.
\end{equation}
The aligned branch embeddings are then combined by weighted fusion to produce $z_i$.

We use the term \textbf{weighted fusion} throughout the paper.
This module is not a sparse mixture-of-experts routing: all branches remain active, and the learned weights represent soft preferences over prior spaces.

\paragraph{Task-specific weighted fusion.}
For tasks with global prior-space weights, PRISM learns a sample-independent logit vector
$\ell\in\mathbb{R}^{K}$ by logistic regression.
The branch weights are
\begin{equation}
    w^{(s_k)}
    =
    \frac{\exp(\ell_k)}
    {\sum_{m=1}^{K}\exp(\ell_m)} ,
\end{equation}
and the fused representation is
\begin{equation}
    z_i
    =
    \sum_{k=1}^{K}
    w^{(s_k)} h_i^{(s_k)} .
\end{equation}
This task-specific variant is used for classification and solute--solvent prediction, where a stable task-level combination of prior spaces is empirically more robust.

\paragraph{Input-adaptive weighted fusion.}
For single-molecule regression tasks, PRISM also supports molecule-specific fusion weights.
For each branch embedding $h_i^{(s_k)}$, a small gating MLP produces a branch logit:
\begin{equation}
    a_i^{(s_k)}
    =
    v_2^\top
    \sigma
    \left(
    W_1 h_i^{(s_k)} + b_1
    \right)
    + b_2 ,
\end{equation}
where the gating MLP has dimensions
$D \rightarrow \max(128,D/2) \rightarrow 1$.
We also add a learnable branch-specific bias $b^{(s_k)}$.
The input-adaptive fusion weights are
\begin{equation}
    w_i^{(s_k)}
    =
    \frac{
    \exp\left((a_i^{(s_k)}+b^{(s_k)})/\tau_f\right)
    }{
    \sum_{m=1}^{K}
    \exp\left((a_i^{(s_m)}+b^{(s_m)})/\tau_f\right)
    },
\end{equation}
where $\tau_f = 1$ is a learnable softmax temperature clamped to be positive in implementation.
The fused representation is then
\begin{equation}
    z_i
    =
    \sum_{k=1}^{K}
    w_i^{(s_k)} h_i^{(s_k)} .
\end{equation}
This input-adaptive variant is useful when the dominant physicochemical factor varies across molecules, as in ESOL, FreeSolv, and Lipophilicity.
A comparison of task-specific and input-adaptive fusion is provided in Appendix~\ref{app:extra_ablation:fusion}.

\paragraph{Regularization and branch-level representations.}
After weighted summation, we apply dropout and, when enabled, layer normalization to the fused representation.
The branch embedding stack $H_i\in\mathbb{R}^{K\times D}$ is retained for diagnostic analyses and ablation variants.
Each row of $H_i$ corresponds to one prior-aligned branch, preserving heterogeneous chemical perspectives before they are collapsed by weighted fusion.
The main mixed-solvent head uses only the fused embeddings $z_i$, keeping the evaluation focused on whether frozen PRISM-pretrained encoders can be compositionally reused through lightweight downstream fusion.

\paragraph{Interpretation of fusion weights.}
The default prior spaces are chemically related and partially redundant.
Therefore, fusion weights should be interpreted as soft preference reweighting rather than hard causal assignments.
A larger weight indicates that the corresponding branch is more useful for a task or instance, but it does not imply that other branches are irrelevant.
We provide faithfulness tests for the learned fusion weights in Appendix~\ref{app:interp:faithfulness}.

\subsection{Single-Molecule and Pure-Solvent Heads}
\label{app:impl:heads_single_pure}

\paragraph{Single-molecule prediction.}
For single-molecule tasks, the model first computes the fused molecular representation $z_i$ using either task-specific or input-adaptive weighted fusion.
The prediction is $\hat y_i = f_{\mathrm{single}}(z_i)$, where $f_{\mathrm{single}}$ is an MLP head.
For classification, the head outputs logits and is trained with binary cross-entropy.
For regression, the head outputs a scalar and is trained with mean squared error.

\paragraph{Pure-solvent solubility prediction.}
For pure-solvent solubility, each sample is $(x_{\mathrm{sol}}, x_{\mathrm{solv}}, T, y)$.
The solute and solvent are encoded by the same PRISM encoder:
\begin{equation}
    z_{\mathrm{sol}} = E_\Theta(x_{\mathrm{sol}}),
    \qquad
    z_{\mathrm{solv}} = E_\Theta(x_{\mathrm{solv}}),
    \qquad
    t = h_T(T) \in \mathbb{R}^{d_T},
\end{equation}
where $h_T$ maps standardized scalar temperature to an embedding.
Temperature is standardized using training-set statistics and mapped by a two-layer MLP([1 $\to$ $\frac{d_T}{2}$ $\to$ $d_T$ ]):
\begin{equation}
    h_T(T)=W_2\,\sigma(W_1\,\widetilde{T}+b_1)+b_2 .
\end{equation}
The prediction is $\hat y = f_{\mathrm{pure}}([z_{\mathrm{sol}}; z_{\mathrm{solv}}; t])$ as in the main text.

In our implementation, $d_T = 32$ and $f_{\mathrm{pure}}$ is an MLP with hidden sizes $(1024, 128, 64)$ and dropout 0.2.
We train with mean squared error using Adam.
Early stopping uses patience 50, and the learning-rate scheduler uses patience 30 with decay factor 0.5.

\subsection{Mixed-Solvent Head: Implementation Details}
\label{app:impl:heads_mixed}

For mixed-solvent prediction, we reload and freeze the original PRISM-pretrained encoders rather than the pure-solvent-fine-tuned branches.
Only the weighted-fusion module is initialized from the pure-solvent checkpoint and then trained together with the lightweight mixed-solvent head.
This initialization is used as a warm start for prior-space weighting, while the frozen encoders ensure that no molecular message-passing parameters are updated using mixed-solvent supervision.

Given a solute embedding $z_{\mathrm{sol}}$, two solvent embeddings $z_1,z_2$, mole fractions $\alpha_1,\alpha_2$ with $\alpha_1+\alpha_2=1$, and temperature embedding $t=h_T(T)$, the main mixed-solvent head constructs a composition-weighted solvent representation:
\begin{equation}
    z_{\mathrm{mix}}
    =
    \alpha_1 z_1 + \alpha_2 z_2,
    \qquad
    \hat y
    =
    f_{\mathrm{mix}}
    \left(
    [z_{\mathrm{sol}};z_{\mathrm{mix}};t]
    \right).
\end{equation}
The weighted-fusion module, temperature embedding $h_T$, and prediction head $f_{\mathrm{mix}}$ are trained in the default mixed-solvent protocol.
The PRISM-pretrained encoders remain frozen.

\paragraph{Architecture details.}
Temperature is standardized using training-set statistics and mapped by a two-layer MLP:
\begin{equation}
    h_T(T)=W_2\,\sigma(W_1\,\widetilde{T}+b_1)+b_2 .
\end{equation}
The temperature embedding dimension is $d_T=32$.
The prediction head $f_{\mathrm{mix}}$ is an MLP with two hidden layers of size 512 and dropout 0.1.

\paragraph{Training.}
We train with mean squared error using Adam, learning rate $10^{-4}$, batch size 64, and up to 100 epochs.
The best checkpoint is selected by validation MSE.
We report RMSE and $\%|\Delta\log S|\leq1$ on the test set.

\paragraph{Interpretation.}
This linear composition operation is not intended to fully model all non-ideal solvent--solvent interactions.
Instead, it provides a deliberately lightweight test of representation transfer: if PRISM has transferred chemically meaningful relational structure into the frozen encoder-branch representations, then lightweight fusion and composition-weighted pooling should be sufficient to support generalization to unseen solvent-pair environments.

\subsection{Three-Component Interaction Baselines for Mixed Solvents}
\label{app:triple_interaction_baselines}

CIGIN~\citep{pathak2020chemically} and MMGNN~\citep{du2024mmgnn} were originally developed for interactions between two molecular components.
To construct capability-matched interaction baselines for binary mixed-solvent prediction, we extend their respective interaction principles to a three-component system consisting of a solute $G_s$ and two solvents $G_1$ and $G_2$.
Both models are trained end-to-end on mixed-solvent labels without molecular pretraining.
The solvent composition $\alpha$ and standardized temperature $\widetilde{T}$ are separately mapped to environment embeddings,
\begin{equation}
e_{\mathrm{env}}
=
\left[
f_{\alpha}(\alpha);
f_T(\widetilde{T})
\right],
\end{equation}
which are supplied to the final prediction head.

\paragraph{Triple-CIGIN.}
CIGIN models intermolecular dependence by constructing interactions between atom-level representations of two molecules.
For the three-component setting, we generalize this idea to a joint interaction tensor over the solute and both solvents.
Separate MPNN encoders first produce atom representations
$H_s=\{h_i^s\}$, $H_1=\{h_j^1\}$, and $H_2=\{h_k^2\}$, with the same solvent encoder shared between the two solvent components.
After learned linear projections,
\begin{equation}
q_i^s=W_s h_i^s,\qquad
q_j^1=W_1 h_j^1,\qquad
q_k^2=W_2 h_k^2,
\end{equation}
we define the three-way interaction tensor
\begin{equation}
A_{ijk}
=
\frac{1}{\sqrt{d}}
\sum_{\ell=1}^{d}
q_{i\ell}^{s}q_{j\ell}^{1}q_{k\ell}^{2},
\end{equation}
where $i$, $j$, and $k$ index atoms in the solute, solvent~1, and solvent~2, respectively.

For each component, $A$ is normalized over atoms belonging to the other two components.
For the solute, for example,
\begin{equation}
P^{s}_{ijk}
=
\frac{\exp(A_{ijk})}
{\sum_{j',k'}\exp(A_{ij'k'})},
\qquad
m_i^s
=
\sum_{j,k}
P^{s}_{ijk}
\left(q_j^1\odot q_k^2\right).
\end{equation}
Analogous operations yield $m_j^1$ and $m_k^2$ for the two solvents.
The resulting interaction-enriched atom representations are
\begin{equation}
\widetilde h_i^s=[h_i^s;m_i^s],\qquad
\widetilde h_j^1=[h_j^1;m_j^1],\qquad
\widetilde h_k^2=[h_k^2;m_k^2].
\end{equation}
Each component is independently pooled with Set2Set to obtain
$r_s$, $r_1$, and $r_2$, and the final prediction is
\begin{equation}
\widehat y
=
f_{\mathrm{CIGIN}}
\left(
[r_s;r_1;r_2;e_{\mathrm{env}}]
\right).
\end{equation}

This construction extends the interaction-map principle of CIGIN to a genuine three-component interaction: the representation of each molecule is conditioned jointly on the other two components rather than on independent pairwise predictions.

\paragraph{Triple-MMGNN.}
MMGNN represents intermolecular interactions through a merged graph containing both covalent molecular structure and candidate intermolecular relations.
We retain this merged-graph and interaction-selection principle while using the same MPNN family as the other end-to-end mixed-solvent controls.
For a solute and two solvents, we construct
\begin{equation}
G_{\mathrm{merge}}^{(3)}
=
G_s\cup G_1\cup G_2
\cup E_{s1}^{\mathrm{inter}}
\cup E_{s2}^{\mathrm{inter}}
\cup E_{12}^{\mathrm{inter}},
\end{equation}
where the three intermolecular edge sets connect atoms belonging to different molecular components.
A learned component embedding is appended to each atom feature to identify membership in the solute, solvent~1, or solvent~2.

An initial merged-graph MPNN produces node states $h_v^{(0)}$.
The model then learns a soft node mask
\begin{equation}
m_v
=
\sigma\!\left(
f_{\mathrm{node}}(h_v^{(0)})
\right)
\end{equation}
and a soft edge mask
\begin{equation}
m_{uv}
=
\sigma\!\left(
f_{\mathrm{edge}}
([h_u^{(0)};h_v^{(0)};e_{uv}])
\right).
\end{equation}
Covalent edges are retained with unit weight, whereas the learned edge masks are applied to candidate intermolecular edges.
A second merged-graph MPNN refines the node representations using the masked graph.
The refined nodes are then pooled separately by molecular component,
\begin{equation}
r_s=\operatorname{Pool}(G_s),\qquad
r_1=\operatorname{Pool}(G_1),\qquad
r_2=\operatorname{Pool}(G_2),
\end{equation}
and the prediction is
\begin{equation}
\widehat y
=
f_{\mathrm{MMGNN}}
\left(
[r_s;r_1;r_2;e_{\mathrm{env}}]
\right).
\end{equation}

Following the interaction-selection motivation of MMGNN, the learned node and intermolecular-edge masks are regularized toward a sparse Bernoulli prior during training.
Our implementation should therefore be viewed as an \emph{MMGNN-style} three-component control: it preserves the merged-graph and learned interaction-selection mechanisms while matching the MPNN-based backbone family used by the other end-to-end controls.

\paragraph{Role of the interaction baselines.}
Triple-CIGIN and Triple-MMGNN provide complementary high-capacity controls for explicit downstream interaction modeling.
Triple-CIGIN introduces dense higher-order interaction conditioning while keeping the three molecular graphs separate, whereas Triple-MMGNN allows information exchange through a shared three-component graph with learned interaction selection.
Both are trained end-to-end using mixed-solvent supervision.
Their comparison with PRISM therefore tests whether explicit interaction capacity alone can provide the same generalization to unseen solvent combinations as relationally pretrained molecular representations combined with lightweight downstream composition.

\subsection{Hyperparameter Selection}
\label{app:impl:hyper}

Hyperparameters are selected using validation performance.
For PRISM pretraining, we sweep the embedding temperature $\tau$ and locality scale $\gamma_s$ over a small grid.
For downstream fine-tuning, we tune dropout, learning rate, batch size, fusion mode, and weight decay.

Table~\ref{tab:hp_single} reports reference hyperparameters for single-molecule fine-tuning.
Table~\ref{tab:hp_pair} reports reference hyperparameters for pure-solvent solubility fine-tuning.
Table~\ref{tab:hp_mixed} reports reference hyperparameters for mixed-solvent solubility fine-tuning.

\begin{table}[!ht]
\centering
\small
\caption{Reference hyperparameters for single-molecule fine-tuning.}
\label{tab:hp_single}
\setlength{\tabcolsep}{5pt}
\begin{tabular}{lcccccc}
\toprule
 & ESOL & FreeSolv & Lipo & BBBP & BACE & HIV \\
\midrule
Pretrain $\tau$ & 0.1 & 0.3 & 0.1 & 0.1 & 0.2 & 0.1 \\
Pretrain $\gamma_s$ & 0.75 & 0.75 & 0.25 & 0.25 & 0.75 & 0.25 \\
Fusion mode & Adaptive & Adaptive & Adaptive & Static & Static & Static \\
Dropout & 0.3 & 0.3 & 0.3 & 0.1 & 0.1 & 0.1 \\
Batch size & 64 & 64 & 64 & 64 & 16 & 16 \\
Weight decay & $10^{-5}$ & 0 & 0 & 0 & $10^{-5}$ & $10^{-5}$ \\
Epochs & 200 & 200 & 200 & 200 & 100 & 200 \\
\bottomrule
\end{tabular}
\end{table}

\begin{table}[!ht]
\centering
\small
\caption{Reference hyperparameters for pure-solvent solubility fine-tuning.}
\label{tab:hp_pair}
\setlength{\tabcolsep}{6pt}
\begin{tabular}{lc}
\toprule
Parameter & Value \\
\midrule
Pretrain $\tau$ & 0.1 \\
Pretrain $\gamma_s$ & 0.25 \\
Fusion mode & Static \\
Embedding dimension $D$ & 512 \\
Temperature embedding $d_T$ & 32 \\
Batch size & 64 \\
Validation ratio & 0.05 \\
Epochs & 200 \\
Weight decay & $10^{-5}$ \\
MLP head hidden sizes & $(1024, 128, 64)$ \\
MLP head dropout & 0.2 \\
Encoder LR / head LR & $10^{-4}$ / $10^{-3}$ \\
Early stopping patience & 50 \\
LR schedule patience / factor & 30 / 0.5 \\
\bottomrule
\end{tabular}
\end{table}

\begin{table}[!ht]
\centering
\small
\caption{Reference hyperparameters for mixed-solvent solubility fine-tuning.}
\label{tab:hp_mixed}
\setlength{\tabcolsep}{6pt}
\begin{tabular}{lc}
\toprule
Parameter & Value \\
\midrule
Embedding dimension $D$ & 512 \\
Temperature embedding $d_T$ & 32 \\
Fusion mode & Static weighted fusion \\
encoders & Original PRISM-pretrained branches, frozen \\
Fusion module & Initialized from pure-solvent checkpoint and trainable \\
Mixed head & Linear composition-weighted MLP \\
Trainable modules & Fusion module, temperature embedding, and prediction head \\
MLP head hidden sizes & $(512,512)$ \\
MLP head dropout & 0.1 \\
Batch size & 64 \\
Optimizer & Adam \\
Learning rate & $10^{-4}$ \\
Epochs & 100 \\
Checkpoint selection & Validation MSE \\
\bottomrule
\end{tabular}
\end{table}

\subsubsection{Sensitivity to pre-training hyperparameters ($\tau$ and $\gamma_s$)}
\label{app:sensitivity_tau_gamma}

We study sensitivity to embedding temperature $\tau$ and the locality scale constant $\gamma_s$.
We sweep $\tau\in\{0.1,0.2,0.3\}$ and $\gamma_s\in\{0.25,0.50,0.75\}$ and report downstream test performance.

\paragraph{Discussion (overall).}
Across tasks, performance varies smoothly over the grid without brittle behavior, suggesting that the method is reasonably robust. Best settings are mildly task-dependent. All pre-training and downstream hyperparameters are selected via validation set performance, and the chosen configurations for each task are fully reported.

\paragraph{MoleculeNet classification.}
Table~\ref{tab:tau_gamma_cls} indicates that BBBP prefers a sharper locality, while BACE benefits from smoother scaling.
This is consistent with task-dependent neighborhood granularity: some tasks reward emphasizing only very close prior neighbors, while others benefit from broader neighborhoods.

\paragraph{MoleculeNet regression.}
Table~\ref{tab:tau_gamma_reg} shows mild-to-moderate variation across the grid with no universally optimal choice.
In practice we adopt a simple default that performs consistently and avoid per-task over-tuning.

\paragraph{Pairwise solubility (pure-solvent).}
Table~\ref{tab:tau_gamma_pair} indicates that performance remains stable across reasonable settings.
In particular, $\tau{=}0.1,\gamma_s{=}0.25$ achieves the best Leeds performance, while SolProp is relatively insensitive.
This supports using a fixed pre-training setting for the main solubility comparisons.

\begin{table}[h]
\centering
\small
\setlength{\tabcolsep}{7pt}
\caption{\textbf{Hyperparameter sensitivity on classification tasks.}
Test ROC-AUC ($\uparrow$) under different pre-training settings of embedding temperature $\tau$ and locality scale $\gamma_s$.
HIV is omitted due to its much larger scale.}
\label{tab:tau_gamma_cls}
\begin{tabular}{cc|cc}
\toprule
$\tau$ & $\gamma_s$ & BBBP & BACE \\
\midrule
0.1 & 0.25 & \textbf{0.9459} & 0.8389 \\
0.1 & 0.50 & 0.8997 & 0.8548 \\
0.1 & 0.75 & 0.9006 & 0.8227 \\
\midrule
0.2 & 0.25 & 0.9017 & 0.8510 \\
0.2 & 0.50 & 0.9071 & 0.8480 \\
0.2 & 0.75 & 0.9085 & \textbf{0.8946} \\
\midrule
0.3 & 0.25 & 0.9036 & 0.8298 \\
0.3 & 0.50 & 0.9044 & 0.8451 \\
0.3 & 0.75 & 0.8994 & 0.8474 \\
\bottomrule
\end{tabular}
\vspace{-2mm}
\end{table}

\begin{table}[h]
\centering
\small
\setlength{\tabcolsep}{7pt}
\caption{\textbf{Hyperparameter sensitivity on regression tasks.}
Test RMSE ($\downarrow$) under different pre-training settings of $\tau$ and $\gamma_s$.}
\label{tab:tau_gamma_reg}
\begin{tabular}{cc|ccc}
\toprule
$\tau$ & $\gamma_s$ & ESOL & FreeSolv & Lipo \\
\midrule
0.1 & 0.25 & 0.874 & 1.835 & \textbf{0.538} \\
0.1 & 0.50 & 0.878 & 1.831 & 0.673 \\
0.1 & 0.75 & \textbf{0.772} & 1.812 & 0.678 \\
\midrule
0.2 & 0.25 & 0.863 & 1.820 & 0.681 \\
0.2 & 0.50 & 0.925 & 2.067 & 0.656 \\
0.2 & 0.75 & 0.848 & 1.828 & 0.679 \\
\midrule
0.3 & 0.25 & 0.850 & 1.789 & 0.688 \\
0.3 & 0.50 & 0.864 & 1.834 & 0.689 \\
0.3 & 0.75 & 0.860 & \textbf{1.770} & 0.685 \\
\bottomrule
\end{tabular}
\vspace{-2mm}
\end{table}

\begin{table}[h]
\centering
\small
\setlength{\tabcolsep}{6pt}
\caption{\textbf{Hyperparameter sensitivity on pure-solvent pairwise solubility.}
We report best validation RMSE, and test RMSE / $\%logS \pm 1$ ($\uparrow$) on Leeds and SolProp.
$\%logS \pm 1$ is shown in \%. Only configurations available in our logs are listed.}
\label{tab:tau_gamma_pair}
\begin{tabular}{cc|c|cc|cc}
\toprule
$\tau$ & $\gamma_s$ & best val RMSE $\downarrow$ &
Leeds RMSE $\downarrow$ & Leeds $\%logS \pm 1$ $\uparrow$ &
SolProp RMSE $\downarrow$ & SolProp $\%logS \pm 1$ $\uparrow$ \\
\midrule
0.1 & 0.50 & 0.1414 & 0.9340 & 73.65 & 0.7940 & 80.71 \\
0.1 & 0.25 & \textbf{0.1398} & \textbf{0.9099} & \textbf{75.66} & 0.7719 & 82.29 \\
0.2 & 0.25 & 0.1393 & 0.9352 & 74.81 & 0.7853 & 81.72 \\
0.2 & 0.75 & 0.1429 & 0.9179 & 74.65 & 0.7739 & \textbf{82.87} \\
0.3 & 0.25 & 0.1438 & 0.9331 & 74.03 & \textbf{0.7563} & 82.73 \\
\bottomrule
\end{tabular}
\vspace{-2mm}
\end{table}

\subsection{Compute Resources and Runtime}
\label{app:impl:compute}
For \(K\) prior spaces and batch size \(B\), the minibatch relational construction scales as \(O(KB^2)\) in pairwise comparisons.
Table~\ref{tab:cost_summary} summarizes the compute resources used in our experiments.
All reported times are wall-clock times per run.
Pretraining is the most expensive stage because it requires computing dense mini-batch pairwise neighborhood distributions.
Downstream fine-tuning is substantially cheaper.

\begin{table}[h]
\centering
\caption{Computational cost and hardware. Reported times are wall-clock per run.}
\label{tab:cost_summary}
\setlength{\tabcolsep}{6pt}
\small
\begin{tabular}{p{3.8cm}p{3.0cm}p{3.0cm}p{3.0cm}}
\toprule
\textbf{Stage} & \textbf{Hardware} & \textbf{Time per run} & \textbf{Notes} \\
\midrule
PRISM pretraining per branch & RTX 3090 & 7--8 hours & Batch size 512; $\sim$75\% peak GPU memory \\
Pure-solvent fine-tuning & RTX 3090 & $\sim$6 hours & End-to-end \\
MoleculeNet fine-tuning & RTX 4070 & $\leq$1 hour & Per task, per seed \\
Mixed-solvent training & RTX 4070 mobile & 3 hours & Frozen encoders; trainable fusion and head \\
\bottomrule
\end{tabular}
\end{table}

The complete pipeline is feasible on a single consumer-grade GPU.
The multi-branch design increases pretraining cost because each prior space has its own encoder, but the branches can be pretrained independently and in parallel.
At downstream time, the additional cost is moderate relative to the molecular encoder forward passes and remains practical for the datasets studied here.

{
At fixed batch size, PRISM scales linearly with the pretraining-corpus size, while increasing the number of priors increases total work approximately linearly because prior-specific branches are trained independently and can be parallelized.
In our profiling, D-MPNN encoding and backpropagation dominate runtime and memory, whereas the minibatch relational matrices add relatively little overhead.
Restricting relational matching to sampled candidates reduces matching cost but yields only a 2--4\% runtime reduction at the current scale, since all molecules still require graph encoding.
Larger-scale implementations may therefore benefit more from reducing or reusing molecular encoding than from approximating the relational matrix alone.
}

\clearpage
\section{Theoretical Analysis}
\label{app:theory}

This section provides theoretical support for the main objective-level claims used in the paper.

\subsection{Alignment Objectives in the PRISM}
\label{app:loss_definitions}

This appendix gives the formal definitions of the four alignment objectives compared.
Throughout, $\mathcal{B}$ denotes a mini-batch, $d_{ij}^{(s)} = d_s(\pi_i^{(s)}, \pi_j^{(s)})$ is the prior-space distance between molecules $i$ and $j$ under prior space $s$, and $r_i^{(s)} = G_{\theta_s}(x_i)$ is the corresponding branch embedding.
We write $\mathrm{sim}(\cdot,\cdot)$ for cosine similarity after $\ell_2$ normalization, and define the raw embedding-space distance as
\begin{equation}
    \delta_{ij}^{(s)}
    =
    1-\mathrm{sim}(r_i^{(s)},r_j^{(s)}).
\end{equation}

\subsection{Loss families}
\label{app:loss}

\subsubsection{Pointwise metric matching ($\ell_2$)}
\label{app:loss_l2}

Pointwise metric matching directly regresses pairwise embedding distances onto pairwise prior-space distances.
Because prior-space distances and embedding-space distances can have different numerical scales, both distances are rescaled within each mini-batch before applying the regression loss.
Let
\begin{equation}
    m_d^{(s)}
    =
    \mathrm{median}
    \left\{
        d_{ij}^{(s)}:
        i\neq j,\;
        i,j\in\mathcal{B}
    \right\},
    \qquad
    m_\delta^{(s)}
    =
    \mathrm{median}
    \left\{
        \delta_{ij}^{(s)}:
        i\neq j,\;
        i,j\in\mathcal{B}
    \right\}.
\end{equation}
The scaled distances are
\begin{equation}
    \tilde d_{ij}^{(s)}
    =
    \frac{d_{ij}^{(s)}}{m_d^{(s)}+\epsilon},
    \qquad
    \tilde\delta_{ij}^{(s)}
    =
    \frac{\delta_{ij}^{(s)}}{m_\delta^{(s)}+\epsilon}
    =
    \frac{1-\mathrm{sim}(r_i^{(s)},r_j^{(s)})}
    {m_\delta^{(s)}+\epsilon},
\end{equation}
where $\epsilon$ is a small numerical constant.
The loss is
\begin{equation}
\mathcal{L}^{(s)}_{\ell_2}
=
\frac{1}{|\mathcal{B}|(|\mathcal{B}|-1)}
\sum_{i\in\mathcal{B}}
\sum_{\substack{j\in\mathcal{B}\\ j\neq i}}
\Big(
\tilde\delta_{ij}^{(s)} - \tilde d_{ij}^{(s)}
\Big)^2 .
\label{eq:l2_loss}
\end{equation}
This objective is the most rigid in the alignment family: it requires the embedding to reproduce descriptor distances pointwise, including their magnitudes.
When the prior space does not exhaust the information in the molecular graph, this rigidity over-constrains the encoder.

\subsubsection{Rank preservation}
\label{app:loss_rank}

Rank-based alignment preserves only the ordinal structure of prior-space neighborhoods, discarding distance magnitude.
For a fixed anchor $i$, each unordered neighbor pair $\{j,k\}\subset\mathcal{B}\setminus\{i\}$ contributes one comparison.
To avoid imposing an artificial order when the prior space gives equal distances, we define the set of strict unordered comparisons as
\begin{equation}
    \mathcal{T}_i^{(s)}
    =
    \left\{
    (j,k):
    j,k\in\mathcal{B}\setminus\{i\},\;
    j<k,\;
    d_{ij}^{(s)}\neq d_{ik}^{(s)}
    \right\}.
\end{equation}
Anchors with $|\mathcal{T}_i^{(s)}|=0$ are skipped; this case is rare in continuous descriptor spaces and can occur only when no strict prior ordering is available for that anchor.

The non-differentiable rank disagreement rate is
\begin{equation}
\mathcal{L}^{(s)}_{\mathrm{rank},0}
=
\frac{1}{|\mathcal{I}^{(s)}|}
\sum_{i\in\mathcal{I}^{(s)}}
\frac{1}{|\mathcal{T}_i^{(s)}|}
\sum_{(j,k)\in\mathcal{T}_i^{(s)}}
\mathbf{1}\!\left[
\mathrm{sgn}(d_{ij}^{(s)} - d_{ik}^{(s)})
\neq
\mathrm{sgn}(\delta_{ij}^{(s)} - \delta_{ik}^{(s)})
\right],
\label{eq:rank_loss_indicator}
\end{equation}
where
\[
    \mathcal{I}^{(s)}
    =
    \{i\in\mathcal{B}:|\mathcal{T}_i^{(s)}|>0\}.
\]
When there are no tied prior-space distances, this is equivalent to averaging over
$\binom{|\mathcal{B}|-1}{2}$ unordered neighbor pairs per anchor.
The loss equals zero when the embedding ranking agrees with the prior-space ranking for every unordered comparison.
Thus, unlike an ordered-pair double sum, Eq.~\eqref{eq:rank_loss_indicator} cannot produce negative loss values under perfect agreement.

For optimization, we use a smooth logistic surrogate.
For each $(j,k)\in\mathcal{T}_i^{(s)}$, define the prior-order label
\begin{equation}
    y_{ijk}^{(s)}
    =
    \mathrm{sgn}\!\left(d_{ik}^{(s)}-d_{ij}^{(s)}\right)
    \in\{-1,+1\}.
\end{equation}
Here $y_{ijk}^{(s)}=+1$ means that $j$ is closer than $k$ to anchor $i$ in prior space.
The corresponding embedding-space score difference is
\begin{equation}
    s_{ijk}^{(s)}
    =
    \delta_{ik}^{(s)}-\delta_{ij}^{(s)}.
\end{equation}
Thus, $y_{ijk}^{(s)}s_{ijk}^{(s)}>0$ means that the embedding-space ordering agrees with the prior-space ordering.
The differentiable rank loss is
\begin{equation}
\mathcal{L}^{(s)}_{\mathrm{rank}}
=
\frac{1}{|\mathcal{I}^{(s)}|}
\sum_{i\in\mathcal{I}^{(s)}}
\frac{1}{|\mathcal{T}_i^{(s)}|}
\sum_{(j,k)\in\mathcal{T}_i^{(s)}}
\log\left(
1+\exp\left[-y_{ijk}^{(s)}s_{ijk}^{(s)}/\tau_r\right]
\right),
\label{eq:rank_loss_smooth}
\end{equation}
where $\tau_r>0$ is a rank-temperature parameter.

\subsubsection{Neighborhood-graph alignment (kNN + InfoNCE)}
\label{app:loss_nng}

Neighborhood-graph alignment first uses prior-space kNN to determine positive and negative samples, then applies a contrastive objective.
For each anchor $i\in\mathcal{B}$, we define the positive set
\begin{equation}
\mathcal{P}_i^{(s)}
=
\{\,j\in\mathcal{B}\setminus\{i\}: j\in\mathcal{N}_{k_{\mathrm{NN}}}^{(s)}(i)\,\},
\label{eq:nng_positive_set}
\end{equation}
i.e., the top-$k_{\mathrm{NN}}$ prior-space neighbors of $i$ within the batch, and use all remaining in-batch molecules as negatives.
We then apply the contrastive form used by MolCLR~\citep{molclr}:
\begin{equation}
\mathcal{L}^{(s)}_{\mathrm{NNG}}
=
\frac{1}{|\mathcal{B}|}
\sum_{i\in\mathcal{B}}
\frac{-1}{|\mathcal{P}_i^{(s)}|}
\sum_{j\in\mathcal{P}_i^{(s)}}
\log
\frac{\exp\!\left(\mathrm{sim}(r_i^{(s)},r_j^{(s)})/\tau\right)}
{\sum_{\ell\in\mathcal{B}\setminus\{i\}}\exp\!\left(\mathrm{sim}(r_i^{(s)},r_\ell^{(s)})/\tau\right)} ,
\label{eq:nng_loss}
\end{equation}
where $\ell$ is a dummy summation index in the denominator, and $\tau$ is the contrastive temperature.
The objective treats the kNN graph as a binary structure---each pair is either ``positive'' or ``negative''---and discards the gradation of similarity within the neighbor set.
When $|\mathcal{P}_i^{(s)}|=1$, Eq.~\eqref{eq:nng_loss} reduces to standard InfoNCE with a single positive.

\subsubsection{Distribution matching (default)}
\label{app:loss_kl}

The soft-neighborhood KL instantiation, used as the default in PRISM, defines anchor-wise stochastic neighborhoods on both sides and aligns them via KL divergence:
\begin{equation}
R_i^{(s)}(j)
=
\frac{\exp(-\beta_s d_{ij}^{(s)})}
{\sum_{\ell\in\mathcal{B}\setminus\{i\}}\exp(-\beta_s d_{i\ell}^{(s)})},
\qquad
P_i^{(s)}(j)
=
\frac{\exp(\mathrm{sim}(r_i^{(s)},r_j^{(s)})/\tau)}
{\sum_{\ell\in\mathcal{B}\setminus\{i\}}\exp(\mathrm{sim}(r_i^{(s)},r_\ell^{(s)})/\tau)} .
\label{eq:soft_neighborhoods_app}
\end{equation}
\begin{equation}
\mathcal{L}^{(s)}_{\mathrm{KL}}
=
\frac{1}{|\mathcal{B}|}
\sum_{i\in\mathcal{B}}
\sum_{j\in\mathcal{B}\setminus\{i\}}
R_i^{(s)}(j)
\log\frac{R_i^{(s)}(j)}{P_i^{(s)}(j)} .
\label{eq:kl_loss_app}
\end{equation}
Unlike the hard NNG instantiation, the soft form does not require a binary positive/negative partition: every in-batch molecule contributes to the anchor's neighborhood with a graded weight determined by its prior-space distance.
This preserves both neighbor identity (which molecules are close) and similarity magnitude (how close they are).

\paragraph{Conditional asymmetry and symmetric SNE alternative.}
\label{app:symmetric_sne}

The distributions in Eq.~(52) are conditional on the anchor molecule and
are therefore generally asymmetric: $R_i^{(s)}(j)$ need not equal
$R_j^{(s)}(i)$, even though the underlying distance
$d_{ij}^{(s)}=d_{ji}^{(s)}$ is symmetric.
This asymmetry arises from anchor-specific normalization and retains the
local neighborhood context of each molecule.

To test whether this choice is important, we additionally consider a
symmetric SNE-style formulation. Following symmetric SNE, we define
\[
\bar{R}_{ij}^{(s)}
=
\frac{R_i^{(s)}(j)+R_j^{(s)}(i)}
     {2|\mathcal{B}|},
\qquad
\bar{P}_{ij}^{(s)}
=
\frac{P_i^{(s)}(j)+P_j^{(s)}(i)}
     {2|\mathcal{B}|},
\]
and minimize
\[
\mathcal{L}_{\mathrm{sym}}^{(s)}
=
\sum_{i\neq j}
\bar{R}_{ij}^{(s)}
\log
\frac{\bar{R}_{ij}^{(s)}}{\bar{P}_{ij}^{(s)}}.
\]

{
Both formulations transfer useful relational information, but the
conditional objective performs consistently better in our matched
comparison. We interpret this result as evidence that retaining
anchor-specific neighborhood context is useful for molecular
representation learning, whereas symmetrization averages the two
conditional neighborhood views.
}

\begin{table}[H]
\centering
\caption{
Comparison of conditional and symmetric SNE-style alignment under matched
settings. Results are mean $\pm$ std over three seeds. RMSE ($\downarrow$).
}
\label{tab:symmetric_sne}
\begin{tabular}{lccc}
\toprule
Objective & Lipo & FreeSolv & ESOL \\
\midrule
Symmetric SNE
& $0.592 \pm 0.009$
& $2.146 \pm 0.079$
& $0.803 \pm 0.032$ \\
PRISM conditional KL
& $0.538 \pm 0.018$
& $1.770 \pm 0.093$
& $0.772 \pm 0.002$ \\
\bottomrule
\end{tabular}
\end{table}

\paragraph{Connection to InfoNCE.}
\label{app:theory:infonce}

We next relate the PRISM stochastic-neighborhood alignment objective to InfoNCE.
For an anchor $i$ in prior space $s$, PRISM aligns the prior-induced soft neighborhood $R_i^{(s)}$ to the embedding-induced soft neighborhood $P_i^{(s)}$ via KL divergence.
The KL divergence decomposes as
\begin{equation}
    \mathrm{KL}(R_i^{(s)} \,\|\, P_i^{(s)})
    =
    H(R_i^{(s)}, P_i^{(s)})
    -
    H(R_i^{(s)}),
\label{eq:kl_decomp}
\end{equation}
where $H(p, q) = -\sum_j p_j \log q_j$ and $H(p) = -\sum_j p_j \log p_j$.
Since $H(R_i^{(s)})$ does not depend on encoder parameters, minimizing the KL divergence is equivalent to minimizing the soft cross-entropy from $R_i^{(s)}$ to $P_i^{(s)}$.

In the hard-neighborhood special case where the prior distribution assigns all mass to one positive $j_i^+$, $H(R_i^{(s)})=0$ and Eq.~\eqref{eq:kl_decomp} reduces to $-\log P_i^{(s)}(j_i^+)$.
With a softmax over scaled cosine similarities,
\begin{equation}
    -\log P_i^{(s)}(j_i^+)
    =
    -\log
    \frac{
        \exp(\mathrm{sim}(r_i^{(s)}, r_{j_i^+}^{(s)})/\tau)
    }{
        \sum_{\ell\neq i}
        \exp(\mathrm{sim}(r_i^{(s)}, r_\ell^{(s)})/\tau)
    },
\label{eq:hard_target_infonce}
\end{equation}
which is the InfoNCE loss with $j_i^+$ as the positive and the remaining in-batch molecules as negatives.

Thus, InfoNCE corresponds to a hard-target specialization of PRISM's soft-neighborhood objective, whereas PRISM uses the soft distribution form natively to reflect graded chemical similarity.
The difference is that PRISM does not require selecting a single positive example.
Instead, the prior space provides a soft neighborhood distribution over all in-batch molecules, allowing multiple chemically related molecules to contribute with graded weights and avoiding a binary positive/negative partition when molecular similarity is continuous.

\subsubsection{TDL as a Rank-Projected Version of PRISM-KL}
\label{app:theory:tdl_rank_projection}

We next relate rank-style TDL/ListMLE objectives to PRISM-KL.
Fix an anchor molecule $i$ and let $\mathcal{C}_i=\mathcal{B}\setminus\{i\}$ with $|\mathcal{C}_i|=M$.
For prior space $s$, PRISM defines a prior-induced soft neighborhood distribution
\[
R_i^{(s)}(j)
=
\frac{\exp(-\beta_s d_{ij}^{(s)})}
{\sum_{\ell\in\mathcal{C}_i}\exp(-\beta_s d_{i\ell}^{(s)})},
\qquad j\in\mathcal{C}_i,
\]
and an embedding-induced distribution
\[
P_i^{(s)}(j)
=
\frac{\exp(u_{ij}^{(s)})}
{\sum_{\ell\in\mathcal{C}_i}\exp(u_{i\ell}^{(s)})},
\qquad
u_{ij}^{(s)}=\mathrm{sim}(r_i^{(s)},r_j^{(s)})/\tau .
\]
PRISM minimizes
\[
\mathcal{L}^{(s)}_{\mathrm{PRISM}}(i)
=
\mathrm{KL}\!\left(R_i^{(s)}\middle\|P_i^{(s)}\right).
\]
If the alignment is exact, $P_i^{(s)}=R_i^{(s)}$, then for any $j,k\in\mathcal{C}_i$,
\begin{equation}
\frac{P_i^{(s)}(j)}{P_i^{(s)}(k)}
=
\frac{R_i^{(s)}(j)}{R_i^{(s)}(k)}
=
\exp\!\left[-\beta_s\left(d_{ij}^{(s)}-d_{ik}^{(s)}\right)\right].
\end{equation}
Since the softmax normalizer cancels in the probability ratio, we have
\begin{equation}
    u_{ij}^{(s)}-u_{ik}^{(s)}
    =
    -\beta_s\left(d_{ij}^{(s)}-d_{ik}^{(s)}\right).
\end{equation}
Thus, PRISM-KL not only preserves the prior-induced ordering
\begin{equation}
    d_{ij}^{(s)}<d_{ik}^{(s)}
    \quad\Longrightarrow\quad
    u_{ij}^{(s)}>u_{ik}^{(s)},
\end{equation}
but also preserves the magnitude of the distance gap through the log-probability ratio.

TDL keeps only the rank projection of this condition.
Let $\pi_1,\ldots,\pi_M$ be the candidates sorted by increasing prior distance,
\[
d_{i\pi_1}^{(s)}<d_{i\pi_2}^{(s)}<\cdots<d_{i\pi_M}^{(s)},
\]
assuming a strict order for simplicity.
For each position $t$, define the suffix set
\[
S_t=\{\pi_t,\pi_{t+1},\ldots,\pi_M\}
=
\{m\in\mathcal{C}_i: d_{im}^{(s)}\ge d_{i\pi_t}^{(s)}\}.
\]
On this suffix set, define the truncated embedding distribution
\[
P_i^{(t)}(m)
=
\frac{\exp(u_{im}^{(s)})}
{\sum_{\ell\in S_t}\exp(u_{i\ell}^{(s)})},
\qquad m\in S_t,
\]
and a hard target $Q_i^{(t)}(m)=\mathbf{1}[m=\pi_t]$.
Applying hard-target KL on every suffix gives
\begin{equation}
\sum_{t=1}^{M}
\mathrm{KL}\!\left(Q_i^{(t)}\middle\|P_i^{(t)}\right)
=
\sum_{t=1}^{M}
-\log
\frac{\exp(u_{i\pi_t}^{(s)})}
{\sum_{\ell=t}^{M}\exp(u_{i\pi_\ell}^{(s)})}.
\end{equation}
Re-indexing $m=\pi_t$, this becomes
\begin{equation}
\mathcal{L}_{\mathrm{TDL}}(i)
=
\sum_{m\in\mathcal{C}_i}
-\log
\frac{\exp(u_{im}^{(s)})}
{\sum_{\ell\in\mathcal{C}_i}
\mathbf{1}[d_{i\ell}^{(s)}\ge d_{im}^{(s)}]\exp(u_{i\ell}^{(s)})},
\label{eq:tdl_listmle}
\end{equation}
which is the Plackett--Luce/ListMLE form of TDL.
Therefore, TDL can be viewed as first projecting the PRISM target distribution $R_i^{(s)}$ to its rank ordering and then applying sequential hard-target KL over suffix sets.
It matches $\mathrm{rank}(R_i^{(s)})$, whereas PRISM matches the full soft neighborhood distribution $R_i^{(s)}$ itself.

\paragraph{Controlled empirical comparison.}
To complement the analysis above, we compare the TDL-style objective
with PRISM's conditional KL objective while holding the prior
configuration, D-MPNN backbone, ZINC250k pretraining corpus, downstream
model, and evaluation protocol fixed.
This is an objective-level comparison rather than a full reproduction
of the original method.
Table~\ref{tab:tdl_kl_comparison} reports the results.

\begin{table}[t]
\centering
\small
\setlength{\tabcolsep}{8pt}
\caption{\textbf{Controlled comparison of TDL-style and conditional KL
alignment.}
Only the alignment objective is changed.
Results are mean $\pm$ standard deviation over three seeds.
Lower RMSE is better.}
\label{tab:tdl_kl_comparison}
\begin{tabular}{lccc}
\toprule
Objective & Lipo RMSE $\downarrow$
          & FreeSolv RMSE $\downarrow$
          & ESOL RMSE $\downarrow$ \\
\midrule
TDL-style alignment
& $0.622 \pm 0.011$
& $2.221 \pm 0.028$
& $0.819 \pm 0.003$ \\
PRISM (conditional KL)
& $\mathbf{0.538 \pm 0.018}$
& $\mathbf{1.770 \pm 0.093}$
& $\mathbf{0.772 \pm 0.002}$ \\
\bottomrule
\end{tabular}
\end{table}

\paragraph{Discussion.}
Conditional KL achieves lower downstream RMSE on all three tasks.
Together with the analysis above, these results support preserving
graded prior-induced similarities rather than their ordering alone
under this matched comparison, providing additional empirical
justification for PRISM's default alignment objective.

\subsubsection{Shared-Encoder Multi-Prior KL as Early-Fused Target Matching}
\label{app:theory:early_fusion}

We show that a shared-encoder multi-prior objective, where all prior spaces are optimized against the same embedding-induced distribution $P_i$, is mathematically equivalent to GraphMSL-style early fusion of target similarity distributions.

Fix an anchor molecule $i$, and let $\mathcal{C}_i=\mathcal{B}\setminus\{i\}$ be the in-batch candidate set.
Suppose that $K$ prior spaces define $K$ target neighborhood distributions
\[
R_i^{(1)},R_i^{(2)},\ldots,R_i^{(K)},
\qquad
\sum_{j\in\mathcal{C}_i}R_i^{(s)}(j)=1
\quad
\text{for all }s.
\]
In the shared-encoder control, all prior losses are optimized against the same embedding-induced distribution
\[
P_i(j)
=
\frac{\exp(u_{ij})}
{\sum_{\ell\in\mathcal{C}_i}\exp(u_{i\ell})},
\]
where $u_{ij}$ is the shared embedding logit.
The unweighted shared-$P_i$ multi-prior objective is
\begin{equation}
\mathcal{L}_{\mathrm{shared}}(i)
=
\sum_{s=1}^{K}
\mathrm{KL}\!\left(R_i^{(s)}\middle\|P_i\right).
\label{eq:shared_multi_prior}
\end{equation}
Define the normalized early-fused target distribution as the average of the prior-induced distributions:
\begin{equation}
\bar R_i(j)
=
\frac{1}{K}\sum_{s=1}^{K}R_i^{(s)}(j).
\end{equation}
Since each $R_i^{(s)}$ is a probability distribution, $\bar R_i$ is also a probability distribution.

Expanding the shared objective gives
\begin{align}
\mathcal{L}_{\mathrm{shared}}(i)
&=
\sum_{s=1}^{K}
\sum_{j\in\mathcal{C}_i}
R_i^{(s)}(j)
\log
\frac{R_i^{(s)}(j)}{P_i(j)}
\nonumber \\
&=
\sum_{s=1}^{K}
\sum_j
R_i^{(s)}(j)\log R_i^{(s)}(j)
-
\sum_{s=1}^{K}
\sum_j
R_i^{(s)}(j)\log P_i(j).
\end{align}
Using
\[
\sum_{s=1}^{K}R_i^{(s)}(j)=K\bar R_i(j),
\]
the second term becomes
\[
-
K\sum_j
\bar R_i(j)\log P_i(j).
\]
Therefore,
\begin{equation}
\mathcal{L}_{\mathrm{shared}}(i)
=
\sum_{s=1}^{K}
\sum_j
R_i^{(s)}(j)\log R_i^{(s)}(j)
-
K\sum_j
\bar R_i(j)\log P_i(j).
\label{eq:shared_expanded}
\end{equation}

Now consider the GraphMSL-style early-fusion objective.
In this form, the multiple target distributions are first fused into a single target distribution
\[
T_i(j)=\bar R_i(j),
\]
and the model learns one embedding-induced distribution
\[
Q_i(j)=P_i(j).
\]
The early-fusion cross-entropy objective is
\begin{equation}
\mathcal{L}_{\mathrm{early}}(i)
=
-\sum_j T_i(j)\log Q_i(j)
=
-\sum_j \bar R_i(j)\log P_i(j).
\end{equation}
Comparing this with Eq.~\eqref{eq:shared_expanded}, we obtain
\begin{equation}
\mathcal{L}_{\mathrm{shared}}(i)
=
K\mathcal{L}_{\mathrm{early}}(i)
+
\underbrace{
\sum_{s=1}^{K}
\sum_j
R_i^{(s)}(j)\log R_i^{(s)}(j)
}_{\text{constant w.r.t. }P_i}.
\end{equation}
Thus, the shared-$P_i$ objective and the GraphMSL-style early-fusion objective differ only by a scalar factor $K$ and a term independent of the encoder parameters.

Equivalently, the same relation can be written in KL form:
\begin{equation}
\boxed{
\sum_{s=1}^{K}
\mathrm{KL}\!\left(R_i^{(s)}\middle\|P_i\right)
=
K\,\mathrm{KL}\!\left(\bar R_i\middle\|P_i\right)
+
\sum_{s=1}^{K}
\mathrm{KL}\!\left(R_i^{(s)}\middle\|\bar R_i\right)
}.
\label{eq:shared_early_fusion_decomposition}
\end{equation}
The second term is independent of $P_i$, and hence independent of the encoder parameters.
The factor $K$ only rescales the fused-target KL objective and does not change its minimizer.
Hence,
\begin{equation}
\arg\min_\theta
\sum_{s=1}^{K}
\mathrm{KL}\!\left(R_i^{(s)}\middle\|P_i^\theta\right)
=
\arg\min_\theta
\mathrm{KL}\!\left(\bar R_i\middle\|P_i^\theta\right).
\end{equation}

We can also verify the same equivalence by comparing gradients.
For one prior space $s$,
\[
\mathrm{KL}\!\left(R_i^{(s)}\middle\|P_i\right)
=
\sum_{\ell}R_i^{(s)}(\ell)\log R_i^{(s)}(\ell)
-
\sum_{\ell}R_i^{(s)}(\ell)\log P_i(\ell).
\]
Using the softmax identity
\[
\frac{\partial \log P_i(\ell)}{\partial u_{ij}}
=
\mathbf{1}[\ell=j]-P_i(j),
\]
we obtain
\begin{equation}
\frac{\partial}{\partial u_{ij}}
\mathrm{KL}\!\left(R_i^{(s)}\middle\|P_i\right)
=
P_i(j)-R_i^{(s)}(j).
\end{equation}
Summing over all prior spaces gives
\begin{equation}
\frac{\partial \mathcal{L}_{\mathrm{shared}}(i)}
{\partial u_{ij}}
=
\sum_{s=1}^{K}
\left(P_i(j)-R_i^{(s)}(j)\right)
=
K\left(P_i(j)-\bar R_i(j)\right).
\end{equation}
For the GraphMSL-style early-fusion objective,
\[
\mathcal{L}_{\mathrm{early}}(i)
=
-\sum_{\ell}\bar R_i(\ell)\log P_i(\ell),
\]
the gradient is
\begin{equation}
\frac{\partial \mathcal{L}_{\mathrm{early}}(i)}
{\partial u_{ij}}
=
P_i(j)-\bar R_i(j).
\end{equation}
Therefore,
\begin{equation}
\boxed{
\frac{\partial \mathcal{L}_{\mathrm{shared}}(i)}
{\partial u_{ij}}
=
K
\frac{\partial \mathcal{L}_{\mathrm{early}}(i)}
{\partial u_{ij}}
}.
\end{equation}
Thus, the unweighted shared-$P_i$ multi-prior objective and the GraphMSL-style early-fusion objective induce the same optimization direction, differing only by the scalar factor $K$.
This scalar changes the loss scale but not the target relational geometry being learned.

By contrast, multi-branch PRISM does not optimize all prior spaces against the same $P_i$.
Instead, each prior space $s$ has its own embedding-induced distribution $P_i^{(s)}$, and PRISM optimizes
\[
\mathcal{L}_{\mathrm{PRISM}}(i)
=
\sum_{s=1}^{K}
\mathrm{KL}\!\left(R_i^{(s)}\middle\|P_i^{(s)}\right).
\]
For each branch, the gradient is
\[
\frac{\partial}{\partial u_{ij}^{(s)}}
\mathrm{KL}\!\left(R_i^{(s)}\middle\|P_i^{(s)}\right)
=
P_i^{(s)}(j)-R_i^{(s)}(j).
\]
These are separate prior-specific gradient signals.
They cannot be collapsed into one averaged signal $P_i(j)-\bar R_i(j)$, because the learner distributions $P_i^{(s)}$ are different across prior branches.
Hence, GraphMSL-style early fusion or a shared-$P_i$ multi-prior objective preserves only the fused relational geometry $\bar R_i$, whereas multi-branch PRISM preserves the prior-specific geometries $\{R_i^{(s)}\}_{s=1}^{K}$ before downstream fusion.

\subsection{Label Energy as a Prediction-Theoretic Upper Bound}
\label{app:theory:energy}

\begin{proposition}[Label energy upper-bounds prior-space $k$NN risk]
\label{prop:energy_knn_risk}
Fix a prior space $s$ and its neighbor sets $\mathcal{N}^{(s)}_k(i)$.

\textbf{Regression.}
For real-valued labels $y_i\in\mathbb{R}$, define the prior-space $k$NN regressor
\begin{equation}
    \widehat{y}_i
    =
    \frac{1}{k}
    \sum_{j\in\mathcal{N}^{(s)}_k(i)}
    y_j.
\end{equation}
Then
\begin{equation}
    \frac{1}{|Q|}
    \sum_{i\in Q}
    (y_i - \widehat{y}_i)^2
    \leq
    \mathcal{E}^{(s)}_k(y),
    \qquad
    \mathrm{RMSE}(\widehat{y})
    \leq
    \sqrt{\mathcal{E}^{(s)}_k(y)}.
\label{eq:energy_reg_bound}
\end{equation}

\textbf{Binary classification.}
For binary labels $y_i\in\{0,1\}$ and $k=1$, define the prior-space nearest-neighbor classifier $\widehat{y}_i = y_{j^\star(i)}$ with $j^\star(i)\in\mathcal{N}^{(s)}_1(i)$.
Then
\begin{equation}
    \frac{1}{|Q|}
    \sum_{i\in Q}
    \mathbf{1}\{\widehat{y}_i \neq y_i\}
    =
    \mathcal{E}^{(s)}_1(y).
\label{eq:energy_1nn_error}
\end{equation}
For $k$NN majority vote with odd $k$, the misclassification rate satisfies
\begin{equation}
    \frac{1}{|Q|}
    \sum_{i\in Q}
    \mathbf{1}\{\widehat{y}_i \neq y_i\}
    \leq
    2\mathcal{E}^{(s)}_k(y).
\label{eq:energy_majority_bound}
\end{equation}
\end{proposition}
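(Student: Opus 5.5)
The argument is pointwise in the anchor $i$: each part follows from an inequality between the per-anchor loss and the per-anchor energy term $\frac1k\sum_{j\in\mathcal{N}^{(s)}_k(i)}(y_i-y_j)^2$, which we then average over $Q$. Recall that $\mathcal{E}^{(s)}_k(y)$ in Eq.~\eqref{eq:label_energy_main} is exactly this average. For binary labels, $(y_i-y_j)^2=\mathbf{1}\{y_i\neq y_j\}$, so the per-anchor energy is the fraction of the $k$ neighbors whose label disagrees with $y_i$.

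\textbf{Regression.} Write
\[
y_i-\widehat y_i=\frac1k\sum_{j\in\mathcal{N}^{(s)}_k(i)}(y_i-y_j).
\]
By Jensen's inequality (convexity of $t\mapsto t^2$), equivalently Cauchy--Schwarz with uniform weights,
\[
(y_i-\widehat y_i)^2\le \frac1k\sum_{j\in\mathcal{N}^{(s)}_k(i)}(y_i-y_j)^2 .
\]
Averaging over $i\in Q$ gives the mean-squared-error bound. Taking square roots, which is monotone, gives the RMSE bound in Eq.~\eqref{eq:energy_reg_bound}.

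\textbf{Binary classification, $k=1$.} Here $\widehat y_i=y_{j^\star(i)}$, so $\mathbf{1}\{\widehat y_i\neq y_i\}=(y_i-y_{j^\star(i)})^2$, which is exactly the single per-anchor energy term. Averaging over $Q$ gives the equality in Eq.~\eqref{eq:energy_1nn_error}. No inequality is needed.

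\textbf{Binary classification, odd $k$ majority vote.} Let $m_i$ be the number of neighbors of $i$ with $y_j\neq y_i$, so the per-anchor energy is $m_i/k$. Since $k$ is odd there are no ties, and the vote is wrong exactly when $m_i\ge (k+1)/2$, i.e.\ $m_i/k>1/2$. This gives the Markov-type pointwise bound
\[
\mathbf{1}\{\widehat y_i\neq y_i\}\le \mathbf{1}\{m_i/k\ge 1/2\}\le 2\,m_i/k .
\]
Averaging over $Q$ yields Eq.~\eqref{eq:energy_majority_bound}.

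The only point needing care is this last step. We must check that the misclassification event is exactly $m_i>k/2$, which uses oddness of $k$ to exclude ties, and that the indicator-to-ratio bound holds with constant $2$. Beyond that, we should state explicitly that the neighbor sets are taken excluding $i$ itself and that the same $\mathcal{N}^{(s)}_k(i)$ is used in both the predictor and the energy, since the bounds rely on this matching.
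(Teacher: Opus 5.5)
Your proof is correct and follows the paper's argument essentially line by line: Jensen's inequality applied per anchor for the regression bound, the identity $\mathbf{1}\{y_i\neq y_j\}=(y_i-y_j)^2$ for the $1$-NN equality, and the pointwise bound $\mathbf{1}\{\widehat y_i\neq y_i\}\le 2m_i/k$ for majority vote, each then averaged over $Q$. One small refinement of your closing caveats: oddness of $k$ and excluding $i$ from its own neighbor set are not needed for the inequalities themselves, since any tie-breaking rule still gives $m_i\ge k/2$ whenever the vote errs, and a self-neighbor term simply vanishes on both sides; only the requirement that the predictor and the energy use the same neighbor sets is essential.
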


\begin{proof}
\textit{Regression.}
By definition,
\begin{equation}
    y_i - \widehat{y}_i
    =
    \frac{1}{k}
    \sum_{j\in\mathcal{N}^{(s)}_k(i)}
    (y_i - y_j).
\end{equation}
By Jensen's inequality applied to the squared function,
\begin{equation}
    (y_i - \widehat{y}_i)^2
    \leq
    \frac{1}{k}
    \sum_{j\in\mathcal{N}^{(s)}_k(i)}
    (y_i - y_j)^2.
\end{equation}
Averaging over $i\in Q$ gives the regression bound.
Taking the square root yields the RMSE bound.

\textit{Binary classification, $k=1$.}
For $y_i, y_{j^\star(i)} \in \{0, 1\}$,
\begin{equation}
    \mathbf{1}\{y_i \neq y_{j^\star(i)}\}
    =
    (y_i - y_{j^\star(i)})^2.
\end{equation}
Averaging over $i\in Q$ yields Eq.~\eqref{eq:energy_1nn_error}.

\textit{Binary classification, $k$NN majority vote.}
If the majority vote disagrees with $y_i$, then at least half of the $k$ neighbors must have labels different from $y_i$:
\begin{equation}
    \mathbf{1}\{\widehat{y}_i \neq y_i\}
    \leq
    2 \cdot \frac{1}{k}
    \sum_{j\in\mathcal{N}^{(s)}_k(i)}
    \mathbf{1}\{y_j \neq y_i\}.
\end{equation}
Using $\mathbf{1}\{y_j\neq y_i\} = (y_i - y_j)^2$ for binary labels and averaging over $i\in Q$ yields Eq.~\eqref{eq:energy_majority_bound}.
\end{proof}

\paragraph{Interpretation.}
Proposition~\ref{prop:energy_knn_risk} shows that low label energy implies the existence of a low-error local predictor operating directly in the prior space.
Label energy is therefore a prediction-theoretic diagnostic of prior--task compatibility.
For regression, it directly controls the RMSE of a prior-space $k$NN regressor.
For binary classification, it controls nearest-neighbor disagreement in the prior space.
Although our classification metric is ROC-AUC rather than 0--1 error, low local disagreement is a useful indicator that the prior geometry separates labels in a way that can support downstream classification.

\subsection{No-Worse Approximation on a Finite Set}
\label{app:theory:noworse}

We finally give a finite-set justification for preserving prior information in learned representations.
The purpose is not to prove that PRISM always improves generalization, but to clarify why retaining recoverable prior information should not reduce empirical predictive power when the downstream predictor class is sufficiently expressive.

Let $\mathcal{D} = \{(x_i, y_i)\}_{i=1}^n$ be a finite task dataset.
For prior space $s$, let $\pi_i^{(s)}$ denote the prior feature and $r_i^{(s)}$ the learned representation produced by the corresponding branch.

\begin{proposition}[No-worse approximation on a finite set]
\label{prop:no_worse_finite}
Assume there exists a recovery map $f_{\mathrm{rec}}^{(s)}$ such that
\begin{equation}
    \pi_i^{(s)}
    =
    f_{\mathrm{rec}}^{(s)}(r_i^{(s)})
    \quad
    \text{for all } i = 1,\ldots,n.
\label{eq:prior_recovery_assumption}
\end{equation}
Let $\mathcal{F}$ be a class of predictors on prior features and $\mathcal{H}$ a class of predictors on learned representations.
Define the empirical squared errors
\begin{equation}
    \widehat{e}_\pi
    =
    \inf_{f\in\mathcal{F}}
    \frac{1}{n}\sum_{i=1}^n
    (y_i - f(\pi_i^{(s)}))^2,
    \qquad
    \widehat{e}_r
    =
    \inf_{h\in\mathcal{H}}
    \frac{1}{n}\sum_{i=1}^n
    (y_i - h(r_i^{(s)}))^2.
\end{equation}
If $\{f \circ f_{\mathrm{rec}}^{(s)} : f \in \mathcal{F}\} \subseteq \mathcal{H}$, then $\widehat{e}_r \leq \widehat{e}_\pi$.
\end{proposition}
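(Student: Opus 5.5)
The argument is a direct comparison of infima: every predictor on prior features can be lifted to a predictor on learned representations that makes exactly the same predictions on the dataset. The lift is composition with the recovery map $f_{\mathrm{rec}}^{(s)}$. Since $\widehat{e}_r$ is an infimum over a class that contains all such lifts, it cannot exceed $\widehat{e}_\pi$.

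Concretely, I would fix an arbitrary $f\in\mathcal{F}$ and set $h_f = f\circ f_{\mathrm{rec}}^{(s)}$. By the inclusion hypothesis, $h_f\in\mathcal{H}$. By the recovery assumption in Eq.~\eqref{eq:prior_recovery_assumption}, for every $i=1,\ldots,n$ we have
\[
h_f(r_i^{(s)}) = f\bigl(f_{\mathrm{rec}}^{(s)}(r_i^{(s)})\bigr) = f(\pi_i^{(s)}).
\]
The empirical squared errors of $h_f$ on representations and of $f$ on prior features therefore coincide term by term. Hence
\[
\widehat{e}_r \le \frac{1}{n}\sum_{i=1}^n \bigl(y_i - h_f(r_i^{(s)})\bigr)^2 = \frac{1}{n}\sum_{i=1}^n \bigl(y_i - f(\pi_i^{(s)})\bigr)^2 .
\]
Because $f$ was arbitrary, $\widehat{e}_r$ is a lower bound for the empirical error of every $f\in\mathcal{F}$. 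Taking the infimum over $f\in\mathcal{F}$ on the right-hand side then gives $\widehat{e}_r\le\widehat{e}_\pi$.

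There is no real obstacle; the proof is essentially set inclusion for infima. The only point needing care is that the recovery identity is assumed only on the $n$ data points. This suffices, because the empirical errors depend on predictors only through their values at those points, so $h_f$ need not agree with $f$ anywhere else. I would also remark that the argument does not require the infima to be attained, and that it holds even when $\widehat{e}_\pi$ is approached only along a sequence of predictors. Finally, I would note that the statement says nothing about population risk or generalization, which matches the interpretive caveat preceding the proposition.
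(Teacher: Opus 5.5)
Your proof is correct and follows the paper's argument exactly: you lift each $f\in\mathcal{F}$ to $h_f=f\circ f_{\mathrm{rec}}^{(s)}\in\mathcal{H}$, use the recovery identity to match predictions on the $n$ data points, and compare infima. Your extra remarks, that recovery is needed only on the data and that the infima need not be attained, are accurate and do not change the approach.
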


\begin{proof}
For any $f\in\mathcal{F}$, let $h_f(r) = f(f_{\mathrm{rec}}^{(s)}(r))$.
By the predictor-class assumption, $h_f\in\mathcal{H}$.
By the recovery assumption, $h_f(r_i^{(s)}) = f(\pi_i^{(s)})$ for all $i$.
Therefore the empirical error of $h_f$ on $\mathcal{D}$ equals the empirical error of $f$ on $\mathcal{D}$.
Taking infima on both sides yields $\widehat{e}_r \leq \widehat{e}_\pi$.
\end{proof}

\paragraph{Interpretation.}
Proposition~\ref{prop:no_worse_finite} states that if prior features are recoverable from learned representations on a finite dataset, then any predictor that could have been built from prior features can also be represented as a predictor on the learned representations, provided the downstream predictor class is sufficiently expressive.
Therefore, preserving prior information need not make the representation empirically worse than using prior features directly.
This is a finite-set approximation argument, not a generalization theorem.
It depends on the recoverability assumption and on the expressiveness of the downstream predictor class.

\clearpage
\section{Additional Results}
\label{app:extra_results}

This section provides additional results that complement the main tables in Section~4.
We include the extended MoleculeNet comparison, pure-solvent OOD slice analysis, and temperature-dependent behavior analysis.
We avoid repeating main-text tables unless additional decomposition or diagnostic information is provided.
Mixed-solvent architectural ablations are reported separately in Appendix~\ref{app:extra_ablation:mixed_head}, because they test alternative prediction heads rather than additional main-result settings.

\subsection{Extended MoleculeNet Comparison}
\label{app:extra_results:molnet_full}

The main text reports a compact subset of representative baselines.
Table~\ref{tab:molnet_full} provides an extended MoleculeNet comparison, including traditional feature-based models, supervised GNNs, 2D pretrained molecular models, geometry-aware pretrained models, and PRISM controls.
We include HIV where matched results are available in our runs; external baselines without matched HIV results under our protocol are marked as not reported rather than imputed.

We emphasize three details about this comparison.
First, the default PRISM model uses only 2D molecular graphs and chemically interpretable 2D prior spaces.
Geometry-aware baselines are included for completeness but are grouped separately.
Second, the reported \texttt{GROVER\_large} values are taken from the standardized KANO benchmark rather than directly from the original GROVER paper.
This avoids mixing incompatible evaluation protocols and addresses known sensitivity of MoleculeNet scaffold splits.
Third, HIV is included as a classification task, but not every external baseline has a matched HIV value under the same evaluation protocol; we therefore report available HIV values explicitly and avoid filling missing entries with incompatible numbers.

\begin{table*}[h]
\centering
\small
\setlength{\tabcolsep}{4.0pt}
\caption{
\textbf{Extended MoleculeNet comparison on classification tasks (ROC-AUC \%, $\uparrow$) and regression tasks (RMSE, $\downarrow$).}
\textbf{Bold} indicates the best result among reported values in each column.
PRISM uses only 2D molecular graphs; geometry-aware baselines are reported for completeness and grouped separately.
Dashes indicate that a matched value under our protocol is not reported.
}
\label{tab:molnet_full}
\resizebox{0.99\textwidth}{!}{%
\begin{tabular}{l|ccc|ccc}
\toprule
& \multicolumn{3}{c|}{\textbf{Classification} (ROC-AUC \%, $\uparrow$)}
& \multicolumn{3}{c}{\textbf{Regression} (RMSE, $\downarrow$)} \\
\midrule
Dataset & BBBP & BACE & HIV & ESOL & FreeSolv & Lipo \\
\# Molecules & 2,039 & 1,513 & 41,127 & 1,128 & 642 & 4,200 \\
\midrule
\multicolumn{7}{l}{\textit{\textbf{Traditional ML} (fingerprints + VSA)}} \\
MLP
& 85.1 $\pm$ 1.6
& 85.2 $\pm$ 0.8
& 70.0 $\pm$ 1.7
& 1.461 $\pm$ 0.060
& 4.942 $\pm$ 0.084
& 0.878 $\pm$ 0.016 \\
XGBoost
& 84.7 $\pm$ 0.0
& 88.5 $\pm$ 0.0
& 74.7 $\pm$ 0.8
& 0.925 $\pm$ 0.000
& 3.398 $\pm$ 0.000
& 0.741 $\pm$ 0.000 \\
\midrule
\multicolumn{7}{l}{\textit{\textbf{Supervised GNNs} (without pretraining)}} \\
GIN
& 65.8 $\pm$ 4.5
& 70.1 $\pm$ 5.4
& 75.3 $\pm$ 1.9
& 1.452 $\pm$ 0.020
& 2.765 $\pm$ 0.180
& 0.850 $\pm$ 0.071 \\
D-MPNN
& 91.9 $\pm$ 3.0
& 85.2 $\pm$ 0.6
& 77.1 $\pm$ 0.5
& 1.050 $\pm$ 0.008
& 1.673 $\pm$ 0.082
& 0.683 $\pm$ 0.016 \\
Attentive FP
& 64.3 $\pm$ 1.8
& 78.4 $\pm$ 0.0
& 75.8 $\pm$ 1.5
& 0.877 $\pm$ 0.029
& 2.073 $\pm$ 0.183
& 0.721 $\pm$ 0.001 \\
D-MPNN w.\ Prior
& 90.6 $\pm$ 0.1
& 87.7 $\pm$ 0.2
& 73.0 $\pm$ 0.2
& 0.798 $\pm$ 0.003
& 2.665 $\pm$ 0.084
& 0.645 $\pm$ 0.004 \\
\midrule
\multicolumn{7}{l}{\textit{\textbf{Pretrained models (2D-only)}}} \\
PretrainGNN
& 68.7 $\pm$ 1.3
& 84.5 $\pm$ 0.7
& 79.5 $\pm$ 0.7
& 1.100 $\pm$ 0.006
& 2.764 $\pm$ 0.002
& 0.739 $\pm$ 0.003 \\
GROVER$_{\mathrm{large}}$
& 86.8 $\pm$ 2.2
& 82.4 $\pm$ 3.6
& 68.2 $\pm$ 1.1
& 1.423 $\pm$ 0.288
& 2.947 $\pm$ 0.615
& 0.823 $\pm$ 0.010 \\
MolCLR
& 73.3 $\pm$ 1.0
& 82.8 $\pm$ 0.7
& 77.4 $\pm$ 0.6
& 1.113 $\pm$ 0.023
& 2.301 $\pm$ 0.247
& 0.789 $\pm$ 0.009 \\
MoAMa
& 72.8 $\pm$ 0.9
& 80.1 $\pm$ 0.5
& 79.3 $\pm$ 0.6
& 1.394 $\pm$ 0.120
& 2.700 $\pm$ 0.110
& 0.772 $\pm$ 0.030 \\
MoleculeSDE$_{\mathrm{GIN}}$
& 67.4 $\pm$ 1.2
& 78.1 $\pm$ 4.1
& 76.5 $\pm$ 0.8
& 2.080 $\pm$ 0.190
& 3.640 $\pm$ 0.170
& 0.610 $\pm$ 0.000 \\
\midrule
\multicolumn{7}{l}{\textit{\textbf{Pretrained models (3D / geometry-aware)}}} \\
3D-Infomax
& 69.1 $\pm$ 1.1
& 79.4 $\pm$ 1.9
& 75.9 $\pm$ 0.59
& 0.894 $\pm$ 0.028
& 2.337 $\pm$ 0.227
& 0.739 $\pm$ 0.009 \\
GEM
& 88.8 $\pm$ 0.4
& 87.9 $\pm$ 1.1
& \textbf{81.3 $\pm$ 0.3}
& 0.813 $\pm$ 0.028
& 1.748 $\pm$ 0.114
& 0.674 $\pm$ 0.022 \\
Uni-Mol
& 72.9 $\pm$ 0.6
& 85.7 $\pm$ 0.2
& 79.6 $\pm$ 0.5
& 0.788 $\pm$ 0.029
& \textbf{1.620 $\pm$ 0.035}
& 0.603 $\pm$ 0.010 \\
\midrule
PRISM$_{\mathrm{random}}$
& 93.0 $\pm$ 0.6
& 87.8 $\pm$ 1.7
& 75.1 $\pm$ 0.5
& 0.848 $\pm$ 0.025
& 2.222 $\pm$ 0.054
& 0.642 $\pm$ 0.014 \\
D-MPNN (reconstruction)
& 94.1 $\pm$ 0.6
& 71.1 $\pm$ 6.7
& 73.4 $\pm$ 0.9
& 0.829 $\pm$ 0.076
& 2.308 $\pm$ 0.075
& 0.577 $\pm$ 0.019 \\
\textbf{PRISM}
& \textbf{94.6 $\pm$ 0.3}
& \textbf{89.4 $\pm$ 0.4}
& 77.7 $\pm$ 0.6
& \textbf{0.772 $\pm$ 0.002}
& 1.770 $\pm$ 0.093
& \textbf{0.538 $\pm$ 0.018} \\
\bottomrule
\end{tabular}%
}
\end{table*}

\paragraph{Discussion.}
PRISM achieves the best reported performance on BBBP, BACE, ESOL, and Lipophilicity, and remains competitive on HIV and FreeSolv.
The comparison with PRISM$_{\mathrm{random}}$ isolates the effect of PRISM initialization under the same four-branch architecture and downstream heads.
The weaker relative performance on HIV and FreeSolv reflects two different limitations.
HIV has weaker alignment with the default VSA/fingerprint priors, although Appendix~\ref{app:extra_ablation:dynamic_selection} shows that selecting a Pharm2D prior using label energy improves HIV performance.
FreeSolv shows that low local label energy alone does not guarantee an advantage over a strong supervised baseline when task size, baseline strength, and missing task-specific factors such as 3D hydration geometry matter.

\subsection{Pure-Solvent OOD Slice Analysis}
\label{app:extra_results:slice}

Aggregate external-test metrics do not distinguish between different sources of distribution shift.
We therefore decompose the Leeds and SolProp results by solvent identity and solvent-conditioned temperature range.
All test solutes are absent from the processed BigSolDB dataset.
We use this complete dataset, including its training and validation partitions, as the reference for defining the slices.
Solute and solvent identities are matched using canonical isomeric SMILES.

A test solvent is classified as seen if its canonical SMILES appears in the BigSolDB reference set, and unseen otherwise.
All three Leeds solvents---acetone, benzene, and ethanol---are seen under this definition.
SolProp contains both seen and unseen solvents.

For each seen solvent, we define the reference temperature range by the minimum and maximum temperatures observed for that solvent in BigSolDB.
A test measurement is classified as temperature in-range if its temperature lies within this inclusive interval, and as temperature extrapolation otherwise.
Temperature slices are defined only for seen solvents.
All Leeds measurements are temperature in-range, so its overall, seen-solvent, and temperature-in-range slices contain the same records.

Table~\ref{tab:ood_slices_unseen_solute} reports the resulting comparisons.
All 1,294 Leeds records and 5,230 SolProp records are retained.
Repeated measurements at the same solute--solvent--temperature conditions remain separate records.
The reported metrics summarize the fixed prediction sets used for this analysis; seed-level uncertainty is not reported.

\begin{table*}[t]
\centering
\caption{\textbf{OOD slice analysis on unseen-solute tests.}
Slices are defined relative to the complete processed BigSolDB dataset using canonical isomeric SMILES.
Temperature slices use the solvent-specific temperature range and apply only to seen solvents.
For Leeds, the seen-solvent and temperature-in-range slices coincide with the overall set.
Metrics are RMSE ($\downarrow$) and pct-in-1 ($\uparrow$), the percentage of predictions within one $\log S$ unit of the measured value.
Differences are PRISM minus Fastprop; differences in pct-in-1 are reported in percentage points (pp).}
\label{tab:ood_slices_unseen_solute}
\small
\setlength{\tabcolsep}{6pt}
\resizebox{0.99\textwidth}{!}{%
\begin{tabular}{llrcccccc}
\toprule
Dataset & Slice & $n$
& \multicolumn{2}{c}{Fastprop}
& \multicolumn{2}{c}{PRISM}
& \multicolumn{2}{c}{$\Delta$ (PRISM--Fastprop)} \\
& & &
RMSE $\downarrow$ & pct-in-1 (\%) $\uparrow$ &
RMSE $\downarrow$ & pct-in-1 (\%) $\uparrow$ &
RMSE & pct-in-1 (pp) \\
\midrule
\multicolumn{9}{l}{\textbf{Leeds}} \\
\multicolumn{9}{l}{\emph{All solutes are unseen; all solvents are seen; all temperatures are in-range.}} \\
Leeds & Overall
& 1,294 & 0.954 & 73.57 & 0.909 & 75.43 & $-0.045$ & $+1.85$ \\
\midrule
\multicolumn{9}{l}{\textbf{SolProp}} \\
SolProp & Overall
& 5,230 & 0.833 & 78.07 & 0.773 & 82.28 & $-0.060$ & $+4.21$ \\
SolProp & Solvent: Seen
& 4,988 & 0.835 & 77.83 & 0.772 & 82.40 & $-0.062$ & $+4.57$ \\
SolProp & Solvent: Unseen
& 242 & 0.795 & 83.06 & 0.781 & 79.75 & $-0.014$ & $-3.31$ \\
SolProp & Temp.\ (solvent-conditioned): In-range
& 4,791 & 0.840 & 77.46 & 0.767 & 82.84 & $-0.073$ & $+5.39$ \\
SolProp & Temp.\ (solvent-conditioned): Extrapolation
& 197 & 0.693 & 86.80 & 0.894 & 71.57 & $+0.201$ & $-15.23$ \\
\bottomrule
\end{tabular}%
}
\end{table*}

\paragraph{Discussion.}
PRISM achieves lower overall RMSE and higher pct-in-1 on both external datasets.
On SolProp, it improves both metrics for seen solvents and for temperatures within the solvent-specific reference range.
The results are mixed for unseen solvents: PRISM reduces RMSE from 0.795 to 0.781, but pct-in-1 decreases from 83.06\% to 79.75\%.
Temperature extrapolation remains a limitation.
On the 197 measurements outside the corresponding solvent-specific temperature ranges, PRISM has higher RMSE than Fastprop (0.894 versus 0.693) and lower pct-in-1 (71.57\% versus 86.80\%).
These results support improved aggregate external-test performance while showing that the gains do not extend uniformly across solvent and temperature shifts.

\subsection{Temperature-Dependent Behavior and
{Normalized} EMD Analysis}
\label{app:temperature_trends}

Solubility prediction is not only a pointwise regression problem.
For temperature-dependent measurements, a useful model should also
reproduce the empirical distribution of temperature sensitivity.
We therefore compare the distribution of finite-difference temperature gradients
\begin{equation}
    \frac{d\log S}{dT}
\end{equation}
computed from experimental measurements and model predictions.
This analysis supports the main-text claim that PRISM better matches
the empirical temperature-gradient distribution, with
{normalized EMD} 0.01 compared with 0.02 for Fastprop.

\paragraph{Gradient construction.}
{
For each solute--solvent pair with at least two measurements, we compute
numerical derivatives using \texttt{numpy.gradient}, supplying the
recorded temperatures as the coordinate array.
Let $\mathbf{T}=(T_1,\ldots,T_n)$,
$\mathbf{y}=(\log S_1,\ldots,\log S_n)$, and
$\widehat{\mathbf{y}}=(\widehat{\log S}_1,\ldots,\widehat{\log S}_n)$
denote the temperatures, experimental values, and model predictions
at the same measurement conditions.
The experimental gradients are
}
\begin{equation}
    {
    \mathbf{g}^{\mathrm{exp}}
    =
    \operatorname{gradient}(\mathbf{y},\mathbf{T}).
    }
\label{eq:temp_gradient_exp}
\end{equation}
{
For each model, we apply the same operator to its predictions:
}
\begin{equation}
    {
    \mathbf{g}^{\mathrm{model}}
    =
    \operatorname{gradient}(\widehat{\mathbf{y}},\mathbf{T}).
    }
\label{eq:temp_gradient_model}
\end{equation}
{
Here, $\operatorname{gradient}$ uses centered finite differences at
interior points and first-order one-sided differences at the boundaries,
with the supplied temperature coordinates accounting for nonuniform spacing.
We retain entries only when both the experimental and predicted
gradients are finite.
}

\paragraph{Distribution construction and
{normalized} EMD.}
We aggregate gradients across all eligible solute--solvent temperature
series in the evaluation set.
{
For the CDFs and metric computation, we construct histograms using
$M=200$ equally spaced bins over the fixed gradient interval
$[-0.05,0.05]$.
Gradients outside this interval are excluded from the respective
histograms, and each histogram is normalized by its retained count
to obtain a probability mass function.
Panel~(e) uses 120 bins over the same interval to visualize the counts.
}
Let $p\in\mathbb{R}^{M}$ be the empirical gradient distribution and
$q\in\mathbb{R}^{M}$ be the model-gradient distribution.

For one-dimensional histograms with ordered bins, Earth Mover's Distance
can be computed from cumulative distribution functions.
Let
\begin{equation}
    F_p(b)
    =
    \sum_{m=1}^{b}p_m,
    \qquad
    F_q(b)
    =
    \sum_{m=1}^{b}q_m.
\end{equation}
{
We report normalized EMD, defined as the mean absolute difference
between the two CDF arrays:
}
\begin{equation}
    {
    \mathrm{EMD}_{\mathrm{norm}}(p,q)
    =
    \frac{1}{M}
    \sum_{b=1}^{M}
    \left|F_p(b)-F_q(b)\right|.
    }
\label{eq:emd}
\end{equation}
{
Because both CDF arrays are nondecreasing, this expression is exactly
the value returned by applying \texttt{wasserstein\_distance} to the
two arrays.
For equal-width bins with interval length $L=0.1$ and bin width
$\Delta=L/M$, it is also equivalent to the histogram EMD
$\sum_b |F_p(b)-F_q(b)|\Delta$ divided by $L$.
The reported metric is therefore dimensionless.
Lower normalized EMD indicates closer agreement between the experimental
and predicted gradient distributions within the analyzed interval.
}

\begin{figure*}[h]
  \centering
  \includegraphics[width=\textwidth]{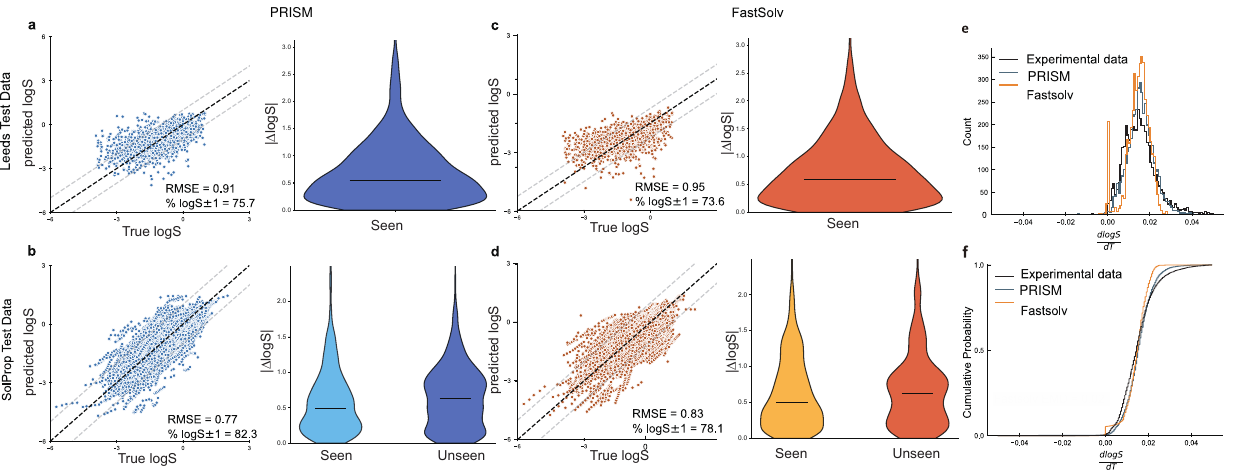}
  \caption{
  \textbf{Pure-solvent external-test behavior and temperature-gradient analysis.}
  Panels (a,b) show PRISM predictions on Leeds and SolProp, together
  with error distributions for seen/unseen solvent slices.
  Panels (c,d) show the corresponding Fastprop predictions.
  Panels (e,f) compare the empirical and model-induced finite-difference
  temperature-gradient distributions.
  PRISM achieves lower {normalized EMD} to the empirical
  $d\log S/dT$ distribution than Fastprop (0.01 vs.\ 0.02), despite
  not using an explicit temperature-gradient loss.
  }
  \label{fig:pure_solvent_temperature}
\end{figure*}

\paragraph{Comparison with Fastprop.}
Fastprop explicitly incorporates a temperature-gradient loss on
$d\log S/dT$, whereas PRISM does not impose such a thermodynamic
derivative constraint.
Despite this, PRISM obtains a lower {normalized EMD}
between the predicted and experimental gradient distributions.
This suggests that prior-induced molecular representations,
combined with solute--solvent relation modeling, help capture
temperature-dependent solubility trends without directly optimizing
a gradient-matching objective.

\paragraph{Caveat.}
This distribution-level result should not be interpreted as solving
all forms of temperature extrapolation.
As shown in Appendix~\ref{app:extra_results:slice}, the small
solvent-conditioned temperature-extrapolation subset in SolProp
remains difficult.
The {normalized EMD} analysis supports a weaker
and more specific claim: across the evaluated external-test
temperature intervals, PRISM reproduces the aggregate distribution
of temperature sensitivity more faithfully than the baseline
{within the analyzed gradient range}.

\subsection{Independent Pure/Binary/Ternary Mixture Evaluation}
\label{app:cenci_mixture}

To evaluate whether PRISM representations remain useful beyond the binary-solvent benchmark in Section~\ref{sec:experiments}, we additionally use the independent dataset of Cenci et al.~\citep{cenci2024solubility}.
The dataset contains solubility measurements for one active pharmaceutical ingredient (API) in 14 organic solvents and includes pure solvents, binary mixtures, and ternary mixtures at multiple temperatures.
Following the original calibration--validation protocol, 176 pure- and binary-solvent measurements are used for calibration, while the 288-sample validation set contains pure, binary, and ternary solvent systems under new compositions and temperatures.

Because this dataset contains only one solute and a relatively small calibration set, we use it as a representation-reuse test rather than a new end-to-end molecular-learning benchmark.
We freeze the PRISM-pretrained encoders, construct composition-weighted solvent representations, and fit a low-capacity quadratic Ridge regressor on the calibration data.
We compare this model with the UNIFAC--PLS approach used by Cenci et al., and additionally evaluate a hybrid model combining UNIFAC and PRISM information.

\begin{table}[h]
\centering
\small
\setlength{\tabcolsep}{8pt}
\caption{
\textbf{Independent pure-, binary-, and ternary-solvent evaluation on the Cenci et al.~dataset.}
The original calibration--validation split is retained.
Lower log-RMSE and higher $R^2$ indicate better performance.
}
\label{tab:cenci_mixture}
\begin{tabular}{lccc}
\toprule
Model
& Overall log-RMSE $\downarrow$
& Overall $R^2$ $\uparrow$
& Ternary log-RMSE $\downarrow$ \\
\midrule
UNIFAC--PLS
& 0.3061
& 0.9042
& 0.0921 \\

PRISM--Ridge
& 0.2738
& 0.9217
& 0.1954 \\

UNIFAC + PRISM
& \textbf{0.2502}
& \textbf{0.9409}
& \textbf{0.0904} \\
\bottomrule
\end{tabular}
\end{table}

PRISM--Ridge improves the overall validation performance relative to UNIFAC--PLS, reducing log-RMSE from 0.3061 to 0.2738 and increasing $R^2$ from 0.9042 to 0.9217.
However, PRISM alone is weaker on the small ternary subset, where UNIFAC--PLS achieves a lower log-RMSE.
Combining UNIFAC and PRISM yields the best overall performance and also slightly improves the ternary result.
These results suggest that PRISM representations can be reused beyond binary mixtures and provide information complementary to a specialized physicochemical mixture model, while not replacing such domain-specific modeling in every regime.

\subsection{Unlabeled Target-Domain Adaptation}
\label{app:target_adaptation}

We investigate whether additional relational pretraining on unlabeled
target-domain molecules improves pure-solvent external transfer.
Let $\mathcal{Z}$ denote the ZINC250k pretraining corpus and
$\mathcal{T}$ the set of unique solute and solvent molecules collected
from BigSolDB, Leeds, and SolProp.
We canonicalize molecular SMILES and remove duplicates when constructing
the joint pool $\mathcal{U}=\mathcal{Z}\cup\mathcal{T}$.
Only molecular structures and their computed prior features are used
during relational pretraining; Leeds and SolProp solubility labels
are not used for pretraining or supervised fine-tuning.
These experiments therefore evaluate transductive adaptation separately
from the default inductive protocol used in the main results.

\paragraph{Adaptation protocols.}
The default protocol uses ZINC250k molecules as both anchors and
candidate neighbors.
We first evaluate target-only adaptation, where both are drawn from
$\mathcal{T}$.
We then retain target-domain molecules as anchors while expanding
the candidate-neighbor pool to $\mathcal{U}$, evaluating this setting
at 100 and 300 epochs.
Finally, we evaluate joint source--target pretraining, where both
anchors and candidate neighbors are drawn from $\mathcal{U}$,
for 100 epochs.
In each case, relational pretraining is followed by the same
supervised BigSolDB fine-tuning and external evaluation on Leeds
and SolProp.

\begin{table*}[t]
\centering
\small
\setlength{\tabcolsep}{5pt}
\caption{\textbf{Unlabeled target-domain adaptation for pure-solvent
external transfer.}
``Target'' denotes $\mathcal{T}$ and ``joint'' denotes
$\mathcal{U}=\mathcal{Z}\cup\mathcal{T}$.
Epoch counts, where shown, refer to the relational-pretraining stage.
Results are mean $\pm$ standard deviation over three seeds.
The metric pct-in-1 is the percentage of predictions within one
log-solubility unit of the experimental value.}
\label{tab:target_adaptation}
\resizebox{\textwidth}{!}{%
\begin{tabular}{lcccc}
\toprule
Protocol
& Leeds RMSE $\downarrow$
& SolProp RMSE $\downarrow$
& Leeds pct-in-1 (\%) $\uparrow$
& SolProp pct-in-1 (\%) $\uparrow$ \\
\midrule
ZINC-only pretraining (100 epochs)
& $0.910 \pm 0.002$
& $0.770 \pm 0.030$
& $75.7 \pm 0.9$
& $82.3 \pm 0.8$ \\
Target-only adaptation
& $0.920 \pm 0.002$
& $0.740 \pm 0.041$
& $74.1 \pm 0.7$
& $82.5 \pm 0.6$ \\
Target anchors, joint neighbors (100 epochs)
& $0.910 \pm 0.002$
& $0.775 \pm 0.040$
& $75.6 \pm 0.6$
& $82.4 \pm 0.5$ \\
Target anchors, joint neighbors (300 epochs)
& $0.941 \pm 0.002$
& $0.796 \pm 0.022$
& $73.1 \pm 0.6$
& $79.1 \pm 0.4$ \\
Joint anchors and neighbors (100 epochs)
& $0.908 \pm 0.002$
& $0.766 \pm 0.030$
& $75.7 \pm 0.9$
& $82.5 \pm 0.6$ \\
\bottomrule
\end{tabular}%
}
\end{table*}

\paragraph{Discussion.}
The effect of additional adaptation depends on the protocol and
evaluation dataset.
Target-only adaptation improves SolProp RMSE but worsens Leeds
performance.
Expanding the candidate-neighbor pool restores Leeds RMSE to the
default level at 100 epochs, without improving SolProp RMSE.
Extending this target-anchor protocol to 300 epochs worsens performance
on both datasets, indicating that longer adaptation is not necessarily
beneficial in this setting.

Joint source--target pretraining yields small numerical improvements
over the default protocol: RMSE decreases from 0.910 to 0.908 on Leeds
and from 0.770 to 0.766 on SolProp.
Leeds pct-in-1 remains unchanged, while SolProp pct-in-1 increases
from 82.3\% to 82.5\%.
Thus, although joint pretraining provides modest gains in this
comparison, additional access to unlabeled target-domain molecules
does not produce consistent improvements across the evaluated
adaptation protocols.
We retain the default inductive protocol for the main experiments,
as it achieves comparable performance without using external-test
molecules during pretraining.

\clearpage
\section{Additional Ablation Studies}
\label{app:extra_ablation}

This section provides additional ablation studies that complement the main-text results without repeating tables already reported in Section~4.
The main paper reports the mixed-solvent random and solvent-pair-disjoint results, the Gaussian-random-prior control on the primary MoleculeNet tasks, the solubility pretraining ablation, and the KL/$\ell_2$/rank alignment-objective comparison.
Here we include additional controls that are not fully shown in the main text:
(i) the effect of PRISM initialization on MoleculeNet under the same PRISM architecture;
(ii) the single-shared-encoder baseline on MoleculeNet;
(iii) leave-one-prior-space-out ablations;
(iv) task-specific versus instance-wise weighted fusion;
(v) mixed-solvent head ablations under the frozen-encoder protocol;
(vi) extended prior-quality and dynamic prior-selection analyses on additional bioactivity tasks;
(vii) compatibility with 3D encoders and 3D prior spaces; and
(viii) the effect of pretraining-data scale.

\paragraph{Protocol note on pretraining hyperparameters.}
The sensitivity analyses in Appendix~\ref{app:sensitivity_tau_gamma} are used as a small validation-driven selection grid for PRISM pretraining hyperparameters.
Thus, the main reported PRISM results use the task-specific settings listed in Table~\ref{tab:hp_single}, rather than a single universal pretraining checkpoint for all tasks.
The statement that $\tau=0.1,\gamma_s=0.25$ is a simple robust default should therefore be interpreted as a practical baseline setting that performs reasonably across several tasks, not as the exact setting used for every reported main result.
All PRISM controls and ablations for a given task use the corresponding task-level setting unless explicitly stated otherwise.
This avoids attributing gains to downstream heads alone, but it also means that comparisons against external baselines should be read as comparisons under each method's reported model-selection protocol rather than as a strict equal-pretraining-grid comparison.

\paragraph{Protocol note on mixed-solvent baselines.}
The mixed-solvent PRISM experiments use original PRISM-pretrained encoders and initialize the fusion module from a pure-solvent checkpoint.
The four end-to-end MPNN baselines in the main mixed-solvent table are trained on the mixed-solvent dataset under the reported mixed-solvent protocols and do not use the same ZINC-plus-BigSolDB pretraining path.
Therefore, the mixed-solvent comparison is best interpreted as a representation-reuse probe: it tests whether PRISM-pretrained molecular representations can be reused under frozen encoders and lightweight composition-weighted pooling.
It is not a strict equal-data-budget comparison between pretraining strategies.
This distinction is important because the fusion initialization provides PRISM with additional task-relevant solute--solvent experience from BigSolDB.

\subsection{PRISM Initialization Control on MoleculeNet}
\label{app:extra_ablation:molnet_pretrain}

We first isolate the effect of PRISM pretraining on MoleculeNet.
Both models in Table~\ref{tab:abl_pretrain_molnet} use the same four-branch PRISM architecture and the same downstream heads.
The only difference is initialization: PRISM$_{\mathrm{random}}$ is trained from random initialization, whereas PRISM uses PRISM-pretrained branches.

\begin{table}[h]
\caption{\textbf{Effect of PRISM pretraining on MoleculeNet.}
Same four-branch PRISM architecture with identical heads trained from random initialization
(PRISM$_{\mathrm{random}}$) versus PRISM-pretrained initialization (PRISM).}
\label{tab:abl_pretrain_molnet}
\centering
\small
\setlength{\tabcolsep}{5pt}
\resizebox{0.99\textwidth}{!}{%
\begin{tabular}{l|ccc|ccc}
\toprule
& \multicolumn{3}{c|}{\textbf{Classification} (ROC-AUC \%, $\uparrow$)}
& \multicolumn{3}{c}{\textbf{Regression} (RMSE, $\downarrow$)} \\
\midrule
Init & BBBP & BACE & HIV & ESOL & FreeSolv & Lipo \\
\midrule
PRISM$_{\mathrm{random}}$
& $93.0\pm0.6$
& $87.8\pm1.7$
& $75.1\pm0.5$
& $0.848\pm0.025$
& $2.222\pm0.054$
& $0.642\pm0.014$ \\
PRISM
& $\mathbf{94.6\pm0.3}$
& $\mathbf{89.4\pm0.4}$
& $\mathbf{77.7\pm0.6}$
& $\mathbf{0.772\pm0.002}$
& $\mathbf{1.770\pm0.093}$
& $\mathbf{0.538\pm0.018}$ \\
\midrule
$\Delta$ (PRISM $-$ random)
& $+1.6$
& $+1.6$
& $+2.6$
& $-0.076$
& $-0.452$
& $-0.104$ \\
\bottomrule
\end{tabular}
}
\end{table}

\paragraph{Discussion.}
PRISM pretraining improves the same PRISM architecture on all six MoleculeNet tasks.
This rules out the explanation that the gains come only from increasing model capacity through a multi-branch architecture.

\subsection{Compression diagnostic via PRISM-Distilled}
\label{app:distillation}

\paragraph{Construction.}
PRISM-Distilled is a single-encoder variant trained from PRISM under the same alignment family (Section~\ref{sec:prism_family}).
The teacher is the multi-prior PRISM model with four pretrained branch encoders.
For each molecule $x_i$, we extract its concatenated multi-prior representation
\[
H_i = [h_i^{(s_1)};\, h_i^{(s_2)};\, h_i^{(s_3)};\, h_i^{(s_4)}] \in \mathbb{R}^{K \times D}
\]
and treat $H_i$ as a new ``prior representation'' that defines a teacher-induced relational structure over the molecular population.
We then train a fresh single MPNN student encoder $G_{\theta}$ from random initialization to align with this teacher relation under the same KL alignment objective in Eq.~\eqref{eq:align_loss}, with locality scale $\beta$ rescaled to the median pairwise distance of $\{H_i\}$ in the mini-batch.
The student therefore inherits the relational geometry of the teacher (which molecules are close to which) rather than absolute coordinates, scales, or branch-specific information.

\paragraph{Pretraining and downstream protocol.}
Distillation is run on the same 250K ZINC subset used for PRISM pretraining, with batch size 512, embedding temperature $\tau=0.1$, and AdamW optimizer.
For all downstream evaluations, we replace the PRISM encoder $E_\Theta$ with the distilled student $G_\theta$ and reuse the same task-specific heads and training protocols described in the main text (Section~\ref{sec:task_heads} and Appendix~\ref{app:impl}).
We test three downstream regimes: pure-solvent external transfer (Table~\ref{tab:pure_distill_results}), mixed-solvent prediction under both frozen and fine-tuned encoder settings (Table~\ref{tab:mixed_distill_results}), and single-molecule downstream tasks with encoder fine-tuning (Table~\ref{tab:single_molecule_distill_results}).
All results are the mean $\pm$ std over three seeds.

\paragraph{Results.}
On pure-solvent external transfer (Table~\ref{tab:pure_distill_results}), distillation raises Leeds RMSE from $0.91$ to $0.95$ and SolProp RMSE from $0.77$ to $0.97$, while $\%|\Delta\log S|\leq 1$ drops by under $6$ points on both datasets.
On mixed-solvent prediction (Table~\ref{tab:mixed_distill_results}), the frozen distilled encoder reaches $0.123 \pm 0.013$ on the random split and $0.457 \pm 0.110$ on the pair-disjoint split, compared with $0.095 \pm 0.008$ and $0.440 \pm 0.010$ for PRISM; the mean-error gap is small but the run-to-run standard deviation on the pair-disjoint split widens by roughly an order of magnitude.
Fine-tuning the distilled encoder does not close this gap.
On single-molecule downstream tasks (Table~\ref{tab:single_molecule_distill_results}), distillation degradation is substantially larger: FreeSolv RMSE rises from $1.770$ to $3.987$, BACE ROC-AUC drops from $89.4$ to $75.08$, and BBBP from $94.6$ to $85.80$.
The contrast across the three settings supports the reading in Section~\ref{sec:ablation}: solubility tasks survive relational compression because they depend on neighborhood structure that the student inherits, while single-molecule endpoints rely more heavily on coordinate-level or branch-specific signals that purely relational distillation does not preserve.

\begin{table}[h]
\centering
\caption{
\textbf{Pure-solvent external transfer with distilled vs.\ normal encoder.}
RMSE is lower better, $\%|\Delta \log S|\leq 1$ is higher better.
}
\label{tab:pure_distill_results}
\begin{tabular}{lcccc}
\toprule
Dataset 
& \multicolumn{2}{c}{Distilled encoder} 
& \multicolumn{2}{c}{Normal encoder} \\
\cmidrule(lr){2-3} \cmidrule(lr){4-5}
& RMSE $\downarrow$ 
& $\%|\Delta \log S|\leq 1$ $\uparrow$
& RMSE $\downarrow$ 
& $\%|\Delta \log S|\leq 1$ $\uparrow$ \\
\midrule
Leeds 
& $0.95 \pm 0.0136$ 
& $73.93 \pm 0.62$ 
& $0.91 \pm 0.002$ 
& $75.7 \pm 0.9$ \\
SolProp 
& $0.97 \pm 0.0235$ 
& $77.24 \pm 1.11$ 
& $0.77 \pm 0.030$ 
& $82.3 \pm 0.8$ \\
\bottomrule
\end{tabular}
\end{table}

\begin{table}[h]
\centering
\caption{
\textbf{Mixed-solvent prediction with distilled vs.\ normal encoder, under random and pair-disjoint splits.}
RMSE is lower better, $\%|\Delta \log S|\leq 1$ is higher better.
}
\label{tab:mixed_distill_results}
\begin{tabular}{llcc}
\toprule
Encoder setting & Split 
& RMSE $\downarrow$ 
& $\%|\Delta \log S|\leq 1$ $\uparrow$ \\
\midrule
Frozen distilled encoder 
& Random 
& $0.123 \pm 0.013$ 
& $99.95 \pm 0.02$ \\
Frozen normal encoder 
& Random 
& $0.095 \pm 0.008$ 
& $100.0 \pm 0.0$ \\
Fine-tuned distilled encoder 
& Random 
& $0.1321 \pm 0.015$ 
& $99.96 \pm 0.03$ \\
\midrule
Frozen distilled encoder 
& Pair-disjoint 
& $0.457 \pm 0.110$ 
& $95.59 \pm 3.75$ \\
Frozen normal encoder 
& Pair-disjoint 
& $0.440 \pm 0.010$ 
& $96.6 \pm 0.9$ \\
Fine-tuned distilled encoder 
& Pair-disjoint 
& $0.462 \pm 0.026$ 
& $96.41 \pm 3.52$ \\
\bottomrule
\end{tabular}
\end{table}

\begin{table}[h]
\centering
\caption{
\textbf{Single-molecule downstream results with distilled vs.\ normal encoder, both fine-tuned.}
RMSE is lower better for regression, ROC-AUC is higher better for classification.
}
\label{tab:single_molecule_distill_results}
\begin{tabular}{lccc}
\toprule
Dataset & Metric & Distilled encoder & Normal encoder \\
\midrule
ESOL 
& RMSE $\downarrow$ 
& $0.9072 \pm 0.012$ 
& $0.772 \pm 0.002$ \\
FreeSolv 
& RMSE $\downarrow$ 
& $3.987 \pm 0.106$ 
& $1.770 \pm 0.09$ \\
Lipo 
& RMSE $\downarrow$ 
& $0.888 \pm 0.035$ 
& $0.538 \pm 0.02$ \\
BACE 
& ROC-AUC $\uparrow$ 
& $75.08 \pm 5.2$ 
& $89.4 \pm 0.4$ \\
BBBP 
& ROC-AUC $\uparrow$ 
& $85.80 \pm 3.2$ 
& $94.6 \pm 0.3$ \\
\bottomrule
\end{tabular}
\end{table}

\subsection{Single-Shared-Encoder Baseline on MoleculeNet}
\label{app:extra_ablation:shared_encoder}

PRISM pretrains one encoder branch per prior space.
A natural alternative is to use a single encoder and optimize it with the sum of all prior-alignment losses:
\begin{equation}
    \mathcal{L}_{\mathrm{shared}}
    =
    \sum_{s\in\mathcal{S}}
    \mathcal{L}^{(s)}_{\mathrm{align}}.
\end{equation}
This baseline tests whether separate prior-specialized branches are necessary, or whether one embedding space can absorb all prior geometries simultaneously.

Table~\ref{tab:abl_shared_encoder} compares the single-shared-encoder baseline with the multi-branch PRISM design.

\begin{table}[h]
\centering
\small
\setlength{\tabcolsep}{6pt}
\caption{\textbf{Single shared encoder versus multi-branch PRISM on MoleculeNet.}
The shared-encoder baseline uses one D-MPNN trained with the sum of all prior-alignment losses.
Classification metrics are ROC-AUC, and regression metrics are RMSE.}
\label{tab:abl_shared_encoder}
\resizebox{0.99\textwidth}{!}{%
\begin{tabular}{lccccc}
\toprule
Model & BBBP $\uparrow$ & BACE $\uparrow$ & ESOL $\downarrow$ & FreeSolv $\downarrow$ & Lipo $\downarrow$ \\
\midrule
Single shared encoder
& $0.851\pm0.032$
& $0.860\pm0.020$
& $0.9209\pm0.027$
& $2.040\pm0.197$
& $0.740\pm0.011$ \\
PRISM
& $\mathbf{0.946\pm0.003}$
& $\mathbf{0.894\pm0.004}$
& $\mathbf{0.772\pm0.002}$
& $\mathbf{1.770\pm0.093}$
& $\mathbf{0.538\pm0.018}$ \\
\bottomrule
\end{tabular}}
\end{table}

\paragraph{Discussion.}
The shared-encoder baseline is consistently weaker than the multi-branch design.
This suggests that heterogeneous prior geometries are difficult to absorb into a single embedding space through a simple summed alignment loss.
Separate branches allow each encoder to specialize in one chemically meaningful relational view before downstream fusion.

\subsection{Leave-One-Prior-Space-Out Ablations}
\label{app:extra_ablation:loso_prior}

We evaluate whether each prior branch contributes to downstream performance.
For each ablation, we remove one prior branch while keeping the same training and evaluation protocol.
Tables~\ref{tab:abl_spaces_cls} and~\ref{tab:abl_spaces_reg} report leave-one-prior-space-out results on MoleculeNet classification and regression tasks.
Table~\ref{tab:abl_spaces_solv} reports the corresponding ablations on pure-solvent and mixed-solvent solubility tasks.

\begin{table}[h]
\centering
\small
\setlength{\tabcolsep}{7pt}
\caption{\textbf{Leave-one-prior-space-out ablations on MoleculeNet classification.}
AUC is reported in the standard $[0,1]$ scale.}
\label{tab:abl_spaces_cls}
\begin{tabular}{lccc}
\toprule
Spaces used & BBBP & BACE & HIV \\
\midrule
all & $0.9456\pm0.0031$ & $0.8936\pm0.0047$ & $0.7768\pm0.0092$ \\
w/o FP & $0.9316\pm0.0031$ & $0.8846\pm0.0019$ & $0.7140\pm0.0107$ \\
w/o PEOE & $0.9380\pm0.0052$ & $0.8821\pm0.0013$ & $0.7249\pm0.0079$ \\
w/o SlogP & $0.9256\pm0.0023$ & $0.8839\pm0.0019$ & $0.7191\pm0.0045$ \\
w/o SMR & $0.9283\pm0.0085$ & $0.8768\pm0.0096$ & $0.7274\pm0.0091$ \\
\bottomrule
\end{tabular}
\vspace{-2mm}
\end{table}

\begin{table}[h]
\centering
\small
\setlength{\tabcolsep}{7pt}
\caption{\textbf{Leave-one-prior-space-out and single-prior ablations on MoleculeNet regression.}
Metrics are RMSE ($\downarrow$).}
\label{tab:abl_spaces_reg}
\begin{tabular}{lccc}
\toprule
Spaces used & ESOL & FreeSolv & Lipo \\
\midrule
all & $0.7722\pm0.0275$ & $1.7702\pm0.0444$ & $0.5380\pm0.0156$ \\
w/o FP & $0.8129\pm0.0175$ & $2.5011\pm0.2005$ & $0.5739\pm0.0099$ \\
w/o PEOE & $0.7826\pm0.0274$ & $2.2335\pm0.1533$ & $0.5563\pm0.0082$ \\
w/o SlogP & $0.8080\pm0.0046$ & $2.4074\pm0.1035$ & $0.5525\pm0.0007$ \\
w/o SMR & $0.7780\pm0.0085$ & $2.4312\pm0.1630$ & $0.5733\pm0.0106$ \\
\midrule
SMR only & $0.8247\pm0.0295$ & $2.4812\pm0.1673$ & $0.5736\pm0.0041$ \\
SlogP only & $0.7791\pm0.0079$ & $2.6472\pm0.4197$ & $0.5671\pm0.0071$ \\
PEOE only & $0.8311\pm0.0256$ & $2.4134\pm0.0292$ & $0.5764\pm0.0159$ \\
FP only & $0.7706\pm0.0112$ & $2.5058\pm0.2614$ & $0.5984\pm0.0026$ \\
\bottomrule
\end{tabular}
\vspace{-2mm}
\end{table}

\begin{table*}[h]
\centering
\small
\setlength{\tabcolsep}{5pt}
\caption{\textbf{Leave-one-prior-space-out ablations on solubility tasks.}
Metrics are RMSE ($\downarrow$) and $\%|\Delta\log S|\leq1$ (\%, $\uparrow$).
Pure-solvent results are evaluated on Leeds and SolProp.
Mixed-solvent results are evaluated under the solvent-pair-disjoint OOD split using the same frozen-encoder linear-mixing protocol as the main text.}
\label{tab:abl_spaces_solv}
\resizebox{0.99\textwidth}{!}{%
\begin{tabular}{lcccccc}
\toprule
& \multicolumn{3}{c}{RMSE ($\downarrow$)}
& \multicolumn{3}{c}{$\%|\Delta\log S|\leq1$ (\%, $\uparrow$)} \\
\cmidrule(lr){2-4}\cmidrule(lr){5-7}
Spaces used
& Leeds & SolProp & Mixed OOD
& Leeds & SolProp & Mixed OOD \\
\midrule
all
& $0.910\pm0.002$ & $0.770\pm0.030$ & $0.440\pm0.008$
& $75.7\pm0.8$ & $82.3\pm0.8$ & $96.6\pm0.9$ \\
w/o SMR
& $0.9660\pm0.002$ & $0.8040\pm0.008$ & $0.501\pm0.009$
& $73.2\pm0.3$ & $80.8\pm0.5$ & $89.4\pm1.5$ \\
w/o SlogP
& $0.9586\pm0.002$ & $0.8120\pm0.031$ & $0.472\pm0.010$
& $73.2\pm0.6$ & $78.2\pm2.1$ & $90.6\pm1.0$ \\
w/o PEOE
& $0.9580\pm0.002$ & $0.8065\pm0.027$ & $0.488\pm0.007$
& $73.2\pm0.8$ & $80.0\pm0.1$ & $91.7\pm0.7$ \\
w/o FP
& $0.9720\pm0.004$ & $0.8203\pm0.102$ & $0.637\pm0.024$
& $72.5\pm1.3$ & $79.3\pm0.9$ & $87.1\pm2.7$ \\
\bottomrule
\end{tabular}}
\vspace{-2mm}
\end{table*}

\paragraph{Discussion.}
Removing any prior branch generally weakens performance.
On MoleculeNet classification, the effect is especially clear on HIV, where removing any branch substantially lowers ROC-AUC.
For regression tasks, single-prior variants are generally weaker than the full multi-prior model on FreeSolv and Lipophilicity.
ESOL is a small exception: the FP-only variant obtains $0.7706\pm0.0112$, slightly below the all-space result $0.7722\pm0.0275$.
Because this difference is small and within the reported variability, we interpret it as evidence that ESOL can be well supported by a strong structural prior, rather than as evidence against multi-prior complementarity overall.
The solubility ablations show a clearer multi-prior pattern: removing individual prior spaces weakens external-transfer performance on pure-solvent solubility and degrades the frozen-encoder-branch mixed-solvent OOD setting under the same linear-mixing protocol used in the main text.
This supports the use of multiple chemically motivated prior spaces for relation-centric solubility prediction.

\paragraph{Note on ESOL reporting across tables.}
The ESOL numbers in Tables~\ref{tab:molnet_full}, \ref{tab:abl_spaces_reg}, and \ref{tab:fusion_full} are produced under different diagnostic protocols.
Table~\ref{tab:molnet_full} reports the main selected PRISM result.
Table~\ref{tab:abl_spaces_reg} reports leave-one-prior and single-prior ablations, where the ``all'' row repeats the corresponding ablation protocol rather than the exact main-table run.
Table~\ref{tab:fusion_full} isolates the fusion mechanism under a fixed fusion-mode comparison.
These values are therefore not intended to be numerically identical, although they support the same qualitative conclusion that PRISM is strong on ESOL.

\subsection{Fusion Strategy: Task-specific versus Instance-Wise Weights}
\label{app:extra_ablation:fusion}

PRISM supports both task-specific and instance-wise weighted fusion.
The task-specific variant uses a single learnable weight vector for all molecules in a task, whereas the instance-wise variant computes sample-dependent weights from the branch representations.
Table~\ref{tab:fusion_full} compares these two fusion strategies for classification, single-molecule regression, and pure-solvent solubility tasks.

We do not include mixed-solvent prediction in this fusion-mode comparison because the main mixed-solvent protocol freezes the pretrained PRISM encoders but keeps the pure-solvent-initialized fusion module trainable.
In that setting, the goal is not to compare static versus molecule-specific fusion modes, but to test whether frozen prior-specialized encoders can be compositionally reused through lightweight downstream fusion and composition-weighted pooling.

\begin{table}[h]
\centering
\small
\setlength{\tabcolsep}{6pt}
\caption{\textbf{Task-specific versus instance-wise weighted fusion.}
Classification reports ROC-AUC (\%, $\uparrow$); single-molecule regression reports RMSE ($\downarrow$);
pure-solvent solubility reports $\%|\Delta\log S|\leq1$ (\%, $\uparrow$) and RMSE ($\downarrow$).
The pure-solvent RMSE values follow the same external-transfer protocol as the main pure-solvent table.}
\label{tab:fusion_full}
\begin{tabular}{lcc}
\toprule
\textbf{Setting} & \textbf{Task-specific weights} & \textbf{Instance-wise weights} \\
\midrule
BBBP (AUC) & $94.6\pm0.3$ & $72.28\pm1.5$ \\
BACE (AUC) & $87.9\pm0.4$ & $80.91\pm0.4$ \\
HIV (AUC)  & $77.7\pm0.6$ & $73.2\pm0.8$ \\
\midrule
ESOL (RMSE) & $0.929\pm0.003$ & $0.782\pm0.002$ \\
FreeSolv (RMSE) & $2.069\pm0.007$ & $1.770\pm0.093$ \\
Lipo (RMSE) & $0.664\pm0.034$ & $0.538\pm0.018$ \\
\midrule
Leeds ($\%|\Delta\log S|\leq1$) & $75.7\pm0.8$ & $68.62\pm0.70$ \\
SolProp ($\%|\Delta\log S|\leq1$) & $82.3\pm0.8$ & $72.84\pm1.2$ \\
Leeds (RMSE) & $0.91\pm0.002$ & $1.070\pm0.003$ \\
SolProp (RMSE) & $0.77\pm0.030$ & $0.903\pm0.022$ \\
\bottomrule
\end{tabular}
\vspace{-2mm}
\end{table}

\paragraph{Discussion.}
Molecule-specific fusion helps single-molecule regression, where the dominant physicochemical factor can vary substantially across molecules.
In contrast, task-specific fusion is more stable for classification and solute--solvent prediction.
This suggests that relation-level or conditional prediction tasks can make sample-wise gating more prone to overfitting, especially when the target depends on interactions among multiple molecules rather than a single molecular property.

\subsection{Mixed-Solvent Head Ablation}
\label{app:extra_ablation:mixed_head}

The main mixed-solvent experiments use frozen PRISM encoders with a trainable pure-solvent-initialized fusion module and a simple composition-weighted head.
All variants use the same frozen PRISM encoders and train the fusion module together with the mixed-solvent prediction head.
The simple composition-weighted head performs best and has the lowest variance, suggesting that the main mixed-solvent gain comes from reusable PRISM-pretrained encoders and a symmetry-respecting mixture representation rather than from a more expressive mixture-specific head.

The three heads differ in how composition, temperature, and solvent representations are combined.
The weighted-sum head first forms the main-text solvent-mixture representation
\begin{equation}
    z_{\mathrm{mix}}=\alpha_1 z_1+\alpha_2z_2,
    \qquad
    \alpha_1+\alpha_2=1,
\end{equation}
and predicts from $[z_{\mathrm{sol}};z_{\mathrm{mix}};t]$.
This representation is permutation-invariant to swapping the two solvent identities together with their mole fractions.
The direct-concat head instead avoids forming an explicit mixture embedding: it maps the scalar composition and temperature into embeddings,
\begin{equation}
    e_\alpha=h_\alpha(\alpha_1),
    \qquad
    t=h_T(T),
\end{equation}
and predicts directly from the concatenated representation
\begin{equation}
    [z_{\mathrm{sol}};\,z_1;\,z_2;\,e_\alpha;\,t].
\end{equation}
Unlike the weighted-sum head, this direct concatenation is not permutation-invariant by construction, so the MLP must learn the solvent-order symmetry from data.
The attention interaction head explicitly decomposes the mixture representation into an ideal additive term and a non-ideal interaction residual.
It first forms the same ideal composition-weighted solvent embedding as the main-text head,
\begin{equation}
    z_{\mathrm{ideal}}
    =
    \alpha_1 z_1 + \alpha_2 z_2,
    \qquad \alpha_1+\alpha_2=1 .
\end{equation}
It then uses a cross-attention module over the two solvent representations to estimate a non-ideal solvent--solvent interaction term:
\begin{equation}
    z_{\mathrm{nonideal}}
    =
    \alpha_1\alpha_2\,
    \mathrm{CrossAttn}(z_1,z_2).
\end{equation}
The multiplicative factor $\alpha_1\alpha_2$ makes the residual vanish at pure-solvent endpoints and emphasizes it near mixed compositions, where non-ideal solvent--solvent effects are most relevant.
The final solvent-mixture representation is
\begin{equation}
    z_{\mathrm{mix}}
    =
    z_{\mathrm{ideal}} + z_{\mathrm{nonideal}},
\end{equation}
and the prediction head uses $[z_{\mathrm{sol}};z_{\mathrm{mix}};t]$ as input.
Compared with the linear weighted-sum head, this variant gives the model additional capacity to learn composition-dependent deviations from ideal additive mixing.

Table~\ref{tab:mixed_head_ablation} reports the mixed-solvent head ablation results.

\begin{table}[h]
\centering
\small
\setlength{\tabcolsep}{7pt}
\caption{\textbf{Mixed-solvent head ablation under the solvent-pair-disjoint OOD split.}
All variants use the same frozen PRISM encoders and train the fusion module together with the mixed-solvent prediction head.
The simple composition-weighted head performs best and has the lowest variance.}
\label{tab:mixed_head_ablation}
\begin{tabular}{lcc}
\toprule
Head design & RMSE $\downarrow$ & $\%|\Delta\log S|\leq1$ $\uparrow$ \\
\midrule
Direct concat head
& $0.550\pm0.031$ & $91.3\pm2.8$ \\
Attention interaction head
& $0.445\pm0.020$ & $94.3\pm0.8$ \\
Linear weighted-sum head
& $\mathbf{0.440\pm0.008}$ & $\mathbf{96.6\pm0.9}$ \\
\bottomrule
\end{tabular}
\vspace{-2mm}
\end{table}

\paragraph{Discussion.}
The direct-concat head underperforms the composition-weighted sum.
This result should be interpreted carefully: the gap may reflect both the value of forming a composition-weighted mixture embedding and the fact that the weighted-sum head enforces the natural permutation symmetry of binary solvent mixtures.
Thus, the experiment does not by itself prove that separated solvent embeddings are intrinsically worse.
Rather, it shows that, under the frozen-encoder protocol, a symmetry-respecting linear composition head is a stronger and more stable way to reuse PRISM solvent representations.
The attention interaction head is more expressive and achieves a similar RMSE to the weighted-sum head, but it has lower $\%|\Delta\log S|\leq1$ and larger variance.
The simple weighted-sum head is therefore preferred because it is both more stable and more aligned with the representation-transfer hypothesis tested in the main text.
Rather than relying on a task-specific interaction module to relearn solvent-pair effects, this head asks whether frozen PRISM-pretrained encoders, combined with lightweight trainable fusion, can be compositionally reused under an interpretable linear operation.

\subsection{Solvent Aggregation, Prior Fusion, and Warm-Start Initialization}
\label{app:mixed_aggregation_fusion}

We further examine how solvent aggregation, prior-branch fusion,
and warm-start initialization affect mixed-solvent OOD performance.
The preceding head comparison evaluates alternative prediction heads;
here, we additionally vary the fusion mechanism and its initialization
to distinguish their contributions.

\paragraph{Experimental setup.}
All configurations retain the frozen PRISM-pretrained molecular
encoders and train the downstream fusion or aggregation components
together with the mixed-solvent prediction head.
We consider two ways to aggregate solvent representations:
composition-weighted summation and concatenation.
We separately compare learned prior-branch fusion
(\emph{Fusion Head}) with concatenation of the prior-branch
representations (\emph{Branch Concatenation}).
Thus, solvent concatenation combines representations of different
solvent molecules, whereas branch concatenation combines
representations from different prior-specialized encoders.

For configurations with a fusion head, we evaluate both
pure-solvent warm-start initialization and random initialization.
Warm start initializes the fusion module from the pure-solvent
task before mixed-solvent training.
Removing warm start changes this initialization while retaining
the frozen PRISM-pretrained encoders.
It therefore tests the contribution of supervised pure-solvent
fusion initialization, rather than removing PRISM pretraining.
Results are reported as mean and standard deviation across five
solvent-pair-disjoint splits.

\begin{table*}[t]
\centering
\caption{
\textbf{Ablation of solvent aggregation, prior-branch fusion,
and warm-start initialization under solvent-pair-disjoint OOD.}
All configurations use frozen PRISM-pretrained encoders.
Warm start refers to initialization of the fusion module from
the pure-solvent task.
Results are mean $\pm$ standard deviation across five splits.
The first row is the default PRISM configuration.
The metric pct-in-1 is the percentage of predictions with
absolute error at most one log-solubility unit.
}
\label{tab:mixed_aggregation_fusion}
\small
\setlength{\tabcolsep}{7pt}
\resizebox{0.99\textwidth}{!}{%
\begin{tabular}{lllcc}
\toprule
Solvent aggregation
& Prior aggregation
& Warm start
& RMSE $\downarrow$
& pct-in-1 (\%) $\uparrow$ \\
\midrule
Weighted sum
& Fusion head
& Yes
& $\mathbf{0.438 \pm 0.030}$
& $\mathbf{96.6 \pm 0.9}$ \\

Weighted sum
& Fusion head
& No
& $0.442 \pm 0.080$
& $96.1 \pm 1.4$ \\

Weighted sum
& Branch concatenation
& No
& $0.525 \pm 0.255$
& $93.7 \pm 6.4$ \\
\midrule
Concatenation
& Fusion head
& Yes
& $0.550 \pm 0.051$
& $91.3 \pm 3.1$ \\

Concatenation
& Fusion head
& No
& $0.557 \pm 0.183$
& $91.2 \pm 7.1$ \\

Concatenation
& Branch concatenation
& No
& $0.570 \pm 0.311$
& $91.7 \pm 8.3$ \\
\bottomrule
\end{tabular}%
}
\end{table*}

\paragraph{Effects of aggregation and fusion.}
Table~\ref{tab:mixed_aggregation_fusion} supports the use of
composition-weighted solvent aggregation and learned prior fusion.
With warm start disabled, replacing learned prior fusion with
branch concatenation under weighted-sum solvent aggregation
increases mean RMSE from $0.442$ to $0.525$ and its standard
deviation from $0.080$ to $0.255$.
Under the same initialization setting, retaining learned prior
fusion but replacing weighted-sum solvent aggregation with
concatenation increases mean RMSE from $0.442$ to $0.557$.
These comparisons hold warm-start status fixed and therefore
separate the architectural effects from pure-solvent initialization.

The solvent-aggregation comparison should nevertheless be
interpreted with the symmetry distinction in mind.
Composition-weighted summation is invariant to jointly swapping
solvent identities and their mole fractions, whereas direct
concatenation does not enforce this invariance by construction.
The observed advantage therefore supports the weighted-sum
design as a whole, rather than isolating composition weighting
from permutation symmetry.

\paragraph{Effect of warm-start initialization.}
Warm start changes mean RMSE only slightly but reduces the
observed variation across splits under both solvent aggregation
schemes.
With weighted-sum aggregation, RMSE changes from
$0.442 \pm 0.080$ without warm start to
$0.438 \pm 0.030$ with warm start.
With concatenation, it changes from
$0.557 \pm 0.183$ to $0.550 \pm 0.051$.
The standard deviation of pct-in-1 also decreases in both cases.
Thus, the main benefit observed for warm start in this experiment
is improved consistency across splits, with a smaller change
in average predictive accuracy.

The default configuration achieves the lowest mean RMSE and
highest mean pct-in-1 among the evaluated configurations.
Its similar mean RMSE without warm start suggests that
pure-solvent fusion initialization alone does not explain
its OOD performance.
These results support composition-weighted aggregation and
learned prior fusion for reusing frozen PRISM representations,
with warm start providing an additional reduction in observed
split-to-split variability.

\subsection{Extended Gaussian Random Prior Control on Classification Tasks}
\label{app:extra_ablation:gaussian}

The main text reports the Gaussian-random-prior control on the three solubility-related MoleculeNet regression tasks.
Here we provide the corresponding classification-side control, covering MoleculeNet classification benchmarks and two additional bioactivity datasets used in the prior-generality analysis.
Each molecule is assigned a 10-dimensional i.i.d.\ Gaussian vector sampled from $\mathcal{N}(0,1)$, independent of molecular structure and labels.
This random vector defines a structureless prior space for PRISM pretraining.

Table~\ref{tab:abl_gaussian_random_classification} reports the results.
Across all classification tasks, Gaussian-prior pretraining degrades performance relative to the non-pretrained D-MPNN baseline, confirming that PRISM depends on prior quality.
However, performance does not abruptly collapse, suggesting that the D-MPNN backbone and downstream supervised training can still recover part of the molecular signal even under an uninformative relational target.

\begin{table}[h]
\centering
\small
\setlength{\tabcolsep}{6pt}
\caption{\textbf{Gaussian random prior control on classification and bioactivity tasks.}
Classification performance is reported as ROC-AUC (\%, $\uparrow$).
The random prior degrades performance relative to supervised D-MPNN but does not cause abrupt collapse.}
\label{tab:abl_gaussian_random_classification}
\begin{tabular}{lccc}
\toprule
Dataset 
& Energy (Gaussian) 
& PRISM + Gaussian prior 
& D-MPNN w/o pretraining \\
\midrule
BBBP 
& 0.356 
& $83.07\pm3.04$ 
& $91.9\pm3.0$ \\
BACE 
& 0.498 
& $71.17\pm9.70$ 
& $85.2\pm0.6$ \\
HIV 
& 0.067 
& $72.29\pm1.98$ 
& $77.1\pm0.5$ \\
CYP2D6 
& 0.025 
& $66.34\pm0.75$ 
& $72.7\pm0.37$ \\
KOR 
& 0.406 
& $74.42\pm0.22$ 
& $82.0\pm0.18$ \\
\bottomrule
\end{tabular}
\end{table}

\paragraph{Discussion.}
The Gaussian prior is intentionally uninformative.
The additional CYP2D6 and KOR results show the same trend as the main-text Gaussian-prior control: performance degrades compared with the supervised D-MPNN baseline, but the model does not collapse.
This supports the interpretation that PRISM depends on prior quality, and that poor priors can be identified by label-energy diagnostics.

\subsection{Dynamic Prior Selection via Label Energy Ratio}
\label{app:extra_ablation:dynamic_selection}

A ratio $\rho_s < 1$ indicates that prior $s$ induces smoother label neighborhoods than a random prior and is selected for PRISM pretraining.
We test this task-level prior-selection rule using Pharm2D, fingerprint, and VSA priors on MoleculeNet classification benchmarks and two additional bioactivity datasets, CYP2D6 and KOR.

Table~\ref{tab:abl_dynamic_prior_selection} summarizes the selected priors and downstream results.

\begin{table*}[h]
\centering
\small
\setlength{\tabcolsep}{5pt}
\caption{\textbf{Dynamic prior selection using label-energy ratio.}
A ratio below one indicates that the candidate prior is more label-smooth than a Gaussian random prior.
Classification performance is reported as ROC-AUC (\%, $\uparrow$).}
\label{tab:abl_dynamic_prior_selection}
\resizebox{0.99\textwidth}{!}{%
\begin{tabular}{lcccp{3.0cm}ccc}
\toprule
Dataset
& $\rho_{\mathrm{Pharm2D}}$
& $\rho_{\mathrm{FP}}$
& $\rho_{\mathrm{VSA}}$
& Selected priors
& PRISM with selected priors
& PRISM with all priors
& D-MPNN w/o pretraining \\
\midrule
BBBP
& 0.166
& 0.096
& 0.109
& Pharm2D + FP + VSA
& $95.8\pm0.6$
& $95.8\pm0.6$
& $91.9\pm3.0$ \\
BACE
& 0.264
& 0.109
& 0.118
& Pharm2D + FP + VSA
& $89.3\pm0.9$
& $89.3\pm0.9$
& $85.2\pm0.6$ \\
HIV
& 0.73
& 2.87
& 2.63
& Pharm2D
& $78.66\pm2.00$
& $76.35\pm1.90$
& $77.1\pm0.50$ \\
CYP2D6
& 0.96
& 1.05
& 1.06
& Pharm2D
& $73.31\pm1.02$
& $42.08\pm9.35$
& $72.7\pm0.37$ \\
KOR
& 0.48
& 0.47
& 0.56
& Pharm2D + FP + VSA
& $88.91\pm0.29$
& $88.91\pm0.29$
& $82.0\pm0.18$ \\
\bottomrule
\end{tabular}}
\end{table*}

\paragraph{Discussion.}
This experiment reframes prior choice as a measurable task-dependent selection problem.
For BBBP, BACE, and KOR, all three candidate priors satisfy $\rho_s<1$, so Pharm2D, FP, and VSA are jointly selected.
For HIV and CYP2D6, only Pharm2D satisfies the selection criterion, while FP and VSA have $\rho_s>1$ and are excluded.
Selected-prior PRISM consistently outperforms the non-pretrained D-MPNN baseline across the five classification tasks.

Comparing selected-prior PRISM with PRISM trained over all candidate priors further shows why prior selection matters.
When all candidate priors are label-compatible, as in BBBP, BACE, and KOR, selected-prior and all-prior PRISM coincide.
When label-incompatible priors are forcibly included, as in CYP2D6 and HIV, downstream performance can degrade.
This pattern shows that prior quality is not only quantifiable but also actionable: the label-energy ratio identifies useful prior spaces before pretraining and filters out priors that may harm representation learning.

\subsection{Compatibility with 3D Backbones and 3D Priors}
\label{app:extra_ablation:schnet}

The default PRISM implementation uses a 2D D-MPNN backbone and 2D prior spaces.
To test whether PRISM is tied to 2D graph encoders, we replace the backbone with SchNet, a standard 3D geometry-aware molecular encoder.
RDKit-generated conformers are used as input.
We consider three settings:
(i) randomly initialized SchNet;
(ii) PRISM-pretrained SchNet using Morgan FP as the prior space; and
(iii) PRISM-pretrained SchNet using a compact 3D descriptor prior.
For both PRISM settings, pretraining uses 50k randomly sampled ZINC molecules.

Table~\ref{tab:abl_schnet} reports this 3D-backbone compatibility experiment.

\begin{table}[h]
\centering
\small
\setlength{\tabcolsep}{5pt}
\caption{\textbf{PRISM with a 3D SchNet backbone.}
Classification metrics are ROC-AUC, and regression metrics are RMSE.
The 3D prior uses RDKit shape descriptors covering global shape, anisotropy, compactness, spatial extent, and principal-axis mass distribution.}
\label{tab:abl_schnet}
\resizebox{0.99\textwidth}{!}{%
\begin{tabular}{lccccc}
\toprule
Model & BBBP $\uparrow$ & BACE $\uparrow$ & ESOL $\downarrow$ & FreeSolv $\downarrow$ & Lipo $\downarrow$ \\
\midrule
Random SchNet
& $0.830\pm0.033$
& $0.807\pm0.022$
& $0.716\pm0.045$
& $1.318\pm0.361$
& $0.775\pm0.032$ \\
PRISM + SchNet (FP prior)
& $0.835\pm0.048$
& $\mathbf{0.830\pm0.043}$
& $\mathbf{0.643\pm0.032}$
& $\mathbf{1.062\pm0.150}$
& $\mathbf{0.686\pm0.038}$ \\
PRISM + SchNet (3D descriptor prior)
& $\mathbf{0.844\pm0.022}$
& $0.807\pm0.070$
& $0.704\pm0.079$
& $1.756\pm0.262$
& $0.801\pm0.010$ \\
\bottomrule
\end{tabular}}
\end{table}

\paragraph{Discussion.}
These preliminary results suggest that PRISM is not tied to a 2D graph backbone.
With SchNet as a 3D geometry-aware encoder, PRISM pretraining with the FP prior improves BACE, ESOL, FreeSolv, and Lipo over random SchNet, while the compact 3D descriptor prior improves BBBP.
The 3D descriptor prior is not uniformly stronger than the FP prior, likely because it summarizes only coarse global shape and does not encode detailed local chemistry.
The goal of this ablation is compatibility rather than state-of-the-art 3D performance.
The default results in this paper remain centered on 2D graph-based property prediction, but the PRISM objective itself is encoder-agnostic: it only requires a prior-induced relational structure and an embedding-induced relational structure to align.

\subsection{Effect of Pretraining Data Size}
\label{app:extra_ablation:scale}

Finally, we evaluate whether PRISM benefits from more unlabeled pretraining data.
Table~\ref{tab:pretrain_size_reg} compares pretraining on 25k, 50k, and 250k molecules before fine-tuning on regression tasks.

\begin{table}[h]
\centering
\small
\setlength{\tabcolsep}{6pt}
\caption{\textbf{Effect of pretraining data size on regression tasks.}
Metrics are RMSE ($\downarrow$, mean $\pm$ std).}
\label{tab:pretrain_size_reg}
\begin{tabular}{lccc}
\toprule
Pretrain size & ESOL & FreeSolv & Lipo \\
\midrule
25k
& $1.011\pm0.031$
& $2.029\pm0.113$
& $0.698\pm0.011$ \\
50k
& $0.933\pm0.009$
& $2.138\pm0.025$
& $0.664\pm0.003$ \\
250k
& $\mathbf{0.772\pm0.002}$
& $\mathbf{1.770\pm0.093}$
& $\mathbf{0.538\pm0.018}$ \\
\bottomrule
\end{tabular}
\end{table}

\paragraph{Discussion.}
Performance improves substantially when scaling from 25k or 50k molecules to 250k molecules.
This suggests that PRISM benefits from a larger unlabeled corpus because the encoder observes a broader range of prior-induced neighborhoods during pretraining.
The result is consistent with the role of PRISM as a relational pretraining objective rather than a task-specific regularizer.

\clearpage
\section{Interpretability and Diagnostic Analyses}
\label{app:interp}

This section provides additional analyses for interpreting PRISM.
We focus on three complementary questions.
First, we test whether fusion weights are directionally faithful to perturbation-based importance.
Second, we visualize whether PRISM pretraining preserves prior-induced molecular geometry.
Third, we test whether and where prior information remains recoverable from frozen PRISM-pretrained embeddings.
The final probe is interpreted as a diagnostic rather than as a full proof of recoverability for every prior space.

\subsection{Faithfulness Tests for Fusion Weights}
\label{app:interp:faithfulness}

PRISM fusion weights are intended to represent soft preferences over prior spaces.
Because the prior spaces are chemically related and partially redundant, these weights should not be interpreted as hard causal assignments.
Nevertheless, a meaningful fusion mechanism should be directionally consistent with perturbation-based importance tests.
We therefore evaluate faithfulness separately for task-specific and molecule-specific fusion.

\paragraph{Task-specific fusion.}
For tasks with task-specific weights, leave-one-prior-space-out evaluation provides a direct faithfulness test.
If a prior branch receives a larger learned weight, removing that branch should generally cause a larger performance drop.
Across Leeds/SolProp and BBBP/BACE/HIV, the rank agreement between learned weights and leave-one-space-out importance is consistently positive, with Spearman $\rho=0.80$ across the evaluated settings.
This suggests that task-specific fusion weights capture meaningful task-level prior-space importance.

\paragraph{Molecule-specific fusion.}
For molecule-specific fusion, leave-one-space-out is less direct because the weights vary across molecules.
We therefore perform a suppression test on ESOL, FreeSolv, and Lipophilicity.
For each test molecule, we compare the degradation caused by suppressing the top-weighted branch with the degradation caused by suppressing the bottom-weighted branch.
Suppressing the top-weighted branch more often causes larger degradation, with probabilities of 0.64, 0.76, and 0.64 on ESOL, FreeSolv, and Lipophilicity, respectively.

Table~\ref{tab:fusion_faithfulness} summarizes both faithfulness tests.

\begin{table}[h]
\centering
\small
\setlength{\tabcolsep}{6pt}
\caption{\textbf{Faithfulness tests for PRISM fusion weights.}
Task-specific weights are evaluated by rank agreement with leave-one-space-out importance.
Instance-wise weights are evaluated by top-weight versus bottom-weight suppression.}
\label{tab:fusion_faithfulness}
\begin{tabular}{lll}
\toprule
Fusion setting & Faithfulness test & Result \\
\midrule
Task-specific weights
& Weight--LOSO importance rank agreement
& Spearman $\rho=0.80$ \\
Instance-wise, ESOL
& Top-vs-bottom suppression probability
& 0.64 \\
Instance-wise, FreeSolv
& Top-vs-bottom suppression probability
& 0.76 \\
Instance-wise, Lipo
& Top-vs-bottom suppression probability
& 0.64 \\
\bottomrule
\end{tabular}
\end{table}

\paragraph{Interpretation.}
These results provide directional evidence that the learned weights are meaningful rather than arbitrary.
For task-specific fusion, higher-weighted spaces tend to be more important under removal.
For molecule-specific fusion, the branch receiving the largest weight for a molecule is more likely to be the branch whose suppression causes the larger error increase.
However, the probabilities are not close to one, which is expected because the four default prior spaces are partially interdependent.
Thus, fusion weights should be interpreted as soft preference reweighting over complementary chemical views, not as exclusive hard routing or causal attribution.

\subsection{Training Diagnostics: Geometry Preservation}
\label{app:interp:tsne}

We next visualize whether PRISM pretraining preserves the geometry induced by prior spaces.
Figure~\ref{fig:train_diag}a compares two-dimensional t-SNE visualizations of the original VSA prior spaces and the corresponding PRISM embedding spaces.
We show three VSA priors: SMR\_VSA, SlogP\_VSA, and PEOE\_VSA.

The purpose of this diagnostic is not to use t-SNE as a quantitative metric.
Instead, it provides a qualitative sanity check: molecules that form local neighborhoods in the prior-feature space should remain locally organized in the learned embedding space.
The highlighted anchors show representative local regions used for the anchor-wise distribution analysis in Appendix~\ref{app:interp:anchor_match}.

\begin{figure*}[h]
  \centering
  \includegraphics[width=0.99\textwidth]{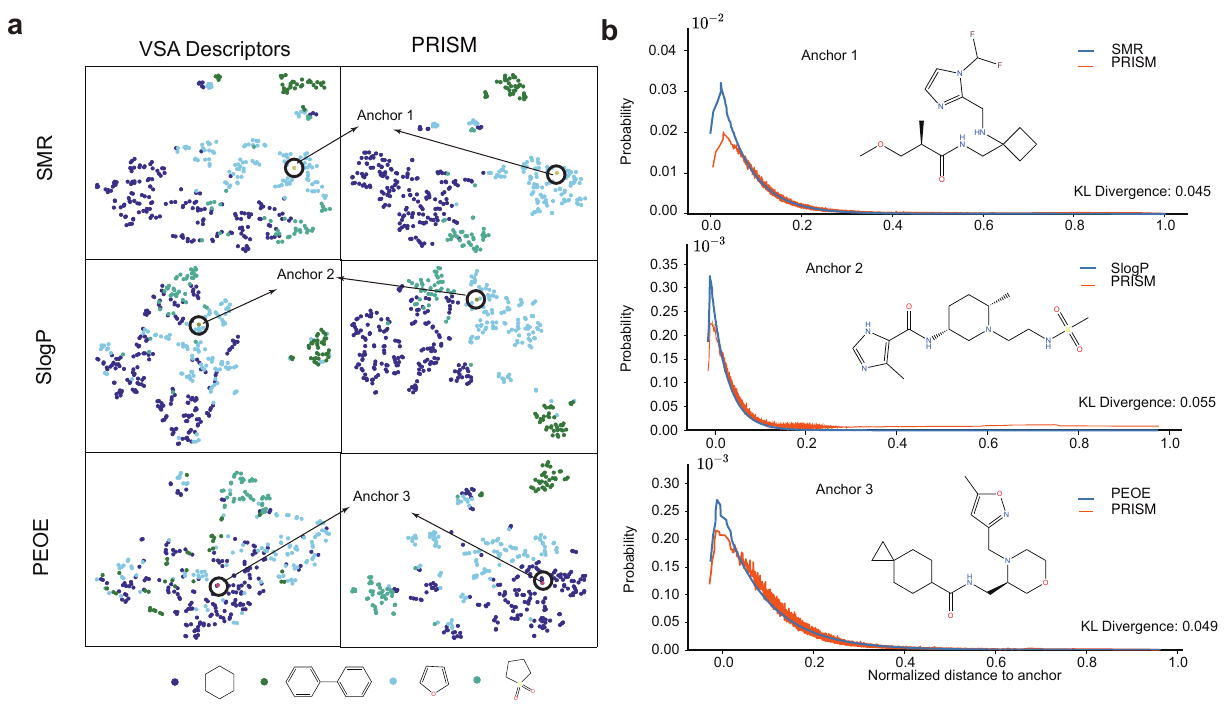}
  \caption{
  \textbf{Training diagnostics for PRISM prior-space alignment.}
  \textbf{(a)} Two-dimensional t-SNE visualizations comparing the prior-feature geometry and the PRISM embedding geometry for SMR\_VSA, SlogP\_VSA, and PEOE\_VSA.
  Representative anchors are highlighted to illustrate local-neighborhood preservation.
  \textbf{(b)} Anchor-wise neighborhood distribution matching.
  For each anchor, we compare the distance-to-anchor distribution induced by the prior space and by the learned embedding.
  The small KL divergences indicate that PRISM pretraining preserves prior-induced stochastic neighborhoods.
  The KL values in panel (b) are absolute anchor-wise KL divergences, whereas Table~\ref{tab:loss_ablation} reports normalized KL ratios for checkpoint-level comparison.
  }
  \label{fig:train_diag}
\end{figure*}

\paragraph{Interpretation.}
The learned embedding spaces qualitatively preserve the local organization of the corresponding prior spaces.
This supports the basic mechanism of PRISM: the encoder is not merely trained with an auxiliary scalar loss, but is shaped to reproduce prior-induced relational structure.

\subsection{Anchor-wise Neighborhood Distribution Matching}
\label{app:interp:anchor_match}

Figure~\ref{fig:train_diag}b provides a more direct diagnostic of the PRISM objective.
For each highlighted anchor, we compare the normalized distance-to-anchor distribution in the prior space with the corresponding distribution in the learned embedding space.
The three examples correspond to SMR\_VSA, SlogP\_VSA, and PEOE\_VSA.

The resulting absolute anchor-wise KL divergences are small:
\begin{equation}
    \mathrm{KL}_{\mathrm{SMR}} = 0.045,
    \qquad
    \mathrm{KL}_{\mathrm{SlogP}} = 0.055,
    \qquad
    \mathrm{KL}_{\mathrm{PEOE}} = 0.049.
\end{equation}
These are absolute KL values for representative anchor-wise neighborhood distributions, not the normalized KL ratios used in the checkpoint-level objective comparison in Table~\ref{tab:loss_ablation}.
They indicate that the learned embedding neighborhoods closely track the prior-induced neighborhoods for representative anchors.

\paragraph{Interpretation.}
This diagnostic directly visualizes the training target.
PRISM aligns distributions over neighbors, not individual descriptor values.
The close overlap between prior-space and embedding-space distance distributions therefore supports that the pretraining objective enforces distribution-level neighborhood matching.
This also helps explain why PRISM can transfer descriptor-induced relational information without feeding descriptors as downstream model inputs.

\subsection{Prior Recoverability from Frozen Embeddings}
\label{app:prior_recoverability}

Beyond neighborhood matching, we test prior recoverability by
freezing the pre-trained encoder and training a lightweight
two-layer MLP decoder to map embeddings back to prior features
(VSA coordinates or fingerprint bits).
We conduct this probe on ZINC250k under an 8:1:1 scaffold split.
Table~\ref{tab:prior_probe} reports the test errors.

\begin{table}[t]
\centering
\small
\setlength{\tabcolsep}{6pt}
\caption{\textbf{Prior-recoverability probe on ZINC250k
(8:1:1 scaffold split).}
We freeze the pre-trained encoder and train a two-layer MLP
decoder to reconstruct prior features from embeddings.
For VSA priors, we report MSE; for fingerprints, we report
$1-\mathrm{Tanimoto}$ (lower is better).
Metrics are computed on the test split.}
\label{tab:prior_probe}
\begin{tabular}{lc}
\toprule
\textbf{Prior space} & \textbf{Probe error} $\downarrow$ \\
\midrule
SMR\_VSA (MSE)   & 0.476 \\
SlogP\_VSA (MSE) & 0.767 \\
PEOE\_VSA (MSE)  & 1.450 \\
FP ($1-\mathrm{Tanimoto}$) & 0.377 \\
\bottomrule
\end{tabular}
\vspace{-2mm}
\end{table}

\paragraph{Discussion.}
Table~\ref{tab:prior_probe} provides quantitative evidence
{of prior recoverability in PRISM embeddings}.
The learned embeddings
{support reconstruction of the injected priors
using a lightweight decoder}.
{This probe complements the neighborhood-matching
diagnostics by evaluating prior-relevant information accessible
from individual molecular embeddings while keeping the encoder
fixed.}

\clearpage
\section{Limitations and Broader Impacts}
\label{app:limitations_impact}

\paragraph{Limitations.}
PRISM depends on the quality of the chosen prior spaces: when a prior induces label-irrelevant neighborhoods, relational supervision can degrade downstream performance (Gaussian-prior control, Appendix~\ref{app:extra_ablation:gaussian}).
This dependence is well-aligned with our use-inspired scope: chemistry provides mature prior families, and label energy turns prior selection from an empirical question into a diagnosable one.
Compatibility with 3D priors and 3D-aware backbones is verified in Appendix~\ref{app:extra_ablation:schnet}; we keep the main results centered on 2D inputs because 3D pretraining brings orthogonal design choices (notably conformer generation) outside our scope.
On extreme temperature extrapolation, Fastprop~\cite{attia_data-driven_2025} remains stronger because it explicitly adds a $d\log S/dT$ gradient term as an auxiliary loss; we deliberately omit such a thermodynamic constraint to keep the empirical results attributable to PRISM's prior-induced relational supervision alone, since thermodynamic regularization is itself a separate form of domain-knowledge injection that would confound the comparison.
The two are orthogonal and can be combined in future work, but disentangling them in the present study is what allows us to characterize what relational supervision contributes on its own. 
The mixed-solvent evidence is based on one curated binary-solvent benchmark; extending the evaluation to additional mixture datasets and cross-dataset mixture transfer remains an important next step.

\paragraph{Broader impacts.}
The intended positive impact is to improve molecular property prediction under distribution shifts, which may support more efficient solvent selection, molecular screening, and experimental prioritization.
Potential negative impacts are indirect: improved molecular prediction tools could be misused in harmful chemical design workflows if combined with inappropriate objectives or unsafe candidate-generation pipelines.
We release only representation-learning and prediction code, not generative design tools, and encourage users to follow standard safety, licensing, and domain-review practices when applying the method.

\paragraph{Assets and licenses.}
We use publicly available molecular datasets and open-source cheminformatics and machine-learning libraries.
Table~\ref{tab:assets_licenses} summarizes the main external assets, their roles in this work, and the license or access terms reported by their original sources.
The PRISM implementation, training and evaluation scripts, and usage instructions are publicly available at \url{https://github.com/cxy0206/PRISM}. The repository includes a code license and documentation for data preparation.

\begin{table}[h]
\centering
\small
\caption{\textbf{External assets used in this work.}}
\label{tab:assets_licenses}
\setlength{\tabcolsep}{5pt}
\renewcommand{\arraystretch}{1.12}
\begin{tabularx}{0.98\textwidth}{p{2.6cm}YY}
\toprule
Asset & Role & License / access terms \\
\midrule
RDKit
& Molecular parsing, descriptors, fingerprints, and featurization
& BSD-style open-source license \\

PyTorch
& Neural-network implementation and training
& BSD-style open-source license \\

MoleculeNet / DeepChem
& Single-molecule property benchmarks and scaffold-split evaluation
& Open-source benchmark resources; DeepChem/MoleculeNet code is released under the MIT license \\

BigSolDB
& Training source for pure-solvent solubility transfer
& Creative Commons Attribution 4.0 International (CC BY 4.0) \\

SolProp
& External pure-solvent solubility test set
& Creative Commons Attribution 4.0 International (CC BY 4.0) \\

Leeds
& External pure-solvent solubility test set
& Open Zenodo record; license marked as ``Other (Open)'' \\

Bao et al.\ binary-solvent dataset / CheMixHub drug-solubility task
& Mixed-solvent solubility benchmark
& CC BY 4.0 for the CheMixHub drug-solubility task; original source cited in the paper \\

ZINC250k subset
& Unlabeled molecular pretraining corpus
& Public molecular dataset for research use; source and preprocessing details will be provided in the released repository \\
\bottomrule
\end{tabularx}
\vspace{-2mm}
\end{table}


\clearpage
\section*{NeurIPS Paper Checklist}

\begin{enumerate}

\item {\bf Claims}
    \item[] Question: Do the main claims made in the abstract and introduction accurately reflect the paper's contributions and scope?
    \item[] Answer: \answerYes{}.
    \item[] Justification: The abstract and introduction state the paper's main contributions: PRISM as a label-free molecular pretraining framework that turns expert-defined chemical prior spaces into relational teachers; prior-specific encoder branches aligned through soft neighborhood distribution matching; solubility-centered OOD evaluation across scaffold, solute, solvent, and solvent-pair shifts; and label energy as a diagnostic for prior--task compatibility. The claims are supported by the main experiments in Section~\ref{sec:experiments}, the ablations in Section~\ref{sec:ablation}, and the additional analyses in Appendix~\ref{app:extra_results}, Appendix~\ref{app:extra_ablation}, and Appendix~\ref{app:interp}. Scope and limitations are further qualified in Appendix~\ref{app:limitations_impact}.
    \item[] Guidelines:
    \begin{itemize}
        \item The answer \answerNA{} means that the abstract and introduction do not include the claims made in the paper.
        \item The abstract and/or introduction should clearly state the claims made, including the contributions made in the paper and important assumptions and limitations. A \answerNo{} or \answerNA{} answer to this question will not be perceived well by the reviewers. 
        \item The claims made should match theoretical and experimental results, and reflect how much the results can be expected to generalize to other settings. 
        \item It is fine to include aspirational goals as motivation as long as it is clear that these goals are not attained by the paper. 
    \end{itemize}

\item {\bf Limitations}
    \item[] Question: Does the paper discuss the limitations of the work performed by the authors?
    \item[] Answer: \answerYes{}.
    \item[] Justification: The paper discusses limitations related to prior--task mismatch, uninformative-prior degradation, the low-cost prior-bank scope, the 2D-centered default implementation, missing 3D factors for tasks such as FreeSolv, MoleculeNet split sensitivity, three-seed evaluation, the single mixed-solvent benchmark, and extreme temperature extrapolation. These limitations are discussed in Section~\ref{sec:conclusion}, Appendix~\ref{app:datasets:metrics}, Appendix~\ref{app:extra_results:slice}, Appendix~\ref{app:extra_ablation:gaussian}, Appendix~\ref{app:extra_ablation:schnet}, and Appendix~\ref{app:limitations_impact}.
    \item[] Guidelines:
    \begin{itemize}
        \item The answer \answerNA{} means that the paper has no limitation while the answer \answerNo{} means that the paper has limitations, but those are not discussed in the paper. 
        \item The authors are encouraged to create a separate ``Limitations'' section in their paper.
        \item The paper should point out any strong assumptions and how robust the results are to violations of these assumptions (e.g., independence assumptions, noiseless settings, model well-specification, asymptotic approximations only holding locally). The authors should reflect on how these assumptions might be violated in practice and what the implications would be.
        \item The authors should reflect on the scope of the claims made, e.g., if the approach was only tested on a few datasets or with a few runs. In general, empirical results often depend on implicit assumptions, which should be articulated.
        \item The authors should reflect on the factors that influence the performance of the approach. For example, a facial recognition algorithm may perform poorly when image resolution is low or images are taken in low lighting. Or a speech-to-text system might not be used reliably to provide closed captions for online lectures because it fails to handle technical jargon.
        \item The authors should discuss the computational efficiency of the proposed algorithms and how they scale with dataset size.
        \item If applicable, the authors should discuss possible limitations of their approach to address problems of privacy and fairness.
        \item While the authors might fear that complete honesty about limitations might be used by reviewers as grounds for rejection, a worse outcome might be that reviewers discover limitations that aren't acknowledged in the paper. The authors should use their best judgment and recognize that individual actions in favor of transparency play an important role in developing norms that preserve the integrity of the community. Reviewers will be specifically instructed to not penalize honesty concerning limitations.
    \end{itemize}

\item {\bf Theory assumptions and proofs}
    \item[] Question: For each theoretical result, does the paper provide the full set of assumptions and a complete (and correct) proof?
    \item[] Answer: \answerYes{}.
    \item[] Justification: The theoretical analysis is provided in Appendix~\ref{app:theory}. The appendix formally defines the alignment objectives, relates PRISM-KL to InfoNCE and rank/ListMLE-style objectives, proves the label-energy $k$NN risk bound, and gives a finite-set no-worse approximation result under an explicit recoverability assumption. The assumptions, statements, and proofs are stated in the appendix and cross-referenced from the main text.
    \item[] Guidelines:
    \begin{itemize}
        \item The answer \answerNA{} means that the paper does not include theoretical results. 
        \item All the theorems, formulas, and proofs in the paper should be numbered and cross-referenced.
        \item All assumptions should be clearly stated or referenced in the statement of any theorems.
        \item The proofs can either appear in the main paper or the supplemental material, but if they appear in the supplemental material, the authors are encouraged to provide a short proof sketch to provide intuition. 
        \item Inversely, any informal proof provided in the core of the paper should be complemented by formal proofs provided in appendix or supplemental material.
        \item Theorems and Lemmas that the proof relies upon should be properly referenced. 
    \end{itemize}

\item {\bf Experimental result reproducibility}
    \item[] Question: Does the paper fully disclose all the information needed to reproduce the main experimental results of the paper to the extent that it affects the main claims and/or conclusions of the paper (regardless of whether the code and data are provided or not)?
    \item[] Answer: \answerYes{}.
    \item[] Justification: The paper specifies datasets, preprocessing assumptions, split protocols, leakage controls, unlabeled pretraining protocol, evaluation metrics, model architectures, prior spaces, pretraining objective, downstream heads, hyperparameters, checkpoint selection, repeated-run protocol, and compute resources in Appendix~\ref{app:datasets} and Appendix~\ref{app:impl}. Implementation code and instructions for pretraining, fine-tuning, and evaluation are publicly available at \url{https://github.com/cxy0206/PRISM}.
    \item[] Guidelines:
    \begin{itemize}
        \item The answer \answerNA{} means that the paper does not include experiments.
        \item If the paper includes experiments, a \answerNo{} answer to this question will not be perceived well by the reviewers: Making the paper reproducible is important, regardless of whether the code and data are provided or not.
        \item If the contribution is a dataset and\slash or model, the authors should describe the steps taken to make their results reproducible or verifiable. 
        \item Depending on the contribution, reproducibility can be accomplished in various ways. For example, if the contribution is a novel architecture, describing the architecture fully might suffice, or if the contribution is a specific model and empirical evaluation, it may be necessary to either make it possible for others to replicate the model with the same dataset, or provide access to the model. In general. releasing code and data is often one good way to accomplish this, but reproducibility can also be provided via detailed instructions for how to replicate the results, access to a hosted model (e.g., in the case of a large language model), releasing of a model checkpoint, or other means that are appropriate to the research performed.
        \item While NeurIPS does not require releasing code, the conference does require all submissions to provide some reasonable avenue for reproducibility, which may depend on the nature of the contribution. For example
        \begin{enumerate}
            \item If the contribution is primarily a new algorithm, the paper should make it clear how to reproduce that algorithm.
            \item If the contribution is primarily a new model architecture, the paper should describe the architecture clearly and fully.
            \item If the contribution is a new model (e.g., a large language model), then there should either be a way to access this model for reproducing the results or a way to reproduce the model (e.g., with an open-source dataset or instructions for how to construct the dataset).
            \item We recognize that reproducibility may be tricky in some cases, in which case authors are welcome to describe the particular way they provide for reproducibility. In the case of closed-source models, it may be that access to the model is limited in some way (e.g., to registered users), but it should be possible for other researchers to have some path to reproducing or verifying the results.
        \end{enumerate}
    \end{itemize}

\item {\bf Open access to data and code}
    \item[] Question: Does the paper provide open access to the data and code, with sufficient instructions to faithfully reproduce the main experimental results, as described in supplemental material?
    \item[] Answer: \answerYes{}.
    \item[] Justification: The PRISM implementation, training and evaluation scripts, and usage instructions are publicly available at \url{https://github.com/cxy0206/PRISM}. Data access, preprocessing, split construction, leakage controls, and the unlabeled pretraining protocol are described in Appendix~C, with dataset citations and license or access information summarized in Appendix~J.
    \item[] Guidelines:
    \begin{itemize}
        \item The answer \answerNA{} means that paper does not include experiments requiring code.
        \item Please see the NeurIPS code and data submission guidelines (\url{https://neurips.cc/public/guides/CodeSubmissionPolicy}) for more details.
        \item While we encourage the release of code and data, we understand that this might not be possible, so \answerNo{} is an acceptable answer. Papers cannot be rejected simply for not including code, unless this is central to the contribution (e.g., for a new open-source benchmark).
        \item The instructions should contain the exact command and environment needed to run to reproduce the results. See the NeurIPS code and data submission guidelines (\url{https://neurips.cc/public/guides/CodeSubmissionPolicy}) for more details.
        \item The authors should provide instructions on data access and preparation, including how to access the raw data, preprocessed data, intermediate data, and generated data, etc.
        \item The authors should provide scripts to reproduce all experimental results for the new proposed method and baselines. If only a subset of experiments are reproducible, they should state which ones are omitted from the script and why.
        \item At submission time, to preserve anonymity, the authors should release anonymized versions (if applicable).
        \item Providing as much information as possible in supplemental material (appended to the paper) is recommended, but including URLs to data and code is permitted.
    \end{itemize}

\item {\bf Experimental setting/details}
    \item[] Question: Does the paper specify all the training and test details (e.g., data splits, hyperparameters, how they were chosen, type of optimizer) necessary to understand the results?
    \item[] Answer: \answerYes{}.
    \item[] Justification: Section~\ref{sec:experiments} summarizes the experimental settings, and Appendix~\ref{app:datasets} and Appendix~\ref{app:impl} provide detailed dataset descriptions, split construction, leakage checks, metrics, molecular featurization, model heads, optimizer settings, hyperparameters, checkpoint selection, and compute resources. Additional ablation-specific protocols are described in Appendix~\ref{app:extra_ablation}.
    \item[] Guidelines:
    \begin{itemize}
        \item The answer \answerNA{} means that the paper does not include experiments.
        \item The experimental setting should be presented in the core of the paper to a level of detail that is necessary to appreciate the results and make sense of them.
        \item The full details can be provided either with the code, in appendix, or as supplemental material.
    \end{itemize}

\item {\bf Experiment statistical significance}
    \item[] Question: Does the paper report error bars suitably and correctly defined or other appropriate information about the statistical significance of the experiments?
    \item[] Answer: \answerYes{}.
    \item[] Justification: Unless otherwise specified, the main experimental results report mean $\pm$ standard deviation over three seeds. Appendix~\ref{app:datasets:metrics} states that split-dependent evaluations regenerate the corresponding split for each seed, so the reported standard deviations capture both initialization and split resampling when applicable. Table~5 and the ablation in Appendix~H.7 use five seeds.
    \item[] Guidelines:
    \begin{itemize}
        \item The answer \answerNA{} means that the paper does not include experiments.
        \item The authors should answer \answerYes{} if the results are accompanied by error bars, confidence intervals, or statistical significance tests, at least for the experiments that support the main claims of the paper.
        \item The factors of variability that the error bars are capturing should be clearly stated (for example, train/test split, initialization, random drawing of some parameter, or overall run with given experimental conditions).
        \item The method for calculating the error bars should be explained (closed form formula, call to a library function, bootstrap, etc.)
        \item The assumptions made should be given (e.g., Normally distributed errors).
        \item It should be clear whether the error bar is the standard deviation or the standard error of the mean.
        \item It is OK to report 1-sigma error bars, but one should state it. The authors should preferably report a 2-sigma error bar than state that they have a 96\% CI, if the hypothesis of Normality of errors is not verified.
        \item For asymmetric distributions, the authors should be careful not to show in tables or figures symmetric error bars that would yield results that are out of range (e.g., negative error rates).
        \item If error bars are reported in tables or plots, the authors should explain in the text how they were calculated and reference the corresponding figures or tables in the text.
    \end{itemize}

\item {\bf Experiments compute resources}
    \item[] Question: For each experiment, does the paper provide sufficient information on the computer resources (type of compute workers, memory, time of execution) needed to reproduce the experiments?
    \item[] Answer: \answerYes{}.
    \item[] Justification: Appendix~\ref{app:impl:compute} reports the hardware, wall-clock time, batch size, and memory usage for PRISM pretraining and downstream training. The paper notes that prior branches can be pretrained independently, gives the $O(KB^2)$ minibatch relational-construction cost, and states that the reported pipeline is feasible on consumer-grade GPUs.
    \item[] Guidelines:
    \begin{itemize}
        \item The answer \answerNA{} means that the paper does not include experiments.
        \item The paper should indicate the type of compute workers CPU or GPU, internal cluster, or cloud provider, including relevant memory and storage.
        \item The paper should provide the amount of compute required for each of the individual experimental runs as well as estimate the total compute. 
        \item The paper should disclose whether the full research project required more compute than the experiments reported in the paper (e.g., preliminary or failed experiments that didn't make it into the paper). 
    \end{itemize}
    
\item {\bf Code of ethics}
    \item[] Question: Does the research conducted in the paper conform, in every respect, with the NeurIPS Code of Ethics \url{https://neurips.cc/public/EthicsGuidelines}?
    \item[] Answer: \answerYes{}.
    \item[] Justification: The work uses public molecular datasets and standard molecular property prediction benchmarks, does not involve human subjects, and does not release high-risk generative models. Appendix~\ref{app:limitations_impact} discusses potential misuse and responsible-use considerations. The supplementary repository will be anonymized at submission.
    \item[] Guidelines:
    \begin{itemize}
        \item The answer \answerNA{} means that the authors have not reviewed the NeurIPS Code of Ethics.
        \item If the authors answer \answerNo, they should explain the special circumstances that require a deviation from the Code of Ethics.
        \item The authors should make sure to preserve anonymity (e.g., if there is a special consideration due to laws or regulations in their jurisdiction).
    \end{itemize}

\item {\bf Broader impacts}
    \item[] Question: Does the paper discuss both potential positive societal impacts and negative societal impacts of the work performed?
    \item[] Answer: \answerYes{}.
    \item[] Justification: Appendix~\ref{app:limitations_impact} discusses positive impacts for molecular property prediction under distribution shifts, including solvent selection, molecular screening, and experimental prioritization. It also discusses indirect dual-use risks if molecular prediction tools are embedded into unsafe chemical design workflows, and states that the released assets are representation-learning and prediction tools rather than generative design tools.
    \item[] Guidelines:
    \begin{itemize}
        \item The answer \answerNA{} means that there is no societal impact of the work performed.
        \item If the authors answer \answerNA{} or \answerNo, they should explain why their work has no societal impact or why the paper does not address societal impact.
        \item Examples of negative societal impacts include potential malicious or unintended uses (e.g., disinformation, generating fake profiles, surveillance), fairness considerations (e.g., deployment of technologies that could make decisions that unfairly impact specific groups), privacy considerations, and security considerations.
        \item The conference expects that many papers will be foundational research and not tied to particular applications, let alone deployments. However, if there is a direct path to any negative applications, the authors should point it out. For example, it is legitimate to point out that an improvement in the quality of generative models could be used to generate Deepfakes for disinformation. On the other hand, it is not needed to point out that a generic algorithm for optimizing neural networks could enable people to train models that generate Deepfakes faster.
        \item The authors should consider possible harms that could arise when the technology is being used as intended and functioning correctly, harms that could arise when the technology is being used as intended but gives incorrect results, and harms following from (intentional or unintentional) misuse of the technology.
        \item If there are negative societal impacts, the authors could also discuss possible mitigation strategies (e.g., gated release of models, providing defenses in addition to attacks, mechanisms for monitoring misuse, mechanisms to monitor how a system learns from feedback over time, improving the efficiency and accessibility of ML).
    \end{itemize}
    
\item {\bf Safeguards}
    \item[] Question: Does the paper describe safeguards that have been put in place for responsible release of data or models that have a high risk for misuse (e.g., pre-trained language models, image generators, or scraped datasets)?
    \item[] Answer: \answerNA{}.
    \item[] Justification: The paper does not release high-risk generative models, scraped internet datasets, or models intended to synthesize harmful content. The planned release consists of property-prediction code, processed split files, configuration files, and reproduction instructions. The paper also encourages standard safety, licensing, and domain-review practices when applying the method.
    \item[] Guidelines:
    \begin{itemize}
        \item The answer \answerNA{} means that the paper poses no such risks.
        \item Released models that have a high risk for misuse or dual-use should be released with necessary safeguards to allow for controlled use of the model, for example by requiring that users adhere to usage guidelines or restrictions to access the model or implementing safety filters. 
        \item Datasets that have been scraped from the Internet could pose safety risks. The authors should describe how they avoided releasing unsafe images.
        \item We recognize that providing effective safeguards is challenging, and many papers do not require this, but we encourage authors to take this into account and make a best faith effort.
    \end{itemize}

\item {\bf Licenses for existing assets}
    \item[] Question: Are the creators or original owners of assets (e.g., code, data, models), used in the paper, properly credited and are the license and terms of use explicitly mentioned and properly respected?
    \item[] Answer: \answerYes{}.
    \item[] Justification: The paper cites the datasets, benchmarks, software libraries, and molecular featurization tools used in the experiments. Appendix~\ref{app:limitations_impact} summarizes the main external assets, including RDKit, PyTorch, MoleculeNet/DeepChem, BigSolDB, SolProp, Leeds, the Bao et al. mixed-solvent dataset/CheMixHub task, and ZINC250k, together with their roles and license or access terms. The anonymized repository will include license files and attribution notes for reused assets.
    \item[] Guidelines:
    \begin{itemize}
        \item The answer \answerNA{} means that the paper does not use existing assets.
        \item The authors should cite the original paper that produced the code package or dataset.
        \item The authors should state which version of the asset is used and, if possible, include a URL.
        \item The name of the license (e.g., CC-BY 4.0) should be included for each asset.
        \item For scraped data from a particular source (e.g., website), the copyright and terms of service of that source should be provided.
        \item If assets are released, the license, copyright information, and terms of use in the package should be provided. For popular datasets, \url{paperswithcode.com/datasets} has curated licenses for some datasets. Their licensing guide can help determine the license of a dataset.
        \item For existing datasets that are re-packaged, both the original license and the license of the derived asset (if it has changed) should be provided.
        \item If this information is not available online, the authors are encouraged to reach out to the asset's creators.
    \end{itemize}

\item {\bf New assets}
    \item[] Question: Are new assets introduced in the paper well documented and is the documentation provided alongside the assets?
    \item[] Answer: \answerYes{}.
    \item[] Justification: The released research assets comprise the PRISM implementation and training and evaluation scripts. The public repository at \url{https://github.com/cxy0206/PRISM} provides usage instructions, environment requirements, and a code license.
    \item[] Guidelines:
    \begin{itemize}
        \item The answer \answerNA{} means that the paper does not release new assets.
        \item Researchers should communicate the details of the dataset\slash code\slash model as part of their submissions via structured templates. This includes details about training, license, limitations, etc. 
        \item The paper should discuss whether and how consent was obtained from people whose asset is used.
        \item At submission time, remember to anonymize your assets (if applicable). You can either create an anonymized URL or include an anonymized zip file.
    \end{itemize}

\item {\bf Crowdsourcing and research with human subjects}
    \item[] Question: For crowdsourcing experiments and research with human subjects, does the paper include the full text of instructions given to participants and screenshots, if applicable, as well as details about compensation (if any)? 
    \item[] Answer: \answerNA{}.
    \item[] Justification: The paper does not involve crowdsourcing, user studies, surveys, or research with human subjects.
    \item[] Guidelines:
    \begin{itemize}
        \item The answer \answerNA{} means that the paper does not involve crowdsourcing nor research with human subjects.
        \item Including this information in the supplemental material is fine, but if the main contribution of the paper involves human subjects, then as much detail as possible should be included in the main paper. 
        \item According to the NeurIPS Code of Ethics, workers involved in data collection, curation, or other labor should be paid at least the minimum wage in the country of the data collector. 
    \end{itemize}

\item {\bf Institutional review board (IRB) approvals or equivalent for research with human subjects}
    \item[] Question: Does the paper describe potential risks incurred by study participants, whether such risks were disclosed to the subjects, and whether Institutional Review Board (IRB) approvals (or an equivalent approval/review based on the requirements of your country or institution) were obtained?
    \item[] Answer: \answerNA{}.
    \item[] Justification: The paper does not involve crowdsourcing, user studies, surveys, or research with human subjects, so IRB approval is not applicable.
    \item[] Guidelines:
    \begin{itemize}
        \item The answer \answerNA{} means that the paper does not involve crowdsourcing nor research with human subjects.
        \item Depending on the country in which research is conducted, IRB approval (or equivalent) may be required for any human subjects research. If you obtained IRB approval, you should clearly state this in the paper. 
        \item We recognize that the procedures for this may vary significantly between institutions and locations, and we expect authors to adhere to the NeurIPS Code of Ethics and the guidelines for their institution. 
        \item For initial submissions, do not include any information that would break anonymity (if applicable), such as the institution conducting the review.
    \end{itemize}

\item {\bf Declaration of LLM usage}
    \item[] Question: Does the paper describe the usage of LLMs if it is an important, original, or non-standard component of the core methods in this research? Note that if the LLM is used only for writing, editing, or formatting purposes and does \emph{not} impact the core methodology, scientific rigor, or originality of the research, declaration is not required.
    \item[] Answer: \answerNA{}.
    \item[] Justification: The core method, experiments, and scientific claims do not use LLMs as an important, original, or non-standard methodological component. Any use of writing, editing, or formatting tools, if applicable, is limited to drafting or polishing and does not affect the method, experiments, results, or originality of the research.
    \item[] Guidelines:
    \begin{itemize}
        \item The answer \answerNA{} means that the core method development in this research does not involve LLMs as any important, original, or non-standard components.
        \item Please refer to our LLM policy in the NeurIPS handbook for what should or should not be described.
    \end{itemize}

\end{enumerate}

\end{document}